\documentclass{article}

\usepackage[preprint]{neurips_2026}

\usepackage[utf8]{inputenc} 
\usepackage[T1]{fontenc}    
\usepackage{hyperref}       
\usepackage{url}            
\usepackage{booktabs}       
\usepackage{amsfonts, amsmath, amssymb, amsthm}       
\usepackage{nicefrac}       
\usepackage{microtype}      
\usepackage{xcolor}         
\usepackage{caption}
\usepackage{subcaption}
\usepackage{graphicx}

\usepackage{algorithm}
\usepackage{algorithmic}

\newtheorem{theorem}{Theorem}[section]
\newtheorem{proposition}[theorem]{Proposition}
\newtheorem{corollary}[theorem]{Corollary}
\newtheorem{lemma}[theorem]{Lemma}

\newtheorem{remark}[theorem]{Remark}

\newcommand{\mist}{\textsc{Mist}}
\newcommand{\vfdt}{\textsc{Vfdt}}
\newcommand{\efdt}{\textsc{Efdt}}
\newcommand{\slda}{\textsc{Slda}}
\newcommand{\sqda}{\textsc{Sqda}}
\newcommand{\arf}{\textsc{Arf}}
\newcommand{\Prob}{\mathbb{P}}

\title{
    \mist{}: Reliable Streaming Decision Trees for Online Class-Incremental Learning via McDiarmid Bound
}

\author{%
  Phu-Hoa Pham$^{\star}$\thanks{$^\star$\,Equal contribution.\quad
    $^\dagger$\,Corresponding author:
    \texttt{long.tran-thanh@warwick.ac.uk}}\\
  Faculty of Information and Technology \\
  University of Science\\
  Vietnam National University \\
  Ho Chi Minh City, Vietnam \\
  \texttt{23122030@student.hcmus.edu.vn} \\
  \And
  Chi-Nguyen Tran$^{\star}$ \\
  Faculty of Information and Technology \\
  University of Science\\
  Vietnam National University \\
  Ho Chi Minh City, Vietnam \\
  \texttt{23122044@student.hcmus.edu.vn} \\
  \And
  Phu-Quy Nguyen-Lam$^{}$ \\
  Faculty of Information and Technology \\
  University of Science\\
  Vietnam National University \\
  Ho Chi Minh City, Vietnam \\
  \texttt{23122048@student.hcmus.edu.vn}
  \And
  Dao Sy Duy Minh$^{}$ \\
  Faculty of Information and Technology \\
  University of Science\\
  Vietnam National University \\
  Ho Chi Minh City, Vietnam \\
  \texttt{23122041@student.hcmus.edu.vn}
  \And
  Huynh Trung Kiet$^{}$ \\
  Faculty of Information and Technology \\
  University of Science\\
  Vietnam National University \\
  Ho Chi Minh City, Vietnam \\
  \texttt{23122039@student.hcmus.edu.vn} \\
  \And
  Long Tran-Thanh$^{\dagger}$ \\
  Department of Computer Science \\
  University of Warwick \\
  Coventry, United Kingdom \\
  \texttt{long.tran-thanh@warwick.ac.uk}
}

\begin{document}
\maketitle
 
\begin{abstract}
Streaming decision trees are natural candidates for open-world continual learning, as they perform local updates, enjoy bounded memory, and static decision boundaries. Despite these, they still fail in online class-incremental learning due to two coupled miscalibrations: (i) their split criterion grows unreliable as the class count $K$ expands, and (ii) the absence of knowledge transfer at split time. Both failures share a common root: the range of Information Gain intrinsically scales with $\log_2 K$. Consequently, any Hoeffding-style confidence radius derived from it must inevitably grow with the class count, making a $K$-independent split criterion structurally impossible, taking away the potential benefits of applying streaming decision trees to continual learning. To fix this issue, we present \mist{} (McDiarmid Incremental Streaming Tree), which resolves both failures through three integrated components: (i) a tight, $K$-independent McDiarmid confidence radius for Gini splitting that acts as a structural regulariser; (ii) a Bayesian inheritance protocol that projects parent statistics to child nodes via truncated-Gaussian moments, with variance reduction guarantees strongest precisely when splitting is most conservative; and (iii) per-leaf KLL quantile sketches that support both continuous threshold evaluation and geometry-adaptive leaf prediction from a single data structure. On standard and stress-test tabular streams, \mist{} is competitive with global parametric methods on near-Gaussian benchmarks and uniquely robust on non-Gaussian geometry where SOTA benchmarks collapse.
\end{abstract}
\section{Introduction}
Continual learning systems that operate in an open world must absorb new
classes as they arrive, without access to historical data and without
degrading what was previously learned. In tabular settings, where large
pretrained representations are unavailable, this challenge must be resolved
directly from the raw feature stream under strict memory constraints.

Streaming decision trees are a natural candidate for this regime. Their updates 
are local, as only the leaf reached by an arriving sample is modified. As such, their memory 
can remain bounded by design, and they share no global representation across classes. 
For example, in the foundational VFDT framework, which laid down the basis for streaming decision trees, a decision 
boundary, once committed via the Hoeffding bound, is not revised. Instead, 
splits are permanent, making catastrophic forgetting structurally 
avoidable in principle. This is an advantage against global parametric methods, as none of them can 
provide it by construction. This structural potential, however, 
has never been fully realised. In practice, classical streaming trees collapse in the online 
class-incremental learning (Online CIL) setting. The reason is more fundamental than it may 
first appear, and tracing it to its root reveals two coupled failures that must be 
resolved simultaneously.

\noindent
\textbf{The root cause: Premature split.}
Existing streaming trees typically 
trigger splits when the empirical gain gap
exceeds a Hoeffding confidence radius derived from Information Gain~\cite{DomingosHulten2000,ManapragadaWebbSalehi2018,korycki2021,BifetGavalda2009HAT,chaouki2024}. This
radius grows as $\log_2 K / \sqrt{n}$, which comes from the structural implication of
Information Gain's range scaling with $\log_2 K$, and becomes unreliable
as $K$ grows without bound, either delaying necessary splits or triggering
prematurely, making streaming decision trees not suitable for online CIL settings. 

\noindent
\textbf{A potential solution and its cold-start split issue.}
A natural way to fix this premature split issue is to switch to Gini
impurity, whose range is bounded in $[0,1)$ regardless of $K$, making a
class-count-independent radius achievable in principle. However, it is not far from trivial how to identify a \emph{optimal} decision radius
for the Gini split gain.
To answer this question, in this paper we derive a tight bounded-difference constant for Gini split gain under
single-sample replacement and prove it is unimprovable for arbitrary $K$,
yielding a tight class-count-independent confidence radius. 

This calibration is conservative
by design, acting as a structural regulariser. It does, however, expose
a second point of failure: When a leaf splits, its children inherit nothing from
their parent and must rebuild predictive distributions from scratch,
producing unreliable predictions precisely when the tree adapts most
aggressively; further splits from sparse children compound the
instability.

\noindent
\textbf{The coupled failures: premature and cold-start splits.}
The two failures reinforce each other, as premature splits produce sparse
children, whose cold-start inflates impurity estimates, triggering
further premature splits. To fix this, we break this cycle by resolving both
simultaneously: Conservative splitting ensures children are born from
data-rich parents, maximising the statistical value of what is inherited.
To do so, we use Bayesian inheritance and translate those statistics into tight posterior
distributions from the first sample onward. Crucially, variance reduction
from inheritance scales with the parent sample count at split time, so
the more conservative the criterion, the stronger the inheritance
guarantee.

\noindent
\textbf{Sketching for both splitting and prediction.}
The two components above handle \emph{when} to split and \emph{how} to
initialise after splitting. 
However, they need to rely on past raw data, which is typically not feasible in the online CIL setting.
To fix this, we propose a new, sketching based approach to handle continuous threshold evaluation
without retaining raw data. In particular, per-leaf, label-aware quantile sketches generate
candidate thresholds from empirical class-conditional distributions, and the
same structure powers geometry-adaptive leaf prediction---parametric Gaussian
on near-Gaussian data, distribution-free otherwise. Both configurations are
formalised in Section~\ref{sec:framework}.

\noindent
\textbf{Contributions.} In this paper, we present \mist{} (McDiarmid Incremental Streaming Tree) to implement the three abovementioned novel ideas as follows:
\begin{itemize}
  \item \textbf{Tight McDiarmid calibration for Gini splitting.}
  We derive the tight bounded-difference constant for multi-class Gini split
  gain and prove that it is sharp for arbitrary $K$, yielding a
  class-count-independent confidence radius (Corollary~\ref{cor:radius}).
  This calibration eliminates the $K$-scaling instability of Hoeffding-based
  criteria, a qualitative improvement, not merely a tighter constant, and
  acts as a structural regulariser protecting the inheritance mechanism.
  \item \textbf{Bayesian post-split stabilisation.}
  We introduce parent-to-child projection via exact truncated-Gaussian
  conditional moments for the split feature and Naive Bayes mass rescaling for
  remaining features. Variance reduction guarantees cover both
  class-probability estimates 
  and feature-parameter estimates
  (Proposition~\ref{prop:variance}) 
  with both
  posteriors tightening in proportion to the parent sample count at split time.
  \item \textbf{Unified label-aware sketching for structure and prediction.}
  Per-leaf KLL quantile sketches support continuous split threshold evaluation
  without raw-data retention; the same sketches power an optional
  non-parametric leaf predictor under a Naive Bayes factorisation at no
  additional storage cost. Theorem~\ref{thm:composed} provides a composed
  error bound jointly controlling statistical variance and sketch rank error.
  \item \textbf{Regime-aware empirical characterisation.}
  We identify and explain conditions under which the parametric and
  non-parametric leaf predictors are respectively preferable, supported by
  prequential accuracy curves, ablation studies, and stress-test experiments
  on synthetic streams with non-Gaussian class geometry.
\end{itemize}
\section{Related Work}
In this section due to the space limits we only cover the most relevant work to our paper, and defer the more detailed summary of the online continual learning literature to Appendix~\ref{app:related_work}. 

\noindent
\textbf{Streaming decision trees in class-incremental settings.}
\vfdt{}~\cite{DomingosHulten2000} established Hoeffding-style confidence tests for incremental splitting, with EFDT~\cite{ManapragadaWebbSalehi2018} adding continuous re-evaluation and HAT~\cite{BifetGavalda2009HAT} incorporating adaptive drift detection via ADWIN~\cite{BifetGavalda2007ADWIN}. These methods treat distributional shifts as concept drift to be discarded, directly conflicting with Online CIL where class arrivals are permanent structural events; disabling their drift detectors recovers substantial accuracy on our benchmarks, confirming the failure is protocol misalignment rather than algorithmic. \vfdt{} zero-initialises child leaves; \textsc{VFDTc}~\citep{Gama2003} introduced Naive Bayes leaves but retained zero-initialisation at split time. Korycki and Krawczyk~\cite{korycki2021} motivate knowledge transfer at structural boundaries under replay; our work addresses the analogous problem under the strict exemplar-free constraint with formal variance reduction guarantees. Thompson-sampling trees~\cite{chaouki2024} are restricted to categorical attributes and a closed-world class set, making direct comparison inappropriate.

\noindent
\textbf{Concentration bounds for Gini-based splitting.}
Rutkowski et al.~\cite{rutkowski2013} established that McDiarmid's inequality applies to Gini split gain with constant $c_i = 8/n$---the first $K$-independent confidence radius. De Rosa and Cesa-Bianchi~\cite{derosa2015splitting} tightened this to $4/n$ for $K=2$ only. Our work closes the remaining gap: $4/n$ is tight for arbitrary $K$, so the radius does not degrade as new classes arrive. C-Tree~\cite{derosa2015splitting} derives refined large-deviation intervals using problem-dependent parameters, but its analysis is restricted to $K=2$; an empirical comparison is omitted as extending its implementation to $K>2$ is non-trivial.

\noindent
\textbf{Global methods for exemplar-free tabular CIL.}
\slda{}~\cite{hayes2020lifelong} uses a shared-covariance Gaussian discriminant, \sqda{}~\cite{hayes2020lifelong} relaxes
this to per-class Gaussians, NCM~\cite{mensink2013} tracks class centroids, and
RLS~\cite{zhuang2022acil} maintains a shared linear classifier via
recursive least squares. These are competitive on near-Gaussian
benchmarks but share a structural vulnerability: all classes are coupled
through one global representation, so updates for new classes gradually
displace boundaries learned for earlier ones. Streaming decision trees
avoid this coupling by construction, and on non-Gaussian geometry where
global parametric methods collapse, a well-calibrated tree with
geometry-adaptive prediction remains competitive.

\noindent
\textbf{Non-parametric and sketch-based prediction in streaming settings.}
KDE Naive Bayes~\cite{john1995,Moghaddam2002} improves accuracy under non-Gaussian
class-conditionals, but raw-sample KDE is infeasible in an exemplar-free
stream. Prior work~\cite{benhaim2010} uses additive-error quantile sketches for split threshold
evaluation without connecting the sketch to the leaf
predictor. Our work unifies both roles: the same per-leaf KLL sketches~\cite{Karnin2016} that
support continuous split evaluation also power a distribution-free leaf
predictor, yielding a streaming analog of KDE Naive Bayes with formal error
guarantees. KLL’s efficient update mechanism maintains a strict memory bound, and its high-probability rank guarantees feed directly into Theorem~\ref{thm:composed}, with $O(k)$ memory and
$O(\log k)$ amortised update cost.
\section{Preliminaries and Problem Setting}
\label{sec:prelim}
\subsection{Strict Online CIL}
We consider learning from an infinite stream $\mathcal{S} = \{(x_t, y_t)\}_{t=1}^\infty$
where the label space $\mathcal{Y}_t$ expands monotonically and
$K = \lim_{t\to\infty}|\mathcal{Y}_t|$ is unbounded and unknown a priori.
An algorithm operates in the \emph{strict Online CIL} regime if it satisfies:
(i)~single-pass processing; (ii)~exemplar-free memory; (iii)~anytime classification.
New-class arrivals are permanent structural events, not distributional shifts
to be discarded. Each split decision is evaluated locally on the $n$ samples
accumulated at a leaf; class-conditional distributions $p(x \mid y=c)$ are
stationary by protocol, and samples at a leaf are treated as an independent
batch~\cite{DomingosHulten2000}, satisfying McDiarmid's independence requirement.
\enlargethispage{2\baselineskip}
\subsection{Gini Split Gain and Bounded Differences}
Let $S$ be $n$ samples at a leaf with empirical class proportions $p_c = n_c/n$.
The Gini impurity and split gain for a binary partition $(S_L, S_R)$ are:
\begin{equation}
  G(S) = 1 - \sum_{c=1}^K p_c^2, \qquad
  F(S) = G(S) - \tfrac{n_L}{n}G(S_L) - \tfrac{n_R}{n}G(S_R).
  \label{eq:gain}
\end{equation}
A function $f : (\mathcal{X} \times \mathcal{Y})^n \to \mathbb{R}$ satisfies the \emph{bounded difference
property} with constants $c_1,\ldots,c_n$ if replacing any single input changes
$f$ by at most $c_i$. McDiarmid's inequality then gives:
\begin{equation}
\Prob\!\left[\,|f(S) - \mathbb{E}[f]| \geq \varepsilon\right]
\leq 2\exp\!\left(-\frac{2\varepsilon^2}{\sum_i c_i^2}\right).
\label{eq:mcdiarmid}
\end{equation}
The central theoretical question of this paper is: what is the tight value
of $c_i$ for the Gini split gain $F$? The answer, derived in
Section~\ref{sec:theory}, determines the $K$-independent confidence radius
used by \mist{}.
\section{The \mist{} Framework}
\label{sec:framework}
Our stream decision tree solution, \mist{}, is built around three tightly integrated components: (i) a split criterion determining when to partition a leaf; (ii) an inheritance mechanism initialising children with statistically meaningful priors; and (iii) a sketching component providing both continuous threshold evaluation and optional geometry-adaptive prediction from a single data structure.
\subsection{Sensitivity-Calibrated Split Criterion}
\label{sec:split}
\mist{} triggers a split when the empirical Gini gain gap $\widehat{\Delta F}$
between the best and second-best candidate feature exceeds a confidence
radius. The radius is derived from the tight, class-count-independent McDiarmid calibration in
Section~\ref{sec:theory} and is class-count-independent.

\begin{corollary}[Operational McDiarmid Radius]
\label{cor:radius}
Given failure probability $\delta$, $d$ candidate features, and $m$
candidate thresholds per feature, \mist{} triggers a split if:

\begin{equation}
  \widehat{\Delta F} > \varepsilon, \qquad
  \varepsilon = \sqrt{\frac{32\ln(2dm/\delta)}{n}}.
  \label{eq:radius}
\end{equation}
\end{corollary}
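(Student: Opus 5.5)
The plan is to derive the radius from McDiarmid's inequality~\eqref{eq:mcdiarmid}, combined with the tight bounded-difference constant $c_i = 4/n$ for the Gini split gain $F$ that Section~\ref{sec:theory} establishes for arbitrary $K$. That constant is taken as given here. The argument then needs a union bound over candidates and a gap argument.

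\textbf{Step 1 (single candidate).} Fix a feature $j$ and a threshold $\tau$, and write $F_{j,\tau}$ for the empirical gain. Replacing one sample changes $F_{j,\tau}$ by at most $4/n$, so $\sum_i c_i^2 = n\cdot 16/n^2 = 16/n$. McDiarmid then gives
\[
\Prob\bigl[|F_{j,\tau} - \mathbb{E}F_{j,\tau}| \ge t\bigr] \le 2\exp\!\left(-\frac{n t^2}{8}\right).
\]
This relies on the per-leaf independence assumption of Section~\ref{sec:prelim}. The thresholds must be fixed independently of the data, or else conditioned on the sketch. I expect this to be the delicate point, and I would state it as a modelling assumption, as the paper does for the batch.

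\textbf{Step 2 (union bound).} There are $dm$ candidates. Set $2dm\exp(-nt^2/8) = \delta$, which gives $t = \sqrt{8\ln(2dm/\delta)/n}$. With probability at least $1-\delta$, every empirical gain lies within $t$ of its mean simultaneously.

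\textbf{Step 3 (gap).} Let $a$ be the best empirical candidate and $b$ any competitor. On the good event,
\[
\mathbb{E}F_a - \mathbb{E}F_b \ge \widehat{F}_a - \widehat{F}_b - 2t.
\]
If we require $\widehat{\Delta F} > 2t$, then $a$ truly dominates the runner-up, and hence every other candidate, since the runner-up has the largest empirical gain among them. The threshold is $2t = \sqrt{4\cdot 8\ln(2dm/\delta)/n} = \sqrt{32\ln(2dm/\delta)/n}$, which is exactly~\eqref{eq:radius}.

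The radius contains no $K$, because the constant $4/n$ does not depend on $K$. The main obstacle is inherited from Section~\ref{sec:theory}, namely the tightness of the constant; the corollary itself is routine once that constant is available.
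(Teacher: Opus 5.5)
Your argument is correct and is essentially the paper's own: McDiarmid with $c_i = 4/n$ and a union bound over the $dm$ candidates gives the per-candidate radius $\eta = \sqrt{8\ln(2dm/\delta)/n}$ (as in the proof of Theorem~\ref{thm:composed}), and the GapTest doubles it to $\varepsilon = 2\eta = \sqrt{32\ln(2dm/\delta)/n}$, exactly as in the remark following the corollary. One small correction to your closing comment: the corollary needs only the upper bound $c_i \le 4/n$ from Theorem~\ref{thm:tight}, not its tightness; your caveat that the sketch-derived thresholds are data-dependent is a fair point that the paper leaves implicit.
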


\begin{remark}
The factor of 32 arises from the GapTest structure: concluding
$\mathbb{E}[F_1] > \mathbb{E}[F_2]$ requires controlling both tails
simultaneously, so $\varepsilon = 2\eta$ where $\eta =
\sqrt{8\ln(2dm/\delta)/n}$ is the per-tail McDiarmid radius, giving
$\varepsilon = \sqrt{32\ln(2dm/\delta)/n}$. The union bound over $d$
features and $m$ thresholds enters only additively inside the logarithm
as $\ln(2dm/\delta)$, not multiplicatively into the constant 32.
\end{remark}

Crucially, $\varepsilon$ does not depend on the global class count $K$,
whereas Hoeffding-based criteria scale as $\log_2 K$; the
threshold count $m$ from Section~\ref{sec:sketch} contributes only a
mild $O(\sqrt{\ln m})$ overhead.

\noindent
\textbf{Role as structural regulariser.}
The conservatism of this criterion is a feature:
replacing the McDiarmid GapTest with a more aggressive criterion
increases tree size substantially at the cost of only marginal accuracy gains
(Table~\ref{tab:ablation-core}), consistent with the coupled failure
mode analysis (Figure~\ref{fig:tree-size}).

\subsection{Bayesian Knowledge Inheritance}
\label{sec:inheritance}
\noindent
\textbf{Motivation.}
A leaf that has accumulated sufficient evidence to split has rich
class-conditional statistics that should not be discarded. Standard streaming
trees initialise child leaves without prior information, forcing them to reconstruct predictive
distributions from scratch. In a strict exemplar-free setting, this
cold-start phase can be severe: The children must accumulate enough new
samples to form reliable predictions before they are useful, and any further
splits triggered from sparse children compound the problem. The conservative
split criterion of Section~\ref{sec:split} mitigates this by ensuring that
splits occur only from data-rich parents. We use Bayesian knowledge inheritance
to complete the picture by ensuring that the resulting children begin with
statistically grounded priors.

\noindent
\textbf{Truncated-Gaussian projection at split time.}
\mist{} uses Gaussian Naive Bayes (GNB) leaves, parameterised by
per-class, per-feature Welford accumulators $(\mu_j^c, \sigma_j^c)$ and
Dirichlet class-probability pseudo-counts. When a parent splits on
feature $j$ at threshold $v$, let $\zeta = (v - \mu_j^c)/\sigma_j^c$.
Exact truncated-Gaussian moment projection initialises the split-feature
statistics as:
\begin{align}
\label{eq:mu_left_right}
  \mu_j^{\mathrm{left},c}  = \mu_j^c
    - \sigma_j^c\frac{\phi(\zeta)}{\Phi(\zeta)}, \qquad
  \mu_j^{\mathrm{right},c} = \mu_j^c
    + \sigma_j^c\frac{\phi(\zeta)}{1 - \Phi(\zeta)},
\end{align}
with corresponding variance updates given in
Appendix~\ref{app:truncgauss}. For every non-split feature $k \neq j$,
under the Naive Bayes conditional independence assumption
$p(X_k \mid c, X_j \leq v) = p(X_k \mid c)$, parameters are inherited
unchanged. 
Class mass is reapportioned as
$n_c^{\mathrm{left}} \leftarrow \alpha\,\Phi(\zeta)\,n_c^{\mathrm{parent}}$
and
$n_c^{\mathrm{right}} \leftarrow \alpha\,(1{-}\Phi(\zeta))\,n_c^{\mathrm{parent}}$,
with Dirichlet pseudo-counts reweighted identically by factor $\alpha$.
 
For categorical features encoded as one-hot indicators, the split is
binary and no moment projection is required: each child $s \in \{0,1\}$
inherits class mass $n_c^s \leftarrow \alpha\, n_c(s)$, where
$n_c(s) = \#\{i : y_i = c,\, x_i^{(\mathrm{attr})} = s\}$
is the empirical co-occurrence count at the parent leaf,
$\alpha \in (0,1]$ is the inheritance discount
(Section~\ref{sec:theory-var}), and Laplace smoothing at prediction
time ensures non-zero posteriors for all classes.

The inherited prior decays at rate $O(1/n_{\mathrm{child}})$ as new
samples accumulate, allowing each child to specialise to its local
geometry while retaining the benefit of the parent's experience during
the critical early phase. 
This projection applies to both configurations (Section~\ref{sec:sketch}):
Gaussian moments initialise class-probability priors and apportion class
mass after every split. \mist{}-G inherits the full Gaussian leaf;
\mist{}-K's KLL sketches start empty (Algorithm~\ref{alg:split}), so
feature likelihoods require new samples, though Dirichlet priors are
immediately available.

\noindent
\textbf{Joint design with the split criterion.}
The variance reduction from inheritance scales inversely with the parent
sample count at split time
(Proposition~\ref{prop:variance}): a
split triggered too early yields a weaker prior and larger cold-start
variance for the child. The McDiarmid GapTest ensures splits occur only
from sufficiently data-rich parents, maximising the statistical value
of the inherited statistics. The two mechanisms reinforce each other:
the more conservative the split criterion, the richer the parent
statistics at split time, and therefore the stronger the variance
reduction guarantee. 

\subsection{Label-Aware Sketching and the Unified Dual Role of KLL}
\label{sec:sketch}
\looseness=-1
\mist{} maintains per-leaf, per-class KLL quantile sketches of capacity
$k$. A sketch for class $c$ at leaf $\ell$ is created only when samples
of class $c$ are routed there, giving per-leaf memory
$O(d \cdot K_{\mathrm{local}}^{(\ell)} \cdot k)$. As splits refine
the partition and leaves become purer, $K_{\mathrm{local}}^{(\ell)}$
decreases toward 1, providing a natural self-clearing mechanism that
bounds memory in practice; a deterministic worst-case bound is
$O(dkL_tK_t)$, which is empirically loose because most leaves satisfy
$K_{\mathrm{local}}^{(\ell)} \ll K_t$ (Appendix~\ref{app:memory-scaling}).

\noindent
\textbf{Split threshold evaluation.}
Candidate thresholds are midpoints between adjacent class-conditional
medians from active sketches, yielding at most $m \leq K_{\mathrm{local}}-1$
candidates per feature, exactly the $m$ in Eq.~\eqref{eq:radius}, so the
union bound covers only candidates actually considered. For categorical
attributes encoded as one-hot indicators, $m \leq 1$ and
Eq.~\eqref{eq:radius} remains valid without modification.

\noindent
\textbf{Leaf prediction: \mist{}-G and \mist{}-K.}
The same per-class sketches that generate split thresholds power an
optional non-parametric leaf predictor. In the default configuration
(\mist{}-G), leaf likelihoods are computed from the Gaussian parameters
maintained by the Welford accumulators; sketches contribute only to
split threshold evaluation. In the non-parametric configuration
(\mist{}-K), leaf likelihoods are instead estimated from sketch-derived
per-feature CDFs under a Naive Bayes factorisation:
\begin{equation}
  \hat{p}(x_j \mid c)
  = \frac{\widehat{F}_{j,c}(x_j + h_j)
          - \widehat{F}_{j,c}(x_j - h_j) + \epsilon_s}
         {2h_j + \epsilon_s},
  \label{eq:mistk-likelihood}
\end{equation}
where $\widehat{F}_{j,c}$ is the sketch-derived CDF, $h_j$ is the
inter-quantile spacing at $x_j$ (local bandwidth), and $\epsilon_s > 0$
is a dimensionless smoothing regulariser that prevents division by zero
when $h_j \to 0$; its contribution is negligible at interior points
where $h_j \gg \epsilon_s$.

\noindent
\textbf{Regime-dependent behaviour.}
\mist{}-G is preferable when class-conditional distributions are
approximately unimodal Gaussian, a condition empirically indicated by
strong \slda{}/\sqda{} performance on the same benchmark. \mist{}-K
is preferable under strongly non-Gaussian geometry (multimodal, angular,
or heavily skewed), where the Gaussian leaf assumption introduces
systematic bias that the sketch-based marginal estimator avoids.
The dual role of the KLL sketch is not an engineering convenience: 
the same data structure that makes continuous split evaluation 
memory-feasible also captures the distributional information needed for 
geometry-adaptive prediction.
\begin{algorithm}[t]
\caption{\mist{}: Update and Predict}
\label{alg:mist}
\begin{algorithmic}[1]
\REQUIRE Sample $(x_t, y_t)$, tree $\mathcal{T}$, discount $\alpha$,
         failure probability $\delta$, grace period $g$
\STATE Route $x_t$ to leaf $\ell$
\STATE Update Welford accumulators $(\mu_j^c, \sigma_j^c)$ and Dirichlet
       pseudo-counts at $\ell$ with $(x_t, y_t)$
\STATE Update per-class KLL sketch at $\ell$ with $x_{t,j}$ for each
       feature $j$
\IF{$n_\ell \geq g$ \AND \textsc{GapTest}$(\ell, \delta)$}
  \STATE $j^*, v^* \leftarrow$ best feature and threshold from sketch medians
  \STATE \textsc{Split}$(\ell, j^*, v^*, \alpha)$
         \COMMENT{Algorithm~\ref{alg:split}}
\ENDIF
\STATE \textbf{return} $\arg\max_c \hat{p}(c \mid x_t, \ell)$
\STATE
\STATE \textbf{function} \textsc{GapTest}$(\ell, \delta)$
\STATE \quad $m \leftarrow K_{\mathrm{local}}^{(\ell)} - 1$
       \COMMENT{at most $K_{\mathrm{local}}^{(\ell)}-1$ candidate 
       thresholds per feature; see Section~\ref{sec:sketch}}
\STATE \quad $\varepsilon \leftarrow \sqrt{32\ln(2dm/\delta)/n_\ell}$
\STATE \quad \textbf{return} $\widehat{F}_{(1)} - \widehat{F}_{(2)} > \varepsilon$
\end{algorithmic}
\end{algorithm}

\section{Theoretical Analysis}
\label{sec:theory}
The central result of this section is Theorem~\ref{thm:tight}: A tight,
$K$-independent bounded-difference constant for Gini split gain.
This is what makes Corollary~\ref{cor:radius} valid in an open world
and what Hoeffding-based criteria fundamentally cannot provide.
Theorem~\ref{thm:composed} extends this to the sketched setting.
Propositions~\ref{prop:variance}
quantify variance reduction from Bayesian inheritance.
\subsection{Tightness of the Gini Sensitivity}
\label{sec:theory-tight}
\begin{theorem}[Tightness of the Gini Sensitivity]
\label{thm:tight}
Let $S$ be a sequence of $n \geq 2$ samples and $F(S)$ the Gini split
gain defined in Eq.~\eqref{eq:gain}. For any single-sample replacement
$z_i \to z_i'$, the bounded-difference constant satisfies $c_i \leq 4/n$.
Moreover, this rate is tight: for any $\gamma > 0$, there exists a
configuration $S$ and a replacement such that
$|F(S) - F(S^{(i)})| > 4/n - \gamma$.
\end{theorem}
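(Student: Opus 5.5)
The plan is to clear denominators and work with integer class counts. For a multiset $A$ of size $|A|$ with class counts $a_c$, set $Q(A)=\sum_c a_c^2/|A|$, so that $|A|\,G(A)=|A|-Q(A)$. Since $n_L+n_R=n$, the linear terms cancel in Eq.~\eqref{eq:gain} and
\begin{equation*}
  n\,F(S) \;=\; Q(S_L)+Q(S_R)-Q(S).
\end{equation*}
The claim $c_i\le 4/n$ is therefore equivalent to the statement that one replacement $z_i\to z_i'$ changes $Q(S_L)+Q(S_R)-Q(S)$ by at most $4$ in absolute value. Everything then reduces to bounding how much $Q$ can move under elementary operations.

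\textbf{Elementary increments.} I will compute three increments.
\begin{itemize}
  \item Inserting a class-$c$ point into a set of size $m$ gives
  \begin{equation*}
    Q(A\cup\{z\})-Q(A)=\frac{2a_c+1}{m+1}-\frac{\sum_b a_b^2}{m(m+1)}.
  \end{equation*}
  Using $m\le\sum_b a_b^2/m\cdot m/m\le m$, more precisely $\sum_b a_b^2\le m^2$, together with $0\le a_c\le m$, this lies in $(-1,1]$. Deletion is the reverse operation, so it also changes $Q$ by an amount in $[-1,1)$.
  \item Relabelling one point of a fixed-size set from class $c$ to class $c'$ changes $Q$ by $2(a_{c'}-a_c+1)/m$ (using the pre-change counts, with $a_c\ge 1$). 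This has absolute value at most $2(m-1)/m<2$.
\end{itemize}

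\textbf{Upper bound by cases.} If $z_i$ and $z_i'$ fall on the same side of the threshold, only that side's $Q$ and $Q(S)$ change, and each change is a relabelling. So $|n\Delta F|<2+2=4$. If they fall on different sides, one side loses a point, the other gains a point, and $S$ undergoes a relabelling (or no change). This gives $|n\Delta F|\le 1+1+2=4$. In both cases $|F(S)-F(S^{(i)})|\le 4/n$. This part is routine.

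\textbf{Tightness.} This is the step I expect to be the main obstacle. The case analysis shows that a near-extremal configuration must make all three increments nearly saturate simultaneously with compatible signs. Concretely, we need $\Delta Q(S_L)\approx\Delta Q(S_R)\approx\pm1$ and $\Delta Q(S)\approx\mp2$. Naive choices conflict:
\begin{itemize}
  \item $\Delta Q(S)\approx-2$ forces the old class to dominate $S$ and the new class to be rare in $S$.
  \item That dominance tends to push the side-increments the wrong way.
\end{itemize}

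I would first try families in which one side is tiny. For example, removing a point from a pure two-point side gives exactly $-1$, and the other side is large. I would also use the freedom of arbitrary $K$: give the large side many distinct singleton classes, so that inserting a brand-new class has a controllable effect. I would then tune the class mixture, letting the side sizes and $K$ grow with $n$, and send the slack to $0$ so that the total exceeds $4-\gamma n$.

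If the three saturations turn out to be genuinely incompatible, I would instead optimise the exact expression for $n\Delta F$ over the count vectors. This should either locate the supremum or show that the $4/n$ claim needs refinement. That is the delicate point to check before writing out the proof.
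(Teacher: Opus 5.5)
Your upper bound is sound. The same-side case is the paper's Case~A: your relabelling bound $|\Delta Q|\le 2(m-1)/m$ is the paper's $|\Delta G|\le 2/m$ after dividing by $m$. Your cross-routing argument, using insertion/deletion increments in $(-1,1]$, is a valid substitute for the paper's explicit binary expansion of Case~B, and it works for every $K$ directly. It gives only $|n\Delta F|<4$ rather than the paper's $<2$, but that is all the upper bound needs.

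The genuine gap is the tightness half. You give no extremal configuration, and the search you outline targets a case where none exists. In cross-routing, the conflict you noticed is real rather than an artefact of naive choices. Let the two sides carry fractions $\lambda$ and $1-\lambda$ of the sample. The large-sample limit of your own increments then gives $|n\Delta F|\le 1+\lambda^2+(1-\lambda)^2\le 2$, and the paper's Case~B lemma gives $|n\Delta F|<2$ exactly for binary labels. So no tuning of tiny sides or singleton classes can approach $4$, and your fallback of ``refining'' the $4/n$ claim is not needed.

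The extremal configuration lives in the same-side case, which you closed off with the bound $2+2$ without asking whether both terms can saturate at once. There $n\Delta F=\Delta Q(S_L)-\Delta Q(S)$. You need $\Delta Q(S_L)\approx -2$, which requires the left child to be large and nearly pure in the old label. You also need $\Delta Q(S)\approx +2$, which requires the parent to be dominated by the new label. Both hold together as long as $1\ll n_L\ll n$ and the right child is pure in the new label.

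Concretely, put $m$ points of class~$1$ on the left and $n-m$ points of class~$0$ on the right. Relabel one left point to class~$0$ without changing its side. Then $\Delta Q(S_L)=-2+2/m$ and $\Delta Q(S)=2-(4m-2)/n$, so
\[
  n\,\bigl|F(S)-F(S^{(i)})\bigr| \;=\; 4-\frac{2}{m}-\frac{4m-2}{n}.
\]
Choosing $m=\lfloor\sqrt{n/2}\rfloor$ makes this $4-O(n^{-1/2})$; this is exactly the paper's construction.

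Note also that your own argument gives a strict inequality for every fixed $n$. So tightness can only mean $\sup_n \max n|\Delta F|=4$, attained as $n\to\infty$. Read literally, $|\Delta F|>4/n-\gamma$ (your ``$4-\gamma n$'') is vacuous once $n>4/\gamma$. The statement you should actually prove is $n|\Delta F|>4-\gamma$ for some $n$.
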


\begin{proof}[Proof sketch]
Case~A (same-child routing): replacing $z_i$ with $z_i'$ in child $L$
leaves $n_L$, $n_R$, and $G(S_R)$ unchanged, so only $G(S)$ and
$G(S_L)$ vary; each changes by at most $2/n$ and $2/n_L$, giving
$|\Delta F| \leq 2/n + (n_L/n)(2/n_L) = 4/n$.
Case~B (cross-routing): direct expansion over disjoint class proportions
gives $n|\Delta F| \leq 2 < 4$ (Appendix~\ref{app:case-b}).
Tightness: binary labels with $\lfloor\sqrt{n/2}\rfloor$ left-child
samples yield $|\Delta F| \to 4/n$ as $n \to \infty$.
\end{proof}

\begin{remark}
\label{rem:k-independence}
Rutkowski et al.~\cite{rutkowski2013} showed $c_i = 8/n$ for arbitrary $K$;
De Rosa and Cesa-Bianchi~\cite{derosa2015splitting} tightened this to $4/n$
for $K=2$ only, without establishing tightness.
Theorem~\ref{thm:tight} closes the gap: $4/n$ is tight for all $K$,
so the operational radius of Corollary~\ref{cor:radius} does not degrade
as new classes arrive, a guarantee that Hoeffding-based criteria cannot provide.
\end{remark}
\subsection{Composed Split Error with Sketched Features}
\label{sec:theory-composed}

\begin{theorem}[Composed Split Error at Fixed Threshold]
\label{thm:composed}
Let $F^*$ be the expected true Gini gain and $\widehat{F}$ the estimate
from a quantile sketch with rank error $\varepsilon_{\mathrm{sketch}}$.
For any $\delta, \delta_{\mathrm{sketch}} \in (0,1)$, with probability
at least $1 - \delta - \delta_{\mathrm{sketch}}$:
\begin{equation}
  |\widehat{F} - F^*| \leq
  \underbrace{\sqrt{\frac{8\ln(2dm/\delta)}{n}}}_{\text{statistical}}
  + \underbrace{4\varepsilon_{\mathrm{sketch}}}_{\text{sketch}}.
  \label{eq:composed-error}
\end{equation}
The statistical term uses factor 8 rather than 32 because bounding
$|\widehat{F} - F^*|$ at a fixed threshold requires only a two-sided
single-quantity concentration argument, not the simultaneous two-tail
control needed by the GapTest in Eq.~\eqref{eq:radius}.
Full proof in Appendix~\ref{app:proof-composed}.
The factor~$4\varepsilon_{\mathrm{sketch}}$ follows from Theorem~\ref{thm:tight}:
rank error shifts routing counts by $n\varepsilon_{\mathrm{sketch}}$, costing
$(4/n)\cdot n\varepsilon_{\mathrm{sketch}}$ in Gini gain.
\end{theorem}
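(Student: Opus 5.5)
The plan is to bound $|\widehat{F} - F^*|$ by passing through an intermediate quantity: the exact (unsketched) empirical Gini gain $F(S)$ computed on the same $n$ samples at the same fixed threshold. The triangle inequality gives
\[
|\widehat{F} - F^*| \leq |\widehat{F} - F(S)| + |F(S) - F^*|.
\]
Each term is controlled by an event of its own. The statistical event fails with probability at most $\delta$ and the sketch event with probability at most $\delta_{\mathrm{sketch}}$, so a union bound over the two yields the stated confidence $1-\delta-\delta_{\mathrm{sketch}}$.

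\textbf{Statistical term.} I will read $F^*$ as $\mathbb{E}[F(S)]$, the expected gain at the fixed threshold, consistent with the McDiarmid formulation of Eq.~\eqref{eq:mcdiarmid}. Theorem~\ref{thm:tight} gives $c_i \leq 4/n$, so $\sum_i c_i^2 \leq 16/n$. Substituting into Eq.~\eqref{eq:mcdiarmid} gives a two-sided tail of $2\exp(-n\varepsilon^2/8)$.

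To keep the statement uniform over all $dm$ candidate (feature, threshold) pairs, I will allocate $\delta/(dm)$ to each pair and solve
\[
2\exp\!\left(-\frac{n\eta^2}{8}\right) = \frac{\delta}{dm}.
\]
This yields $\eta = \sqrt{8\ln(2dm/\delta)/n}$, exactly the statistical term. Only one quantity is being concentrated here, which explains the factor $8$ rather than $32$, in agreement with the remark after the theorem.

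\textbf{Sketch term.} On the KLL success event (probability at least $1-\delta_{\mathrm{sketch}}$), every rank query returned by the sketch is within $\varepsilon_{\mathrm{sketch}} n$ of the true rank. At a fixed threshold $v$, this means the sketched left count $\widehat{n}_L$ (and the corresponding per-class counts) differs from the true count by at most $n\varepsilon_{\mathrm{sketch}}$. I will realise $\widehat{F}$ as the Gini gain of a modified sample $\widetilde{S}$ that agrees with $S$ except that at most $n\varepsilon_{\mathrm{sketch}}$ samples are re-routed to the other child. Each re-routing counts as one single-sample replacement, specifically the cross-routing case of Theorem~\ref{thm:tight}.

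Chaining these replacements one at a time and applying the bound $4/n$ at each step, the telescoping sum gives
\[
|\widehat{F} - F(S)| \leq n\varepsilon_{\mathrm{sketch}}\cdot\frac{4}{n} = 4\varepsilon_{\mathrm{sketch}}.
\]
If $n\varepsilon_{\mathrm{sketch}}$ is not an integer, I will take the ceiling and absorb the slack, or note that Gini gain is Lipschitz in the counts.

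\textbf{Main obstacle.} The delicate step is justifying that sketch-induced count errors correspond to a sequence of at most $n\varepsilon_{\mathrm{sketch}}$ valid single-sample replacements. This raises two issues.

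\begin{itemize}
\item The per-class sketches each carry their own rank error. Naively summing them over $K_{\mathrm{local}}$ classes would introduce $K$-dependence. I plan to handle this by interpreting $\varepsilon_{\mathrm{sketch}}$ as the total rank error of the routing count, i.e.\ the sum of per-class errors normalised by $n$. Under this interpretation the total number of misrouted samples is at most $n\varepsilon_{\mathrm{sketch}}$.
\item The intermediate sequences must remain legitimate samples of size $n$, so the bound of Theorem~\ref{thm:tight}, which requires $n\geq 2$ and no empty-child degeneracy, applies at every step. I would check this by noting that Case~B of Theorem~\ref{thm:tight} gives $n|\Delta F| \le 2$, which leaves room even at the boundary.
\end{itemize}
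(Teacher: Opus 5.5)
Your proposal matches the paper's proof essentially step for step. Both use the triangle inequality through the exact empirical gain $F(S)$. Both apply McDiarmid with $c_i = 4/n$ and a union bound over the $dm$ feature--threshold pairs to get $\sqrt{8\ln(2dm/\delta)/n}$. Both bound the sketch term by chaining at most $n\varepsilon_{\mathrm{sketch}}$ single-sample re-routings at cost $4/n$ each, which is the paper's Lemma~\ref{lem:routing-lipschitz}, and both finish with a union bound over the two failure events.

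The only difference is your concern about $K$-dependence from the per-class sketches, and it resolves without reinterpreting $\varepsilon_{\mathrm{sketch}}$. The class-$c$ sketch has rank error relative to its own size $n_c$, so $|\hat n_{c,L} - n_{c,L}| \le n_c\varepsilon_{\mathrm{sketch}}$. Summing over classes then gives exactly $n\varepsilon_{\mathrm{sketch}}$, which is how the paper argues.
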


\begin{corollary}[Full Composed Error with Optimization Displacement]
\label{cor:opt-displacement}
Under Gaussianity of $X_j \mid c$ and finite condition number
$\kappa = \sigma_{\max}/\sigma_{\min}$, with probability at least
$1 - \delta - \delta_{\mathrm{sketch}}$:
\begin{equation}
  F(S;\,v^*) - F(S;\,\hat{v}^*)
  \leq
  \sqrt{\frac{8\ln(2dm/\delta)}{n}}
  + 4(1+\kappa)\,\varepsilon_{\mathrm{sketch}},
  \label{eq:composed-full}
\end{equation}
where $\hat{v}^*$ is the sketch-derived threshold. The additional
$4\kappa\,\varepsilon_{\mathrm{sketch}}$ over Theorem~\ref{thm:composed}
accounts for threshold displacement: KLL rank error $\varepsilon_{\mathrm{sketch}}$
implies $|\hat{v}^* - v^*| \leq \varepsilon_{\mathrm{sketch}}\,\sigma_{\max}\sqrt{2\pi}$
under Gaussianity, and by the Lipschitz constant of Lemma~\ref{lem:gini-lipschitz}
($L_v = O(1/\sigma_{\min})$), this propagates to a Gini loss of
$4\kappa\,\varepsilon_{\mathrm{sketch}}$.
Accuracy saturates at $k \geq 32$ and the gap to exact-threshold
evaluation falls below $0.1\%$ (Table~\ref{tab:ablation-k-alpha}),
confirming the displacement term is negligible in practice.
Full proof in Appendix~\ref{app:proof-displacement}.
\end{corollary}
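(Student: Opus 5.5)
The plan is to split the suboptimality $F(S;v^*) - F(S;\hat v^*)$ into a sketch-free part and a threshold-displacement part, bound each separately, and combine them under a single union bound:
\[
F(S;v^*) - F(S;\hat v^*) = \bigl[F(S;v^*) - F(S;\tilde v)\bigr] + \bigl[F(S;\tilde v) - F(S;\hat v^*)\bigr].
\]
Here $\tilde v$ is the threshold the exact (unsketched) empirical medians would produce. Throughout, both $v^*$ and $\tilde v$ are read as members of the same finite candidate family of size at most $dm$ that Corollary~\ref{cor:radius} union-bounds over.

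\textbf{Sketch-free part.} I will work on the event of Theorem~\ref{thm:composed}, which holds with probability at least $1-\delta-\delta_{\mathrm{sketch}}$. On this event the gain at every candidate threshold is estimated within the statistical term $\sqrt{8\ln(2dm/\delta)/n}$, with the union over $dm$ candidates already absorbed into the logarithm. The same event also carries the fixed-threshold sketch error $4\varepsilon_{\mathrm{sketch}}$, which comes from Theorem~\ref{thm:tight}: a rank error moves at most $n\varepsilon_{\mathrm{sketch}}$ routing decisions, and each moved sample costs at most $4/n$. Applying Theorem~\ref{thm:composed} to the gap between the best candidate and the chosen one yields the first term $\sqrt{8\ln(2dm/\delta)/n}$ plus $4\varepsilon_{\mathrm{sketch}}$.

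\textbf{Displacement part.} This part turns rank error into threshold error in two steps.

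1. On the KLL event, each sketch median is within rank $\varepsilon_{\mathrm{sketch}}$ of the true class-conditional median. Under Gaussianity of $X_j\mid c$, the density near the median is at least $1/(\sigma_{\max}\sqrt{2\pi})$. Inverting the CDF locally therefore gives a median displacement of at most $\varepsilon_{\mathrm{sketch}}\sigma_{\max}\sqrt{2\pi}$. Candidate thresholds are midpoints of adjacent medians, so they inherit the same bound, $|\hat v^* - \tilde v| \le \varepsilon_{\mathrm{sketch}}\sigma_{\max}\sqrt{2\pi}$.

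2. I then invoke Lemma~\ref{lem:gini-lipschitz}, under which the Gini gain is Lipschitz in $v$ with $L_v = O(1/\sigma_{\min})$. Tracking its constant as $4/(\sigma_{\min}\sqrt{2\pi})$ gives $|F(S;\tilde v) - F(S;\hat v^*)| \le 4\kappa\,\varepsilon_{\mathrm{sketch}}$. This is consistent with the Theorem~\ref{thm:tight} heuristic: moving the threshold by $\Delta v$ shifts at most $n\,\Delta v/(\sigma_{\min}\sqrt{2\pi})$ samples in expectation, and each shifted sample costs $4/n$.

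Adding the two parts gives exactly $4(1+\kappa)\varepsilon_{\mathrm{sketch}}$ for the sketch contributions, together with the statistical term. The failure events are only the McDiarmid event and the single KLL event, both already counted in Theorem~\ref{thm:composed}, so the confidence level stays at $1-\delta-\delta_{\mathrm{sketch}}$.

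\textbf{Main obstacle.} The hardest step is the displacement step, which is where the empirical quantity $F(S;\cdot)$ meets population-level Gaussian density arguments. Routing counts are empirical, so the Lipschitz bound in $v$ holds only up to fluctuations. The first thing I would try is to apply the Lipschitz property to the expected gain and absorb the difference between empirical and expected gain at the two nearby thresholds into the already-paid statistical term. If that double-counts, I would instead state the lemma for the population gain, or argue that the candidate set fixes the comparison points, so only expected routing masses enter. A second subtlety is that the midpoint construction might select a different pair of adjacent medians after perturbation. I would handle this by assuming the medians are separated by more than twice the displacement, a condition that holds on the high-probability event for $k \ge 32$ in practice.
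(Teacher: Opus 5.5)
\textbf{Gap in the sketch-free part.} Your displacement part is essentially the paper's own argument (Steps~(i)--(iii) of Appendix~\ref{app:proof-displacement}). The Gaussian density at the median converts rank error into a median shift of about $\varepsilon_{\mathrm{sketch}}\sigma_{\max}\sqrt{2\pi}$, midpoints inherit it, and Lemma~\ref{lem:gini-lipschitz} with constant $4/(\sigma_{\min}\sqrt{2\pi})$ turns it into $4\kappa\,\varepsilon_{\mathrm{sketch}}$. The gap is your ``sketch-free part.'' Theorem~\ref{thm:composed} controls, at each fixed threshold $v$, how far a (sketched) empirical gain deviates from its expectation $F^*(v)$. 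The bracket $F(S;v^*)-F(S;\tilde v)$ instead compares the same empirical functional $F(S;\cdot)$ at two different thresholds, and it involves no sketch at all. So neither the McDiarmid term nor a $4\varepsilon_{\mathrm{sketch}}$ term has any reason to bound it. Even the standard selection argument (maximizing a proxy that is uniformly within $\epsilon$ of a target loses at most $2\epsilon$ on the target) bounds a different loss and carries a factor $2$. Concretely, the bracket is either trivial or uncontrolled:
\begin{itemize}
\item if $v^*$ is the exact-median threshold, then $\tilde v=v^*$ and the bracket vanishes;
\item if $v^*$ is the unrestricted empirical optimum, the bracket is the approximation bias of the median-midpoint candidate family. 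That bias does not shrink as $n,k\to\infty$, while the right-hand side of \eqref{eq:composed-full} does.
\end{itemize}
As written, you have assigned the leftover terms of the target bound to this bracket rather than derived them.

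\textbf{Repair.} Do what the paper does. Take $v^*$ to be the exact (unsketched) threshold, so that $|\hat v^*-v^*|\le\varepsilon_{\mathrm{sketch}}\sigma_{\max}\sqrt{2\pi}$ holds directly. Then $F(S;v^*)-F(S;\hat v^*)\le 4\kappa\,\varepsilon_{\mathrm{sketch}}$ follows from displacement alone, and this is dominated by the right-hand side. The statistical term and the extra $4\varepsilon_{\mathrm{sketch}}$ are slack for the stated inequality. In the paper they appear only when bounding a different quantity, $|F(S;\hat v^*)-F^*(v^*)|$, via the triangle inequality and Theorem~\ref{thm:composed}.

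\textbf{Two smaller points.} First, your claim that the density near the median is at least $1/(\sigma_{\max}\sqrt{2\pi})$ is not literally true. For the widest class, the density is strictly below this value on any nondegenerate interval around its median. That is why the paper states Step~(i) only as a small-displacement approximation with $\lesssim$. Second, your median-separation assumption handles reordering of adjacent medians. It does not handle the separate possibility that sketch error changes which candidate attains the $\arg\max$. That case would require a genuine selection term, and the paper does not address it either.
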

\subsection{Variance Reduction from Bayesian Inheritance}
\label{sec:theory-var}
\begin{proposition}[Post-Split Variance Reduction]
\label{prop:variance}
Let $\alpha \in (0,1)$ be the inheritance discount and
$S_0 = \alpha n_{\mathrm{parent}}$ the inherited pseudo-count total.
\emph{(i) Class probabilities.}
Under the Dirichlet inheritance prior, the posterior variance satisfies
for all $t \geq 0$:
\begin{equation}
  \mathrm{Var}(\theta_c \mid \mathcal{D}_t)
  \leq \tfrac{1}{4(S_0 + 1)}
  = O\!\left(\tfrac{1}{\alpha n_{\mathrm{parent}}}\right).
\end{equation}
\emph{(ii) Feature parameters.}
Under the NIG conjugate prior with $n_0 = \alpha n_c^{\mathrm{parent}}$
pseudo-observations, the marginal posterior of $\mu_j^c$ is
Student-$t$ with $\nu = n_0$ degrees of freedom and scale
$(\sigma_j^c)^2/n_0$. Its variance exists when $n_0 > 2$ and equals:
\begin{equation}
  \mathrm{Var}(\mu_j^c \mid \mathcal{D}_0)
  = \frac{(\sigma_j^c)^2}{n_0} \cdot \frac{\nu}{\nu - 2}
  = \frac{(\sigma_j^c)^2}{\alpha n_c^{\mathrm{parent}} - 2}
  = O\!\left(\tfrac{1}{\alpha n_c^{\mathrm{parent}}}\right).
\end{equation}
For the split feature, $(\sigma_j^c)^2$ is replaced by
$\sigma_{\mathrm{trunc}}^2 \leq (\sigma_j^c)^2$.
Full proof in Appendix~\ref{app:proof-variance}.
\end{proposition}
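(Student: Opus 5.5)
Part (i) follows from the Dirichlet posterior directly, and part (ii) from the standard Normal--Inverse-Gamma marginal; both just need the prior-to-posterior bookkeeping done carefully.

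For part (i), the inheritance protocol of Section~\ref{sec:inheritance} gives the child a Dirichlet prior with pseudo-counts $\alpha_c^{(0)} = \alpha n_c^{\mathrm{parent}}$ (plus any Laplace smoothing, which only increases the total). These sum to $S_0 = \alpha n_{\mathrm{parent}}$. After $t$ new samples with counts $m_c$, conjugacy gives the posterior $\mathrm{Dir}(\alpha_c^{(0)} + m_c)$ with total $S_t = S_0 + t \geq S_0$. The marginal of $\theta_c$ is $\mathrm{Beta}(a, S_t - a)$ with $a = \alpha_c^{(0)} + m_c$, so
\[
\mathrm{Var}(\theta_c \mid \mathcal{D}_t) = \frac{a(S_t - a)}{S_t^2(S_t+1)} \leq \frac{1}{4(S_t+1)} \leq \frac{1}{4(S_0+1)},
\]
using $a(S_t-a) \leq S_t^2/4$ (AM--GM) and monotonicity in $t$. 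This is $O(1/(\alpha n_{\mathrm{parent}}))$.

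For part (ii), I take the NIG prior centred at the inherited Welford estimate, with $\kappa_0 = n_0 = \alpha n_c^{\mathrm{parent}}$ and shape/scale chosen as $a_0 = n_0/2$, $b_0 = n_0(\sigma_j^c)^2/2$. This choice encodes $n_0$ pseudo-observations with sample variance $(\sigma_j^c)^2$. The standard NIG marginalisation (integrating out $\sigma^2$) gives a Student-$t$ for $\mu$ with $\nu = 2a_0 = n_0$ degrees of freedom and squared scale $b_0/(a_0\kappa_0) = (\sigma_j^c)^2/n_0$. The Student-$t$ variance $\text{scale}^2\cdot\nu/(\nu-2)$, valid for $\nu>2$, then simplifies to $(\sigma_j^c)^2/(n_0-2)$, which is $O(1/(\alpha n_c^{\mathrm{parent}}))$ once $n_0$ is bounded away from $2$.

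For the split feature, the prior variance parameter is the truncated-Gaussian variance from Appendix~\ref{app:truncgauss}. For the left child this is $\sigma^2[1 - \zeta\lambda - \lambda^2]$ with $\lambda = \phi(\zeta)/\Phi(\zeta)$, and symmetrically for the right child. Its bracket lies in $(0,1]$ because truncation of a Gaussian reduces variance, a consequence of the log-concavity of $\Phi$. Equivalently, $\lambda(\lambda+\zeta) \geq 0$ holds since the inverse Mills ratio satisfies $\lambda > -\zeta$.

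The main obstacle is matching the paper's parametrisation of the NIG hyperparameters to the claimed degrees of freedom $\nu = n_0$ and scale $(\sigma_j^c)^2/n_0$. I would fix $a_0 = n_0/2$ explicitly and note that other conventions change $\nu$ only by an additive constant, which leaves the $O(\cdot)$ rate intact. The variance statement is at $\mathcal{D}_0$. For $t>0$, the posterior $\kappa_t = n_0 + t$ only grows; the same bound holds provided the data do not inflate $b_t$, so I would state (ii) at $t=0$ as the paper does.
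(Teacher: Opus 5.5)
Your proposal is correct and follows the paper's proof essentially step for step. Part (i) is the same Beta-marginal bound $\frac{a(S_t-a)}{S_t^2(S_t+1)} \le \frac{1}{4(S_t+1)} \le \frac{1}{4(S_0+1)}$ via AM--GM and $S_t \ge S_0$, and part (ii) is the same Student-$t$ computation $\frac{\nu}{\nu-2}\cdot\frac{(\sigma_j^c)^2}{n_0} = \frac{(\sigma_j^c)^2}{n_0-2}$; you additionally make explicit the NIG hyperparameters ($\kappa_0 = n_0$, $a_0 = n_0/2$, $b_0 = n_0(\sigma_j^c)^2/2$) that the paper simply asserts yield $\nu = n_0$ and squared scale $(\sigma_j^c)^2/n_0$, and you justify $\sigma_{\mathrm{trunc}}^2 \le (\sigma_j^c)^2$ via the Mills-ratio/log-concavity inequality, which the paper states without proof in its truncated-Gaussian appendix.
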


\begin{remark}
Both posteriors tighten as $O(1/(\alpha n_{\mathrm{parent}}))$, and
since the McDiarmid GapTest maximises $n_{\mathrm{parent}}$ before any
split, conservative splitting and non-zero inheritance act as
complementary mechanisms that jointly minimise post-split uncertainty.
Any $\alpha \geq 0.2$ recovers near-full performance; $\alpha = 0$
replicates the \emph{w/o Inheritance} collapse (Table~\ref{tab:ablation-k-alpha}).
\end{remark}
\section{Experiments}
\label{sec:experiments}

\subsection{Experimental Setup}
\label{sec:setup}

\noindent
\textbf{Datasets.}
We evaluate on two benchmark categories. \emph{Standard benchmarks} include
two synthetic Gaussian mixtures (Synth-10, Synth-50) and eight real-world
tabular datasets: Covertype, Split-MNIST, Pendigits, Shuttle, Letter, Wine,
Iris, and HAR. Categorical attributes are one-hot encoded before streaming.
\emph{Stress-test synthetics} (Section~\ref{sec:stress}) isolate specific
failure modes of parametric leaf predictors: Multimodal, Skewed,
AngularSectors, Antipodal, Ring, HeavyTail, NoisyFeature, and ConceptDrift.
For Split-MNIST, raw pixels ($d{=}784$) are reduced to $d{=}50$ via PCA
fitted on the first $1{,}000$ task-0 samples only. The frozen projection is
then applied uniformly to all methods.

\noindent
\textbf{Baselines.}
We compare two groups of methods. \emph{Tree-based:} \vfdt{}, Rutkowski,
EFDT, HAT, and \arf{}-NR (10 trees, drift detection disabled). EFDT and HAT
under default drift-reset collapse under strict Online CIL
(Appendix~\ref{app:noreset}); we report their no-reset variants in the main
comparison only. \emph{Non-tree:} \slda{}, \sqda{}, NCM, RLS. We also compare against PEC~\cite{zajkac2024}, a per-class exemplar-free baseline that avoids global coupling via independent class models. All
methods follow the strict Online CIL protocol: single-pass, no raw data
retention, sequentially expanding $K$\footnote{Note that these are the SOTA methods for our setting: online CIL with raw tabular features. More recent, technically heavily involved, CIL methods require pretrained backbones so they do not apply directly to our setting~\cite{goswami2023fecam, zhuang2024gacl, petit2023fetril}.}. 

\noindent
\textbf{Hyperparameters.}
Sketch capacity $k$, inheritance discount $\alpha$, failure probability
$\delta$, grace period, and threshold generation policy are fixed globally
without per-dataset tuning; defaults are $k{=}64$, $\alpha{=}0.6$,
$\delta{=}0.10$, grace${=}200$ (full configurations in
Table~\ref{tab:hyperparams}). Sensitivity to $k$, $\alpha$, and
$\delta$ is in Table~\ref{tab:ablation-k-alpha}; sensitivity of
\mist{}-K to $\epsilon_s$ and $\beta$ is in
Appendix~\ref{app:mistk-sensitivity}.

\begin{table}[h]
\caption{Final mean accuracy among \textbf{tree-based methods} (mean $\pm$
std over 5 seeds). \textbf{Bold}: best within group. All methods follow the
no-reset CIL-adapted protocol; default drift-reset results for EFDT and HAT
are reported in Appendix~\ref{app:noreset}.
\arf{}-NR: Adaptive Random Forest with drift detection disabled (10 trees).}
\label{tab:tree-results}
\centering
\resizebox{\textwidth}{!}{%
\setlength{\tabcolsep}{3pt}
\begin{tabular}{lccccccccccc}
\toprule
\textbf{Method}
  & Synth-10 & Synth-50 & Wine & Iris
  & Covertype & Split-MNIST & Pendigits & Shuttle & Letter & HAR & \textbf{Mean} \\
\midrule
\mist{}-G (ours)
  & 0.994$\pm$0.004 & 0.999$\pm$0.002 & \textbf{0.970}$\pm$0.020
  & \textbf{0.933}$\pm$0.030 & 0.559$\pm$0.013 & \textbf{0.863}$\pm$0.003
  & \textbf{0.889}$\pm$0.023 & 0.733$\pm$0.033
  & 0.678$\pm$0.005 & 0.607$\pm$0.013 & 0.8225 \\
\mist{}-K (ours)
  & \textbf{0.996}$\pm$0.002 & \textbf{1.000}$\pm$0.001 & 0.957$\pm$0.038
  & 0.927$\pm$0.013 & \textbf{0.694}$\pm$0.006 & 0.772$\pm$0.007
  & 0.877$\pm$0.020 & \textbf{0.783}$\pm$0.067
  & \textbf{0.692}$\pm$0.009 & \textbf{0.625}$\pm$0.009 & \textbf{0.8323} \\
\midrule
\vfdt{}
  & 0.998$\pm$0.002 & 0.396$\pm$0.003 & \textbf{0.970}$\pm$0.020
  & \textbf{0.933}$\pm$0.030 & \textbf{0.605}$\pm$0.010 & \textbf{0.874}$\pm$0.004
  & 0.508$\pm$0.039 & 0.417$\pm$0.145
  & 0.372$\pm$0.056 & 0.324$\pm$0.008 & 0.6397 \\
Rutkowski
  & \textbf{1.000}$\pm$0.000 & \textbf{1.000}$\pm$0.000 & \textbf{0.970}$\pm$0.020
  & \textbf{0.933}$\pm$0.030 & 0.559$\pm$0.013 & 0.851$\pm$0.003
  & \textbf{0.863}$\pm$0.010 & \textbf{0.527}$\pm$0.099
  & \textbf{0.678}$\pm$0.005 & \textbf{0.614}$\pm$0.018 & \textbf{0.7995} \\
EFDT (NR)
  & 0.219$\pm$0.038 & 0.040$\pm$0.000
  & \textbf{0.970}$\pm$0.020 & \textbf{0.933}$\pm$0.030
  & 0.277$\pm$0.002 & 0.182$\pm$0.009 & 0.272$\pm$0.057 & 0.143$\pm$0.000
  & 0.168$\pm$0.026 & 0.202$\pm$0.066 & 0.3406 \\
HAT (NR)
  & 0.423$\pm$0.049 & 0.084$\pm$0.017
  & \textbf{0.970}$\pm$0.020 & \textbf{0.933}$\pm$0.030
  & 0.262$\pm$0.032 & 0.324$\pm$0.037 & 0.614$\pm$0.037 & 0.220$\pm$0.063
  & 0.237$\pm$0.032 & 0.307$\pm$0.054 & 0.4374 \\
\arf{}-NR
  & \textbf{1.000}$\pm$0.000 & \textbf{1.000}$\pm$0.000
  & 0.952$\pm$0.010 & 0.927$\pm$0.025
  & 0.458$\pm$0.013 & 0.782$\pm$0.009 & 0.846$\pm$0.009 & 0.444$\pm$0.029
  & 0.568$\pm$0.012 & 0.601$\pm$0.010 & 0.7578 \\
\bottomrule
\end{tabular}}
\end{table}

\begin{table}[h]
\caption{Final mean accuracy among \textbf{non-tree baselines} (mean $\pm$
std over 5 seeds), included as reference. \textbf{Bold}: best within group.
These methods maintain a single global representation across all classes and
do not share the structural modularity properties of streaming decision trees;
the comparison is included to characterise the accuracy regime instead of showing a direct competition.}
\label{tab:nontree-results}
\centering
\resizebox{\textwidth}{!}{%
\setlength{\tabcolsep}{3pt}
\begin{tabular}{lccccccccccc}
\toprule
\textbf{Method}
  & Synth-10 & Synth-50 & Wine & Iris
  & Covertype & Split-MNIST & Pendigits & Shuttle & Letter & HAR & \textbf{Mean}\\
\midrule
\mist{}-G (ours)  & 0.994$\pm$0.004 & 0.999$\pm$0.002 & \textbf{0.970}$\pm$0.020
  & \textbf{0.933}$\pm$0.030 & 0.559$\pm$0.013 & \textbf{0.863}$\pm$0.003
  & \textbf{0.889}$\pm$0.023 & 0.733$\pm$0.033
  & 0.678$\pm$0.005 & 0.607$\pm$0.013 & 0.8225 \\
\mist{}-K (ours)
  & \textbf{0.996}$\pm$0.002 & \textbf{1.000}$\pm$0.001 & 0.957$\pm$0.038
  & 0.927$\pm$0.013 & \textbf{0.694}$\pm$0.006 & 0.772$\pm$0.007
  & 0.877$\pm$0.020 & \textbf{0.783}$\pm$0.067
  & \textbf{0.692}$\pm$0.009 & \textbf{0.625}$\pm$0.009 & \textbf{0.8323} \\
\midrule
\slda{}
  & \textbf{1.000}$\pm$0.000 & \textbf{1.000}$\pm$0.000
  & 0.979$\pm$0.021 & \textbf{0.973}$\pm$0.039
  & \textbf{0.673}$\pm$0.007 & 0.854$\pm$0.005 & 0.879$\pm$0.007 & 0.529$\pm$0.000
  & 0.718$\pm$0.004 & \textbf{0.815}$\pm$0.001 & 0.8420 \\
\sqda{}
  & \textbf{1.000}$\pm$0.000 & 0.999$\pm$0.002
  & \textbf{0.987}$\pm$0.016 & 0.967$\pm$0.030
  & 0.528$\pm$0.004 & \textbf{0.940}$\pm$0.003
  & \textbf{0.982}$\pm$0.002 & \textbf{0.774}$\pm$0.029
  & \textbf{0.895}$\pm$0.006 & 0.791$\pm$0.005 & \textbf{0.8863} \\
NCM
  & \textbf{1.000}$\pm$0.000 & \textbf{1.000}$\pm$0.000
  & 0.981$\pm$0.010 & 0.860$\pm$0.039
  & 0.603$\pm$0.012 & 0.794$\pm$0.003 & 0.817$\pm$0.009
  & 0.549$\pm$0.030 & 0.602$\pm$0.004 & 0.639$\pm$0.007 & 0.7845 \\
RLS
  & 0.999$\pm$0.002 & \textbf{1.000}$\pm$0.000
  & 0.981$\pm$0.028 & 0.827$\pm$0.049
  & 0.659$\pm$0.007 & 0.818$\pm$0.006 & 0.857$\pm$0.008 & 0.350$\pm$0.057
  & 0.596$\pm$0.004 & \textbf{0.815}$\pm$0.001 & 0.7902 \\
PEC
  & 1.000$\pm$0.001 & 0.952$\pm$0.018 & 0.782$\pm$0.057 & 0.820$\pm$0.040 & 0.650$\pm$0.007 & 0.898$\pm$0.009 & 0.961$\pm$0.002 & 0.421$\pm$0.005 & 0.760$\pm$0.005 & 0.614$\pm$0.011 & 0.7857 \\
\bottomrule
\end{tabular}}
\end{table}

\subsection{Main Results}
\label{sec:results}
Among tree-based methods (Table~\ref{tab:tree-results}), \mist{}-G and
\mist{}-K are the strongest single-tree configurations across the benchmark
suite. \efdt{}~(NR) collapses on most benchmarks, consistent with
cold-start instability compounding under more aggressive split triggers
without inheritance. \arf{}-NR, with 10 base trees, falls below both
\mist{} configurations on most real-stream benchmarks 
with memory overhead that depends on the dataset regime
(Table~\ref{tab:memory}, Appendix~\ref{app:memory-full}).

On near-Gaussian, low-$K$ streams (Synth-10, Wine, Iris), \vfdt{} matches
\mist{}-G on final accuracy, reflecting an honest trade-off: its more
aggressive Hoeffding criterion yields faster structural adaptation when class
geometry is simple. The forgetting results (Appendix~\ref{app:forgetting-full})
provide the complementary picture, showing \mist{}-G achieving consistently lower
forgetting, confirming that accuracy parity on these streams comes at the
cost of greater instability under class arrival. 
On Letter ($K{=}26$, $d{=}16$), zero splits under default settings
reflects the GapTest withholding commitment; with $K{=}26$ classes
and $d{=}16$ features the Gini gain gap between candidate splits
does not exceed the McDiarmid radius, indicating the root partition
is insufficiently informative to warrant a split under the exemplar-free
constraint.
Table~\ref{tab:nontree-results} reports non-tree baselines as an accuracy
reference. Observe that our single streaming tree with the proposed calibrations is
competitive with global parametric methods on near-Gaussian benchmarks. This
confirms that local updates and bounded forgetting need not sacrifice
accuracy in this regime. The picture reverses on non-Gaussian geometry
(Section~\ref{sec:stress}), where global methods collapse. Average forgetting
results and prequential accuracy curves are in
Appendices~\ref{app:prequential}--\ref{app:forgetting-full}.
\subsection{Ablation Study}
\begin{table}[h]
\caption{Core and split-criterion ablations (average accuracy over 
ten standard datasets, mean over 5 seeds).}
\label{tab:ablation-core}
\centering
\small
\setlength{\tabcolsep}{3pt}
\begin{tabular}{lclc}
\toprule
Variant & Avg Acc. & Description & $\Delta$ \\
\midrule
\multicolumn{4}{l}{\textit{Full architecture}} \\
\mist{}-G (default) & 0.8225 & McDiarmid + sketch + inheritance + GNB leaf & --- \\
\midrule
\multicolumn{4}{l}{\textit{Architectural variants}} \\
w/o Inheritance    & 0.7439 & No parent-to-child transfer         & $-$\textbf{9.55}\%  \\
w/o Sketching      & 0.8204 & Gaussian threshold approximation    & $-$0.26\%  \\
w/ Hoeffding       & 0.8309 & Hoeffding split rule                & $+$1.02\%  \\
Majority-vote leaf & 0.2306 & No density model                    & $-$\textbf{71.97}\% \\
w/ SampleSplit     & 0.8227 & Sample-based threshold generation   & $+$0.02\%  \\
w/ DenseThreshold  & 0.8218 & Dense threshold grid                & $-$0.09\%  \\
\midrule
\multicolumn{4}{l}{\textit{Split criterion variants}} \\
GapTest (default)    & 0.8225 & Gap between best and second-best   & ---        \\
GainTest             & 0.8324 & Best gain vs.\ radius              & $+$1.20\%  \\
GainTestNoUnion      & 0.8405 & GainTest without union-bound term  & $+$2.19\%  \\
GaussianInheritance  & 0.8228 & GapTest + Gaussian CDF projection  & $+$0.04\%  \\
\bottomrule
\end{tabular}
\end{table}

We focus on two main ablation tests. Removing inheritance causes a relative drop of
$9.55\%$ ($7.86$ pp absolute), confirming that post-split cold-start
instability is the primary failure mode in the exemplar-free setting.
Replacing GNB leaves with majority-vote causes a $71.97\%$ relative
collapse ($59.19$ pp absolute): without class-conditional likelihood
estimates, the tree cannot distinguish between classes sharing the same
partition cell.
The McDiarmid versus Hoeffding comparison shows negligible accuracy difference
($+1.02\%$) on near-Gaussian benchmarks. The structurally meaningful difference is 
in Figure~\ref{fig:tree-size}, 
Appendix~\ref{app:tree-dynamics}): Hoeffding triggers substantially
more splits as $K_{\mathrm{local}}$ grows, while McDiarmid maintains compact
trees. 
GainTest yields marginal accuracy gains ($+1.20\%$) at the cost of
substantially larger trees (Figure~\ref{fig:tree-size}), confirming the
McDiarmid radius as a structural regulariser.
Accuracy is robust to $k$, $\alpha$, and $\delta$ over broad ranges
(Appendix~\ref{app:sensitivity}): performance saturates at $k \geq 32$ and
$\alpha \geq 0.2$, confirming that \mist{} can be deployed at low memory
budgets with minimal accuracy cost. Setting $\alpha = 0$ collapses to
$0.7439$, exactly matching the \emph{w/o Inheritance} ablation.

\subsection{Stress Test on Non-Gaussian Geometry}
\label{sec:stress}

\begin{table}[h]
\caption{Stress-test results (mean $\pm$ std over 5 seeds) on non-Gaussian
synthetic streams.}
\label{tab:stress}
\centering
\resizebox{\textwidth}{!}{%
\setlength{\tabcolsep}{3pt}
\begin{tabular}{lccccccccc}
\toprule
Method & Antipodal & Multimodal & Ring & Skewed & HeavyTail
       & NoisyFeature & ConceptDrift & AngularSectors & \textbf{Mean} \\
\midrule
\mist{}-G & 0.834$\pm$0.033 & 0.919$\pm$0.032 & 0.975$\pm$0.019
          & 0.899$\pm$0.012 & 0.881$\pm$0.014 & 0.808$\pm$0.106
          & \textbf{0.993}$\pm$0.012 & 0.241$\pm$0.010 & 0.8188 \\
\mist{}-K & 0.829$\pm$0.021 & 0.983$\pm$0.005 & 0.968$\pm$0.026
          & 0.978$\pm$0.009 & 0.897$\pm$0.022 & 0.768$\pm$0.121
          & 0.988$\pm$0.011 & \textbf{0.436}$\pm$0.017 & \textbf{0.8559} \\
\midrule
\slda{}   & 0.171$\pm$0.042 & 0.756$\pm$0.053 & 0.978$\pm$0.020
          & 0.897$\pm$0.012 & \textbf{0.898}$\pm$0.025 & 0.811$\pm$0.104
          & \textbf{0.993}$\pm$0.012 & 0.240$\pm$0.041 & 0.7180 \\
\sqda{}   & \textbf{0.984}$\pm$0.007 & \textbf{0.999}$\pm$0.001
          & 0.975$\pm$0.021 & \textbf{1.000}$\pm$0.000
          & 0.858$\pm$0.012 & 0.798$\pm$0.109
          & \textbf{0.993}$\pm$0.012 & 0.235$\pm$0.015 & 0.8553 \\
\arf{}-NR & 0.862$\pm$0.025 & 0.947$\pm$0.018 & \textbf{0.979}$\pm$0.012
          & 0.921$\pm$0.016 & 0.893$\pm$0.019 & \textbf{0.832}$\pm$0.091
          & 0.991$\pm$0.009 & 0.255$\pm$0.021 & 0.8350 \\
PEC       & 0.918$\pm$0.010 & 0.964$\pm$0.017 & 0.926$\pm$0.035 & 0.972$\pm$0.009 & 0.801$\pm$0.012 & 0.233$\pm$0.025 & 0.984$\pm$0.018 & 0.302$\pm$0.046 & 0.7625 \\
\bottomrule
\end{tabular}}
\end{table}
Recall that in Table~\ref{tab:nontree-results} non-tree benchmarks
outperform our solutions due to their utilisation of the Gaussian
behaviour of the data. To test how robust these methods are under
non-Gaussian geometry, we run additional stress tests as follows:
On AngularSectors, \mist{}-K is the only configuration that substantially
exceeds chance: \sqda{} ($0.235$) and \arf{}-NR ($0.255$) both collapse
because ensemble diversity cannot compensate for a shared Gaussian leaf
assumption---the failure is a bias problem, not a variance problem. On
Multimodal and Skewed, \mist{}-K substantially outperforms \mist{}-G and
approaches \sqda{}, confirming the sketch captures non-Gaussian structure
without raw-data storage. The failure of \slda{} on Antipodal
while \sqda{} succeeds illustrates the importance of per-class
covariance in non-spherical settings. Figure~\ref{fig:stress-vis} provides
PCA visualisations of all stress-test streams.

\mist{} carries higher memory than lightweight baselines due to per-leaf,
per-class quantile sketches. Memory scales as $O(dkL_tK_t)$ worst-case,
but most leaves quickly satisfy $K_{\mathrm{local}}^{(\ell)} \ll K_t$ as
leaves purify (Appendix~\ref{app:memory-scaling}). Among Pareto-optimal
configurations on Covertype, \mist{}-K achieves the highest accuracy;
\slda{} offers near-best accuracy at low memory; \mist{}-G uniquely
provides formal split-calibration guarantees and non-Gaussian robustness
where global methods collapse. Sketch-quantile initialisation matches
truncated-Gaussian accuracy on near-Gaussian data but is $1.27\times$
slower, confirming inheritance's benefit lies in the warm-start mechanism;
on outlier-contaminated data it yields gains up to $+33.5$~pp
(Appendix~\ref{app:sketch-vs-gaussian}).


\vspace{-0.2cm}
\section{Conclusion}
\label{sec:conclusion}
Streaming decision tree failure in online CIL traces to two coupled sources: a split criterion that degrades as $K$ grows, and the absence of parent-to-child knowledge transfer. \mist{} resolves this through three integrated components: a tight, $K$-independent McDiarmid confidence radius ($c_i{=}4/n$) for Gini splitting that prevents premature splits; a Bayesian inheritance mechanism for post-split stabilization; and unified KLL quantile sketches for continuous split evaluation and geometry-adaptive prediction.
Empirically, \mist{}-G is highly competitive with global parametric methods on near-Gaussian streams. On strongly non-Gaussian geometry, \mist{}-K is uniquely robust, succeeding where per-class discriminants and 10-tree ensembles collapse to near-chance (e.g., AngularSectors). This regime-dependence explicitly validates our dual parametric/non-parametric design.

Regarding limitations, \mist{}-G's projection inherently relies on Gaussian assumptions, which \mist{}-K bypasses via sketching; extending this infrastructure with copulas could further capture feature dependencies. Crucially, deriving time-uniform confidence sequences via nonnegative supermartingales \citep{Howard2021} is a major focus for future work. This would replace our current per-decision union bound, formally eliminating multiple-testing effects over infinite horizons and further optimizing split timing.

Ultimately, \mist{} demonstrates that streaming decision trees provide a naturally modular architecture for open-world continual learning. By strictly decoupling local predictive modules while enabling statistically grounded knowledge transfer at split time, structured partitioning offers a principled foundation for learners that grow without catastrophic forgetting.

\newpage
\bibliographystyle{plain}
\bibliography{references}

\newpage
\appendix

\section{Broader Context: Online Continual Learning}
\label{app:related_work}

\subsection{Catastrophic forgetting and the continual-learning taxonomy}
\label{app:related_work:taxonomy}

The tendency of connectionist learners to overwrite previously acquired knowledge was
formalised as \emph{catastrophic interference} by~\cite{mccloskey1989catastrophic}
and~\cite{ratcliff1990connectionist}, and revisited for modern networks
by~\cite{french1999catastrophic} and~\cite{goodfellow2013empirical}.
Modern continual learning (CL) is commonly organised along three scenarios, including \textbf{task-incremental, domain-incremental, and class-incremental}, formalised
by~\cite{vandeven2019three} and later refined in~\cite{vandeven2022three};
the categorisation was independently validated by~\cite{hsu2018reevaluating}.
Class-incremental learning (CIL) is widely regarded as the hardest setting because
the task identifier is not supplied at inference and the classifier must discriminate
between all classes ever seen, requiring calibrated cross-task decision boundaries.
Broader surveys include~\cite{parisi2019continual,delange2022survey}, and,
for the specifically \emph{online} (single-pass, small-batch) regime,~\cite{mai2022online}.
Streaming settings aggravate forgetting relative to the batch case because each sample
is seen only once and stationarity cannot be assumed, motivating drift-aware
evaluation~\cite{ghunaim2023realtime}.

\subsection{Regularisation-based methods}
\label{app:related_work:regularisation}

Regularisation approaches anchor parameters deemed important for past tasks.
Elastic Weight Consolidation (EWC) uses a Fisher-information
penalty~\cite{kirkpatrick2017overcoming}; Synaptic Intelligence accumulates path
integrals of parameter sensitivity~\cite{zenke2017continual}; Memory-Aware Synapses
uses gradient magnitudes of the learned function~\cite{aljundi2018memory};
Learning without Forgetting employs knowledge distillation from the previous model
as a functional prior~\cite{li2018learning}.
Online variants such as Progress \& Compress~\cite{schwarz2018progress} and
RWalk~\cite{chaudhry2018riemannian} were developed to amortise Fisher estimation
over streams.
These methods are effective for short task sequences but degrade sharply in unbounded
class-incremental streams: importance estimates drift, the quadratic penalty cannot
simultaneously protect arbitrarily many past classes, and they do not address the
representation--classifier mismatch at the core of CIL.

\subsection{Replay and rehearsal}
\label{app:related_work:replay}

Replay stores or synthesises past data to counteract forgetting.
Experience Replay~\cite{rolnick2019experience}, iCaRL~\cite{rebuffi2017icarl},
GEM~\cite{lopezpaz2017gem} and its efficient variant A-GEM~\cite{chaudhry2019agem}
constrain updates with respect to a memory buffer; tiny-memory baselines are analysed
in~\cite{chaudhry2019tiny}.
DER++~\cite{buzzega2020der} augments replay with logit distillation,
while MIR~\cite{aljundi2019mir} and GSS~\cite{aljundi2019gss} target the
\emph{online} regime with interference-aware retrieval and gradient-based sample selection.
ER-ACE~\cite{caccia2022eracer} mitigates representation drift caused by
na\"{i}ve cross-entropy replay.
Generative replay replaces the buffer with a learned generator~\cite{shin2017dgr},
and GDumb~\cite{prabhu2020gdumb} shows that a buffer-trained model is a strong baseline.
Selective replay for lifelong reinforcement learning was proposed
by~\cite{isele2018selective}.
All of these methods \textbf{violate the exemplar-free constraint} relevant to
privacy-sensitive tabular applications such as finance and healthcare, and generative
replay additionally compounds forgetting via generator drift.

\subsection{Architecture-expansion and parameter isolation}
\label{app:related_work:architecture}

A complementary family grows or partitions the network per task:
Progressive Networks~\cite{rusu2016progressive},
PackNet~\cite{mallya2018packnet},
Dynamically Expandable Networks~\cite{yoon2018lifelong},
HAT~\cite{BifetGavalda2009HAT},
Piggyback~\cite{mallya2018piggyback}, and
Expert Gate~\cite{aljundi2017expert}.
These methods largely avoid forgetting by construction but require a task
identifier at inference or introduce unbounded parameter growth, which is
incompatible with an open-ended class stream.

\subsection{Exemplar-free deep continual learning}
\label{app:related_work:exemplar_free}

Exemplar-free CIL methods retain only compact statistics.
PASS~\cite{zhu2021prototype} uses prototype augmentation and self-supervision;
IL2M~\cite{belouadah2019il2m} rebalances logits with class statistics;
SSRE~\cite{zhu2022self} expands representations without exemplars;
PRAKA~\cite{shi2023prototype} introduces prototype reminiscence and asymmetric
knowledge aggregation;
FeTrIL~\cite{petit2023fetril} translates old-class features from new-class geometry;
FeCAM~\cite{goswami2023fecam} exploits heterogeneous class covariances.
Deep SLDA~\cite{hayes2020lifelong} freezes a backbone and maintains a
streaming Gaussian classifier.
Prompt-based methods such as L2P~\cite{wang2022l2p},
DualPrompt~\cite{wang2022dualprompt}, and CODA-Prompt~\cite{smith2023coda}
exploit large pre-trained vision transformers with learnable prompt pools; they
depend on the quality and transferability of the frozen backbone, a resource
unavailable in most tabular domains.

\subsection{Online and streaming-specific methods}
\label{app:related_work:online}

REMIND~\cite{hayes2020remind} and Latent Replay~\cite{pellegrini2020latent}
compress activations to enable single-pass rehearsal;
\cite{lesort2020continual} surveys the online robotics setting;
\cite{ghunaim2023realtime} argues for throughput-aware evaluation.
In contrast to multi-epoch offline CIL, these works highlight the stringent
per-sample compute budget that governs truly online learners.

\subsection{Tabular continual learning and streaming decision trees}
\label{app:related_work:tabular}

Continual learning on \textbf{tabular} streams is markedly under-explored.
Recent work considers tabular CL in energy-constrained~\cite{wang2025imlp}
and contrastive or out-of-distribution settings~\cite{tccl2025}, as well as
financial-audit anomaly detection~\cite{hemati2022continual}; a pseudo-rehearsal
approach is reported in~\cite{tril3_2025}.
These efforts remain either replay-based or confined to narrow domains, leaving
class-incremental tabular learning largely open.

The complementary data-stream-mining literature offers decision-tree learners with
provable split guarantees: the Hoeffding-bound VFDT~\cite{DomingosHulten2000} and
its concept-drift extension CVFDT~\cite{HultenSpencerDomingos2001}; the Extremely
Fast Decision Tree (EFDT)~\cite{ManapragadaWebbSalehi2018}; the Hoeffding Adaptive
Tree~\cite{BifetGavalda2009HAT}; streaming
ensembles~\cite{BifetGavalda2009HAT,gomes2017};
and change-detection mechanisms ADWIN~\cite{BifetGavalda2007ADWIN} and
DDM~\cite{GamaMedasCastilloRodrigues2004}.
Crucially, \cite{rutkowski2013} show that VFDT's use of
Hoeffding's inequality is statistically inappropriate for information-gain-type split
criteria and replace it with the tighter \textbf{McDiarmid bound}.
However, none of these streaming-tree methods are designed or analysed for
class-incremental learning: new classes invalidate cached sufficient statistics at
existing splits, pure leaf counts drift as the label space grows, and classical drift
detectors conflate concept drift with class emergence.

\subsection{Positioning of MIST}
\label{app:related_work:positioning}

MIST sits at the intersection of the gaps identified above.
It is \textbf{exemplar-free} (no raw sample buffers or generative models,
unlike~\S\ref{app:related_work:replay}),
\textbf{single-pass and online} (unlike most methods
in~\S\ref{app:related_work:regularisation},
\S\ref{app:related_work:architecture}, and \S\ref{app:related_work:exemplar_free}),
\textbf{class-set-unbounded} (unlike parameter-isolation methods
in~\S\ref{app:related_work:architecture} and short-horizon regularisers
in~\S\ref{app:related_work:regularisation}), and \textbf{designed for tabular streams}
(unlike methods in~\S\ref{app:related_work:online}--\S\ref{app:related_work:tabular},
which presuppose visual pre-training or fixed feature spaces).

Methodologically, MIST replaces the statistically loose Hoeffding split test of
VFDT and EFDT with a \textbf{McDiarmid-calibrated criterion} in the spirit
of~\cite{rutkowski2013}, and extends it with \textbf{Bayesian inheritance of
class-conditional leaf statistics} so that newly emerging classes borrow strength
from ancestor nodes rather than restarting estimation.
This yields a streaming decision-tree learner that tolerates open class sets
without replay, capacity expansion, or a pre-trained backbone.
A detailed analysis of the GapTest threshold and the tightness of the
McDiarmid constant $c_i = 4/n$ is provided in Section~\ref{sec:theory}
and Appendix~\ref{app:proofs}.


\section{Algorithm Pseudocode}
\label{app:algorithms}

Algorithm~\ref{alg:mist} in the main text gives the high-level MIST update and
predict loop. We provide here the two subroutines that it invokes:
Algorithm~\ref{alg:split} details the Bayesian inheritance protocol executed at
every split event, and Algorithm~\ref{alg:mistk} details the sketch-based leaf
likelihood used by \mist{}-K. Both pseudocode listings use the notation
established in Section~4 of the main paper; $\phi$ and $\Phi$ denote the
standard normal PDF and CDF respectively.

\begin{algorithm}[h]
\caption{\mist{}: Split with Bayesian Inheritance}
\label{alg:split}
\begin{algorithmic}[1]
\REQUIRE Leaf $\ell$, split feature $j^*$, threshold $v^*$,
         discount $\alpha \in (0,1]$
\FOR{each child $s \in \{\mathrm{left}, \mathrm{right}\}$}
  \FOR{each class $c$}
    \IF{$j^*$ is continuous}
      \STATE $\zeta \leftarrow (v^* - \mu_{j^*}^c)/\sigma_{j^*}^c$
      \STATE $\tilde\Phi \leftarrow \Phi(\zeta)$ if $s{=}\mathrm{left}$,
             else $1{-}\Phi(\zeta)$
      \STATE $\mu_{j^*}^{s,c} \leftarrow \mu_{j^*}^c -
             \sigma_{j^*}^c\,\phi(\zeta)/\tilde\Phi$ if $s{=}\mathrm{left}$,
             else $\mu_{j^*}^c + \sigma_{j^*}^c\,\phi(\zeta)/\tilde\Phi$
             \COMMENT{Eq.~\eqref{eq:mu_left_right}}
      \IF{$s = \mathrm{left}$}
        \STATE $(\sigma_{j^*}^{s,c})^2 \leftarrow \sigma_{j^*}^{c\,2}
               \!\left[1 - \dfrac{\zeta\phi(\zeta)}{\tilde\Phi}
               - \left(\dfrac{\phi(\zeta)}{\tilde\Phi}\right)^{\!2}\right]$
               \COMMENT{upper-truncated; Appendix~\ref{app:truncgauss}}
      \ELSE
        \STATE $(\sigma_{j^*}^{s,c})^2 \leftarrow \sigma_{j^*}^{c\,2}
               \!\left[1 + \dfrac{\zeta\phi(\zeta)}{\tilde\Phi}
               - \left(\dfrac{\phi(\zeta)}{\tilde\Phi}\right)^{\!2}\right]$
               \COMMENT{lower-truncated; sign of linear term is $+$;
               Appendix~\ref{app:truncgauss}}
      \ENDIF
      \STATE $n_c^s \leftarrow \tilde\Phi \cdot \alpha \cdot n_c^\ell$
    \ELSE
      \STATE \COMMENT{categorical one-hot: split feature is deterministic in each child}
      \STATE $v_{\mathrm{cat}} \leftarrow 0$ if $s{=}\mathrm{left}$, else $1$
      \STATE $\mu_{j^*}^{s,c} \leftarrow v_{\mathrm{cat}}$;\quad
             $(\sigma_{j^*}^{s,c})^2 \leftarrow 0$
             \COMMENT{$x_{j^*}{=}v_{\mathrm{cat}}$ w.p.\ 1 in child $s$; inherit no uncertainty}
      \STATE $n_c^s \leftarrow \alpha \cdot n_c(v_{\mathrm{cat}})$,\;
             $n_c(v_{\mathrm{cat}}){=}\#\{i:y_i{=}c,\,x_i^{(\mathrm{attr})}{=}v_{\mathrm{cat}}\}$
    \ENDIF
    \FOR{$k \neq j^*$}
      \STATE $(\mu_k^{s,c},\,\sigma_k^{s,c}) \leftarrow (\mu_k^c,\,\sigma_k^c)$
    \ENDFOR
  \ENDFOR
  \STATE Initialise KLL sketches at $s$ as empty
  \COMMENT{\mist{}-G: unaffected (Gaussian accumulators inherited above).
           \mist{}-K: sketch-based feature likelihoods unavailable until
           new samples arrive; class-probability priors remain valid.}
\ENDFOR
\STATE Remove $\ell$ from active leaves
\end{algorithmic}
\end{algorithm}

\begin{algorithm}[h]
\caption{\mist{}-K: Sketch-Based Leaf Likelihood}
\label{alg:mistk}
\begin{algorithmic}[1]
\REQUIRE Observation $x$, leaf $\ell$, smoothing $\epsilon_s$ (default $1.0$),
         bandwidth multiplier $\beta$,
         Dirichlet pseudo-counts $\{\alpha_c^{(\ell)}\}$ where
         $\alpha_c^{(\ell)} = n_c^{(\ell)} + \epsilon_s$,\\
         $\mathrm{IQS}_{\ell,c}(x_j) = \widehat{F}_{j,c}^{-1}(\min(r+h_0,1))
         - \widehat{F}_{j,c}^{-1}(\max(r-h_0,0))$
         where $r = \widehat{F}_{j,c}(x_j)$ is the sketch-estimated rank
         of $x_j$ and $h_0=0.25$ is the fixed half-window;
         returns $\epsilon_s$ if fewer than two distinct quantiles exist
\FOR{each class $c$}
  \STATE $\log\hat{p}(c) \leftarrow \log\alpha_c^{(\ell)} -
         \log\sum_{c'}\alpha_{c'}^{(\ell)}$
         \COMMENT{Dirichlet prior}
  \FOR{each feature $j$}
    \IF{$j$ is categorical}
      \STATE $\hat{p}_j \leftarrow (n_c^{(\ell)}(j{=}x_j) + \epsilon_s)\;/\;
             (n_c^{(\ell)} + 2\epsilon_s)$
             \COMMENT{Bernoulli likelihood from Dirichlet pseudo-counts;
             $n_c^{(\ell)}(j{=}x_j)$: count of class-$c$ samples with
             $x_j$ matching the observed category}
      \STATE $\log\hat{p}(x_j \mid c) \leftarrow \log\hat{p}_j$
    \ELSE
      \STATE $h_j \leftarrow \beta \cdot \mathrm{IQS}_{\ell,c}(x_j)$
             \COMMENT{local inter-quantile spacing}
      \STATE $\hat{F}^+ \leftarrow \widehat{F}_{j,c}(x_j + h_j)$;\quad
             $\hat{F}^- \leftarrow \widehat{F}_{j,c}(x_j - h_j)$
      \STATE $\log\hat{p}(x_j \mid c) \leftarrow
             \log(\hat{F}^+ - \hat{F}^- + \epsilon_s) -
             \log(2h_j + \epsilon_s)$
    \ENDIF
  \ENDFOR
  \STATE $\log\hat{p}(c \mid x) \leftarrow \log\hat{p}(c) +
         \sum_j \log\hat{p}(x_j \mid c)$
\ENDFOR
\STATE \textbf{return} $\arg\max_c \log\hat{p}(c \mid x)$
\end{algorithmic}
\end{algorithm}

\section{Extended Experimental Setup}
\label{app:setup}

\subsection{Hyperparameter Configurations}
\label{app:hyperparams}

Table~\ref{tab:hyperparams} lists the fixed default hyperparameters used for
every method in the main benchmark comparison. No method-specific held-out
tuning was performed; the same configuration was applied uniformly across all
datasets. For \mist{}, the defaults were chosen to reflect the theoretically
motivated operating points: $k{=}64$ (sketch capacity at which the safety
margin of the $4\varepsilon_{\mathrm{sketch}}$ bound saturates;
Appendix~\ref{app:sketch-bound}), $\alpha{=}0.6$ (well within the plateau
$\alpha \geq 0.2$ identified in Table~\ref{tab:ablation-k-alpha}),
and $\delta{=}0.10$ (consistent with the near-flat $\sqrt{\ln(1/\delta)}$
sensitivity; Table~\ref{tab:ablation-k-alpha}).

\begin{table}[h]
\caption{Fixed default configurations used in the main benchmark comparison.}
\label{tab:hyperparams}
\centering
\small
\begin{tabular}{lll}
\toprule
Method & Key hyperparameters & Policy \\
\midrule
\mist{}-G / \mist{}-K & grace=200, $\delta$=0.10, tie=0.05, $k$=64 & fixed default \\
\vfdt{} (Gini)         & grace=200, confidence=0.95, tie=0.05        & fixed default \\
EFDT / HAT (Gini)             & grace=50, split conf=$10^{-7}$, tie=0.05    & fixed default \\
Rutkowski (Gini)             & grace=200,  $\delta$=0.10, tie=0.05    & fixed default \\
\slda{} / \sqda{}      & shrinkage=$10^{-4}$                          & fixed default \\
NCM / RLS / PEC        & implementation defaults                       & fixed default \\
\bottomrule
\end{tabular}
\end{table}

\subsection{Protocol Mismatch in Drift-Reset Trees}
\label{app:noreset}

Table~\ref{tab:noreset} reports a focused diagnostic comparing EFDT and HAT
under their default drift-triggered reset configuration against the no-reset
variant used throughout the main experiments. Under strict Online CIL,
class arrivals are permanent structural events; drift detectors treat them
as transient shifts and discard the associated leaf statistics upon reset,
causing near-random performance.

When resets are disabled, HAT recovers substantial accuracy on several
datasets (e.g.\ Pendigits: $0.250 \to 0.614$, Split-MNIST: $0.170 \to 0.324$),
confirming that its failure under the default configuration is a consequence
of protocol misalignment. EFDT (no-reset) shows modest recovery on some
datasets (e.g.\ Covertype: $0.167 \to 0.277$, Pendigits: $0.196 \to 0.272$)
but remains below competitive accuracy, consistent with cold-start instability
without inheritance.

\begin{table}[h]
\caption{Default vs.\ no-reset diagnostics on selected datasets (final mean
accuracy, mean over seeds).}
\label{tab:noreset}
\centering\small
\begin{tabular}{lcccccc}
\toprule
Method & Pendigits & Shuttle & Letter & HAR & Covertype & Split-MNIST \\
\midrule
EFDT+ADWIN (reset)  & 0.196 & 0.253 & 0.121 & 0.167 & 0.167 & 0.167 \\
EFDT (no-reset)     & 0.272 & 0.143 & 0.168 & 0.202 & 0.277 & 0.182 \\
HAT (default)       & 0.250 & 0.207 & 0.107 & 0.231 & 0.168 & 0.170 \\
HAT (no-reset)      & 0.614 & 0.220 & 0.237 & 0.307 & 0.262 & 0.324 \\
\bottomrule
\end{tabular}
\end{table}

\subsection{Baseline Fairness and Tuning Policy}
\label{app:baselines-fairness}

All main benchmark results (Tables~\ref{tab:tree-results}
and~\ref{tab:main-forgetting}) are produced without method-specific held-out
tuning; the policy in Table~\ref{tab:hyperparams} is applied uniformly to all
methods. For non-tree baselines, we use published implementation defaults
without further adjustment. \arf{}-NR uses 10 base trees with drift detection
disabled, matching the no-reset convention of the tree-based group.

\section{Main Results: Extended Analysis}
\label{app:results}

\subsection{Prequential Accuracy and Knowledge Inheritance}
\label{app:prequential}

Figure~\ref{fig:inheritance-effect} illustrates the cost of disabling
knowledge inheritance at a single representative split event (HAR dataset,
Task~3 arrival, vertical dashed line).

\textbf{Panel~(A) — cold-start on the new task.}
Immediately after the split, \mist{}-G and \mist{}-K maintain stable
accuracy on the newly arriving task (${\approx}0.75$ and ${\approx}0.88$,
respectively), because the inherited parent-node statistics provide an
informed starting distribution for the child leaf.  The \emph{w/o
Inheritance} ablation begins at the same level but degrades monotonically
to $0.00$ by the end of the window ($\Delta{=}{+}0.75$ relative to
\mist{}-G), indicating that without transferred statistics the leaf can no
longer distinguish the new task's classes once the parent's raw counts are
discarded.

\textbf{Panel~(B) — catastrophic forgetting of an old task.}
Concurrently, Task~1 accuracy (measured on a held-out test set throughout
the stream) remains at ${\approx}0.73$ for all inheritance-enabled variants
but collapses to $0.00$ for the ablation immediately after the split
($\Delta{=}{+}0.73$).  This confirms that inheritance is the mechanism
preventing catastrophic forgetting: when a leaf splits, the child nodes
must re-estimate the old-class boundary from scratch without inherited
statistics, causing complete loss of prior knowledge.

Taken together, the two panels show that a single missing inheritance step
produces a simultaneous double failure — the model cannot learn the new task
\emph{and} forgets old ones — consistent with the variance-reduction bound
in Proposition~\ref{prop:variance}.

\begin{figure}[t]
\centering
\includegraphics[width=0.9\textwidth]{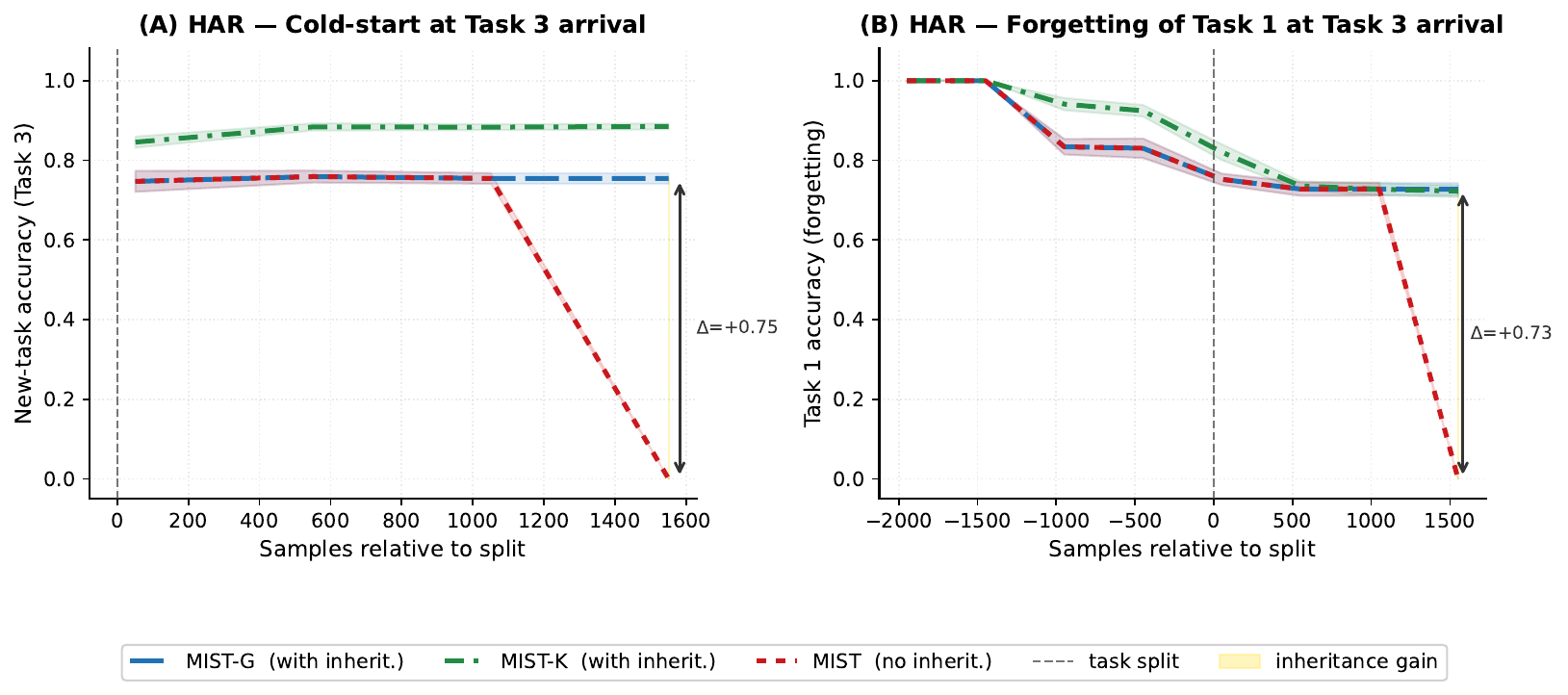}
\caption{Effect of knowledge inheritance at the Task~3 split event on HAR.
\textbf{(A)}~New-task (Task~3) accuracy versus samples after the split:
inheritance-enabled variants (\mist{}-G, \mist{}-K) remain stable while the
\emph{w/o Inheritance} ablation degrades to $0.00$ ($\Delta{=}{+}0.75$).
\textbf{(B)}~Old-task (Task~1) accuracy over the full split window: the
ablation collapses to $0.00$ after the split while \mist{}-G retains $0.73$
accuracy ($\Delta{=}{+}0.73$).}
\label{fig:inheritance-effect}
\end{figure}

\subsection{Forgetting Results: Full Tables and Per-Task Curves}
\label{app:forgetting-full}

Table~\ref{tab:main-forgetting} reports average forgetting across all standard
benchmarks. EFDT and HAT (both NR) show elevated forgetting on Shuttle ($0.400$)
and HAR ($0.314$), consistent with Figure~\ref{fig:tree-size}: these methods
trigger more splits, creating smaller and less stable leaf predictors whose class
estimates degrade faster under new class arrivals. The negative forgetting entry
for \mist{}-G on Shuttle ($-0.035$) reflects a genuine accuracy improvement on
Task-0 as the GNB accumulator refines its per-class likelihood estimates over
the stream; no split occurs on Shuttle, so this is not attributable to structural
repartitioning.

Per-task accuracy decay curves for Synth-10, Covertype, and Split-MNIST are
provided in the supplementary materials. Across all three datasets, \mist{}-G
with inheritance retains earlier tasks substantially better than the
no-inheritance ablation: per-task accuracy curves decay more slowly and
stabilise at higher plateaus, with the gap proportional to $n_{\mathrm{parent}}$
at the triggering split event, consistent with Proposition~\ref{prop:variance}.

\begin{table*}[t]
\caption{Average forgetting (mean $\pm$ std over 5 seeds) on the standard
Online CIL benchmarks (lower is better). \textbf{Bold}: best mean within group.}
\label{tab:main-forgetting}
\centering
\resizebox{\textwidth}{!}{%
\setlength{\tabcolsep}{3pt}
\begin{tabular}{lccccccccccc}
\toprule
\textbf{Method}
  & Synth-10 & Synth-50 & Wine & Iris
  & Covertype & Split-MNIST & Pendigits & Shuttle & Letter & HAR & \textbf{Mean} \\
\midrule
\multicolumn{12}{l}{\textit{Tree-based (CIL-adapted)}} \\
\mist{}-G (ours)
  & 0.001$\pm$0.003 & 0.001$\pm$0.002 & 0.038$\pm$0.039 & 0.080$\pm$0.045
  & 0.143$\pm$0.017 & 0.051$\pm$0.003 & 0.059$\pm$0.016 & $-$0.035$\pm$0.040
  & 0.089$\pm$0.005 & 0.103$\pm$0.006 & \textbf{0.0530} \\
\mist{}-K (ours)
  & 0.001$\pm$0.001 & 0.000$\pm$0.000 & 0.055$\pm$0.045 & 0.100$\pm$0.035
  & 0.200$\pm$0.027 & 0.064$\pm$0.004 & 0.067$\pm$0.019 & 0.000$\pm$0.007
  & 0.066$\pm$0.003 & 0.161$\pm$0.007 & 0.0714 \\
\midrule
\vfdt{}
  & 0.001$\pm$0.003 & 0.750$\pm$0.001 & 0.038$\pm$0.035 & 0.080$\pm$0.040
  & 0.340$\pm$0.013 & 0.066$\pm$0.004 & 0.581$\pm$0.059 & 0.486$\pm$0.140
  & 0.612$\pm$0.072 & 0.600$\pm$0.084 & 0.3554 \\
Rutkowski
  & 0.000$\pm$0.000 & 0.000$\pm$0.000 & 0.038$\pm$0.035 & 0.080$\pm$0.040
  & 0.143$\pm$0.015 & 0.060$\pm$0.002 & 0.066$\pm$0.008 & 0.367$\pm$0.099
  & 0.089$\pm$0.004 & 0.102$\pm$0.006 & \textbf{0.0945} \\
EFDT (NR)
  & 0.024$\pm$0.050 & 0.250$\pm$0.092 & 0.038$\pm$0.035 & 0.080$\pm$0.040
  & 0.490$\pm$0.096 & 0.044$\pm$0.049 & 0.720$\pm$0.083 & 0.151$\pm$0.099
  & 0.300$\pm$0.187 & 0.804$\pm$0.122 & 0.2901 \\
HAT (NR)
  & $-$0.035$\pm$0.106 & 0.227$\pm$0.105 & 0.038$\pm$0.035 & 0.080$\pm$0.040
  & 0.740$\pm$0.128 & $-$0.038$\pm$0.105 & 0.423$\pm$0.035 & 0.443$\pm$0.113
  & 0.094$\pm$0.059 & 0.516$\pm$0.051 & 0.2488 \\
\midrule
\multicolumn{12}{l}{\textit{Non-tree baselines (reference)}} \\
\slda{}
  & 0.000$\pm$0.000 & 0.000$\pm$0.000 & 0.014$\pm$0.020 & 0.020$\pm$0.027
  & 0.202$\pm$0.009 & 0.074$\pm$0.005 & 0.084$\pm$0.009 & 0.210$\pm$0.041
  & 0.106$\pm$0.007 & 0.014$\pm$0.002 & 0.0724 \\
\sqda{}
  & 0.000$\pm$0.000 & 0.002$\pm$0.003 & 0.000$\pm$0.000 & 0.050$\pm$0.050
  & 0.226$\pm$0.002 & 0.026$\pm$0.005 & 0.011$\pm$0.004 & 0.020$\pm$0.010
  & 0.041$\pm$0.004 & 0.021$\pm$0.002 & \textbf{0.0397} \\
NCM
  & 0.000$\pm$0.000 & 0.000$\pm$0.000 & 0.007$\pm$0.016 & 0.150$\pm$0.061
  & 0.273$\pm$0.011 & 0.084$\pm$0.004 & 0.073$\pm$0.011 & 0.079$\pm$0.033
  & 0.103$\pm$0.004 & 0.143$\pm$0.007 & 0.0912 \\
RLS
  & $-$0.000$\pm$0.004 & 0.000$\pm$0.000 & 0.029$\pm$0.047 & 0.180$\pm$0.057
  & 0.219$\pm$0.007 & 0.109$\pm$0.006 & 0.066$\pm$0.012 & 0.428$\pm$0.025
  & 0.147$\pm$0.009 & 0.013$\pm$0.001 & 0.1191 \\
PEC
  & 0.001$\pm$0.001 & 0.017$\pm$0.012 & 0.152$\pm$0.067 & 0.090$\pm$0.049 & 0.231$\pm$0.007 & 0.042$\pm$0.005 & 0.026$\pm$0.003 & 0.004$\pm$0.008 & 0.086$\pm$0.004 & 0.099$\pm$0.041 & 0.0748 \\
\bottomrule
\end{tabular}}
\end{table*}



\subsection{Tree-Dynamics Diagnostics}
\label{app:tree-dynamics}

Figure~\ref{fig:tree-size} plots the evolution of four structural quantities
over the training stream: number of leaves, cumulative splits, weighted leaf
Gini impurity, and mean local class cardinality $K_{\mathrm{local}}$. The
comparison between the McDiarmid GapTest (default) and the Hoeffding-based
variant reveals the structural regularisation effect described in Section~4.1:
the Hoeffding criterion triggers substantially more splits as $K_{\mathrm{local}}$
grows, 
yet Table~\ref{tab:ablation-core} shows only marginal aggregate accuracy
gain ($+1.02\%$); the structurally meaningful difference lies in tree
compactness (Figure~\ref{fig:tree-size}), not raw accuracy.
The self-clearing property
of $K_{\mathrm{local}}^{(\ell)}$---visible in the lower-right panel---validates
the memory analysis of Appendix~\ref{app:memory-full}: as leaves purify,
fewer classes remain active per leaf, and sketch memory decreases accordingly.

\begin{figure}[t]
\centering
\includegraphics[width=0.48\textwidth]{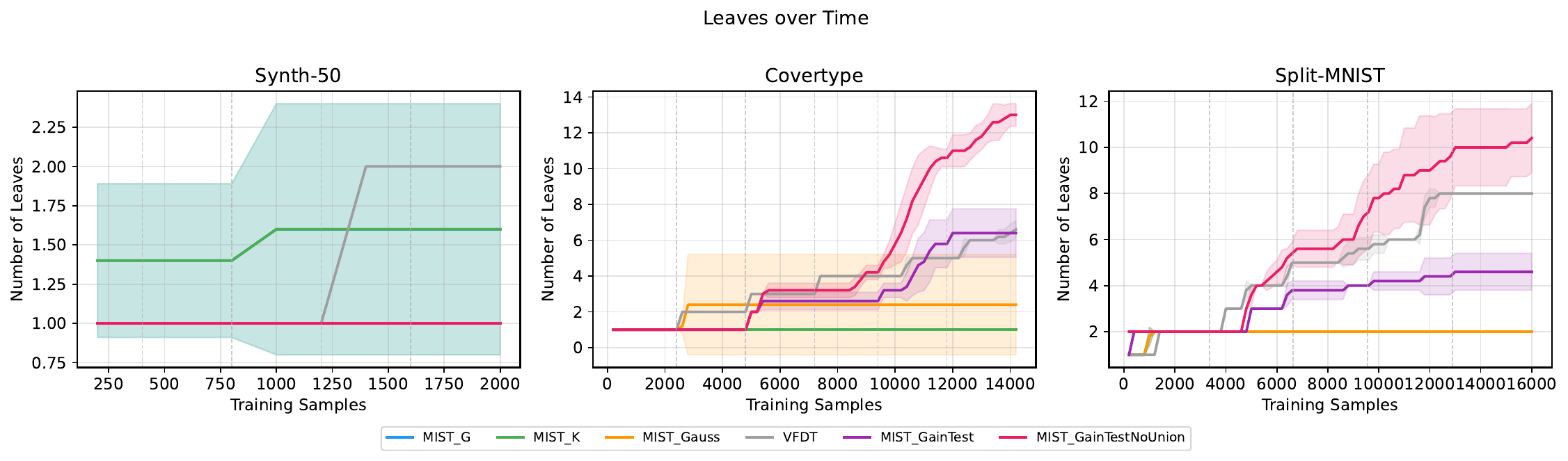}
\hfill
\includegraphics[width=0.48\textwidth]{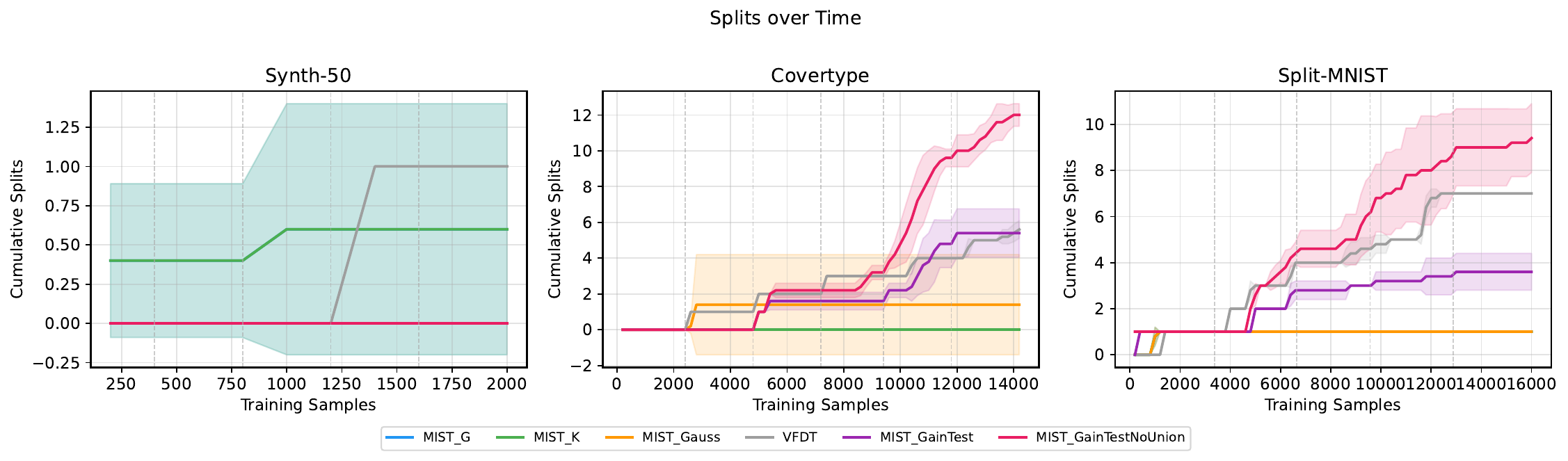}\\[4pt]
\includegraphics[width=0.48\textwidth]{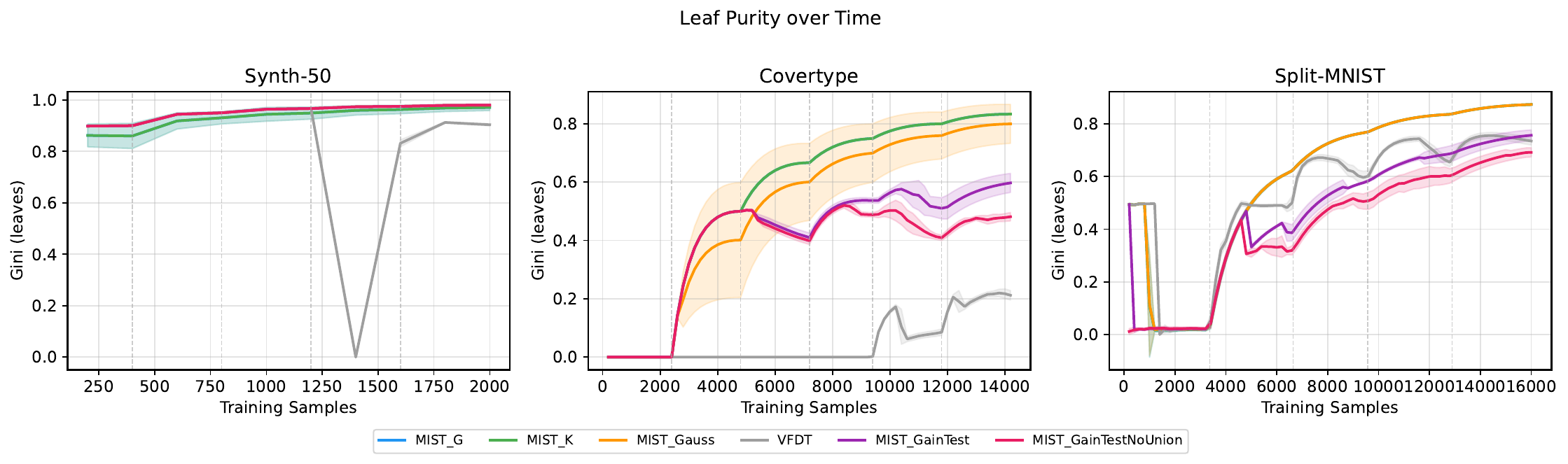}
\hfill
\includegraphics[width=0.48\textwidth]{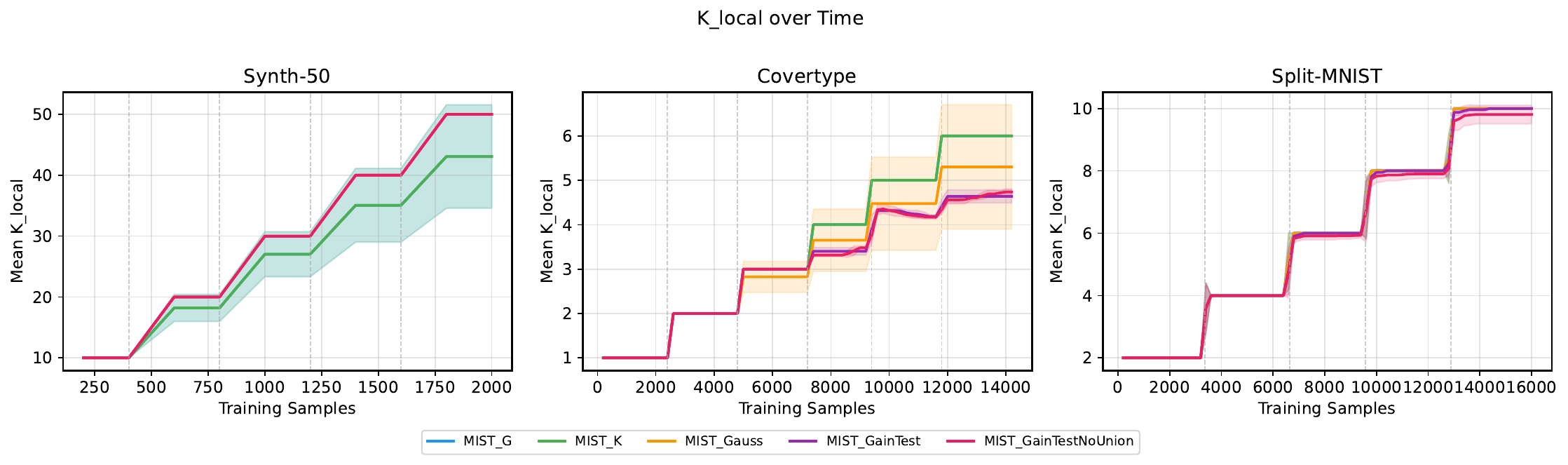}
\caption{Tree-dynamics diagnostics on Synth-50, Covertype, and Split-MNIST
         (standard, unweighted Gini criterion for all models).
\textbf{Top:} number of leaves and cumulative splits over the stream.
\textbf{Bottom:} weighted leaf Gini impurity and mean local class cardinality
$K_{\mathrm{local}}$.
Removing the union bound over attributes (\textsc{GainTestNoUnion}) causes
systematic over-splitting---up to $6\times$ more leaves than \mist{}---without
a commensurate gain in leaf purity (bottom-left), confirming that the union
bound is essential for controlling split proliferation as $K_{\mathrm{local}}$
grows.
\textsc{VFDT} (Hoeffding bound, no inheritance) shows a comparable split rate
to \textsc{GainTest} but higher residual impurity, reflecting the absence of
knowledge inheritance.
The staircase pattern in $K_{\mathrm{local}}$ (bottom-right) marks task
boundaries; \mist{}'s tight McDiarmid radius delays new splits until sufficient
evidence accumulates across all $K_{\mathrm{local}}$ classes.}
\label{fig:tree-size}
\end{figure}

Table~\ref{tab:inheritance-ablation} compares the default truncated-Gaussian
projection (Eq.~\eqref{eq:mu_left_right}) against a
sketch-quantile projection (\textsc{SketchInherit}) that initialises child
means directly from parent KLL quantiles. Both variants produce identical
accuracy and forgetting on Split-MNIST,
confirming that the post-split cold-start benefit derives from the warm-start
mechanism itself---i.e., providing statistically grounded priors to
children---rather than from the specific formula used to project those priors.
The sketch-quantile variant is $1.27\times$ slower on average; the overhead
is not amortised over the stream since splits are infrequent. The large
per-seed variance in runtime (Gaussian: ${\pm}27.4$ s; Sketch-Quantile:
${\pm}17.5$ s) reflects differences in tree depth reached across seeds rather
than algorithmic instability. The default truncated-Gaussian projection is
therefore preferred on near-Gaussian benchmarks where both variants are
equivalent in accuracy.
\begin{table}[h]
\caption{Split-time inheritance ablation on Split-MNIST (5 seeds). The non-parametric initialisation matches
accuracy but is $1.27\times$ slower on average.}
\label{tab:inheritance-ablation}
\centering
\small
\begin{tabular}{lcccc}
\toprule
Variant & Accuracy & Forgetting & Runtime (s) & Relative time \\
\midrule
Gaussian projection (default) & $0.7717\pm0.007$ & $0.0739\pm0.005$ & $69.35\pm27.39$ & $1.00\times$ \\
Sketch-quantile projection    & $0.7717\pm0.007$ & $0.0739\pm0.005$ & $88.35\pm17.48$ & $1.27\times$ \\
\bottomrule
\end{tabular}
\end{table}

\section{Ablation and Sensitivity Analysis}
\label{app:ablation}

\subsection{Sensitivity to $k$, $\alpha$, and $\delta$}
\label{app:sensitivity}

Table~\ref{tab:ablation-k-alpha} reports accuracy averaged over Synth-10,
Iris, Covertype, and Split-MNIST as each hyperparameter is swept in
isolation with all others held at default. Two plateaus are clearly
visible. For sketch capacity, performance saturates at $k \geq 32$, with negligible practical variation for $k > 64$; this is consistent with the $4/k$
decay of the sketch error bound (Section~\ref{app:sketch-bound}) entering
the negligible regime. For inheritance discount, the transition is sharp:
$\alpha{=}0.0$ collapses to $0.7439$, matching the \emph{w/o Inheritance}
ablation exactly (Table~\ref{tab:ablation-core}), while any $\alpha \geq 0.2$
recovers near-full performance. For failure probability $\delta$, the
near-constant accuracy across nearly two orders of magnitude confirms the
theoretically predicted $\sqrt{\ln(1/\delta)}$ plateau in the operational
radius (Eq.~\eqref{eq:radius}): the logarithmic dependence makes
$\delta$ a low-leverage parameter in practice.

\begin{table}[h]
\caption{Sensitivity of \mist{}-G to sketch capacity $k$, inheritance
discount $\alpha$, and split-confidence level $\delta$ (average ten standard datasets). The near-identical
values across $\delta$ reflect the theoretically predicted
$\sqrt{\ln(1/\delta)}$ plateau and are a feature, not a reporting artifact
(confirmed experimentally across $\delta \in [0.01, 0.30]$).}
\label{tab:ablation-k-alpha}
\centering
\small
\begin{tabular}{lcccccc}
\toprule
$k$ & 16 & 32 & 64 & 128 & 256 & 512 \\
Avg Acc. & 0.8219 & 0.8221 & 0.8225 & 0.8225 & 0.8260 & 0.8252 \\
\midrule
$\alpha$ & 0.0 & 0.2 & 0.4 & 0.6 & 0.8 & 1.0 \\
Avg Acc. & 0.7439 & 0.8226 & 0.8226 & 0.8226 & 0.8226 & 0.8226 \\
\midrule
$\delta$ & 0.01 & 0.05 & 0.10 & 0.15 & 0.20 & 0.30 \\
Avg Acc. & 0.8226 & 0.8226 & 0.8225 & 0.8226 & 0.8226 & 0.8226 \\
\bottomrule
\end{tabular}
\end{table}

\subsection{Threshold-Density Ablation}
\label{app:threshold}
MIST generates split-threshold candidates from inter-class quantile midpoints,
yielding at most $K_{\mathrm{local}}-1$ candidates per feature (sparse policy).
Table~\ref{tab:threshold-density} evaluates denser grids of 10 and 20
candidates.
Table~\ref{tab:threshold-density} evaluates denser grids of 10 and 20
candidates. The observed non-monotone runtime (dense grids are modestly
faster on average) therefore cannot be attributed to a reduced per-candidate
cost; it more likely reflects reduced tree-growth overhead when a larger
candidate set occasionally identifies a stronger split earlier, slightly
shortening the overall training trajectory. Runtime variance increases
substantially with grid density, however (std up to 28\,s on Covertype
vs.\ ${<}2$\,s for sparse), making this benefit unreliable.
Crucially, accuracy does not change at any density level across both
datasets. Crucially, accuracy does not change at any density level across both
datasets. This confirms that (i)~the sparse quantile-midpoint policy already
identifies the same best split as a much denser grid, because KLL quantiles
concentrate around class-conditional modes; and (ii)~the union bound over
candidate thresholds in Eq.~\eqref{eq:radius} does not conservatively expand
the radius enough to suppress valid splits.
The sparse policy therefore remains preferable for its stability
(std $<$2 s vs.\ up to 28 s for dense grids on Covertype).
\begin{table}[h]
\caption{Threshold-density ablation (mean runtime; 5 seeds). Accuracy is unchanged at all densities.
Dense grids show lower mean runtime but substantially higher variance,
confirming the sparse policy as the stable, efficient default.}
\label{tab:threshold-density}
\centering
\small
\resizebox{\textwidth}{!}{%
\begin{tabular}{lccccc}
\toprule
Dataset & Sparse (1) & Dense (10) & Dense (20) & Accuracy & Acc.\ change \\
\midrule
Covertype   & $65.40\pm1.26$ s & $53.19\pm19.12$ s ($-19\%$) & $53.09\pm28.40$ s ($-19\%$) & $0.694\pm0.006$ & none \\
Split-MNIST & $126.10\pm18.00$ s & $102.24\pm9.54$ s ($-19\%$) & $99.93\pm1.63$ s ($-21\%$) & $0.772\pm0.007$ & none \\
\bottomrule
\end{tabular}%
}
\end{table}

\subsection{\mist{}-K Parameter Sensitivity}
\label{app:mistk-sensitivity}
Table~\ref{tab:mistk-sensitivity} sweeps the two \mist{}-K-specific
hyperparameters: smoothing constant $\epsilon_s$ and bandwidth multiplier
$\beta$. Across all ten benchmarks,
accuracy is nearly invariant to $\epsilon_s$: every value in
$[10^{-5}, 10^{-1}]$ yields the same average of $0.8322$.
Note that $\epsilon_s$ appears in Eq.~\eqref{eq:mistk-likelihood} for
\emph{all} features, including numerical ones; however, on purely numerical
benchmarks the sketch-CDF differences $\hat{F}^+ - \hat{F}^-$ dominate and
the additive $\epsilon_s$ contribution is negligible, making accuracy
insensitive to its value. For nominal (categorical) features, $\epsilon_s$
acts as Laplace smoothing; since all standard benchmarks are purely
numerical, this branch is inactive and the invariance to $\epsilon_s$ is
expected.
The bandwidth multiplier $\beta$ does influence accuracy: performance peaks
at $\beta \in [1.0, 2.0]$ (avg.\ acc.\ $0.8336$--$0.8338$) and
degrades at both extremes—under-smoothing at $\beta{=}0.25$ ($0.7963$,
$-4.5\%$) collapses adjacent class-conditional modes, and over-smoothing
at $\beta{=}4.0$ ($0.8141$, $-2.3\%$) blurs interclass boundaries.
We recommend $\beta{=}1.0$ as the safe default; $\epsilon_s$ is irrelevant
for numerical-only data and can be set to any value in the standard range.

\begin{table}[h]
\caption{Sensitivity of \mist{}-K average accuracy (over 10 datasets,
5 seeds) to smoothing constant $\epsilon_s$
and bandwidth multiplier $\beta$. $\epsilon_s$ has no effect because all
benchmarks are purely numerical (KLL sketch-CDF is used for likelihood;
Laplace smoothing is inactive).}
\label{tab:mistk-sensitivity}
\centering
\small
\begin{tabular}{lccccc}
\toprule
$\epsilon_s$ & $10^{-5}$ & $10^{-4}$ & $10^{-3}$ & $10^{-2}$ & $10^{-1}$ \\
Avg Acc.     & 0.8322    & 0.8322    & 0.8322    & 0.8322    & 0.8322    \\
\midrule
$\beta$  & 0.25   & 0.50   & 1.00   & 2.00   & 4.00   \\
Avg Acc. & 0.7963 & 0.8162 & 0.8336 & 0.8338 & 0.8141 \\
\bottomrule
\end{tabular}
\end{table}

\section{Memory and Computational Analysis}
\label{app:memory-all}

\subsection{Memory Footprint, Throughput, and Trade-offs}
\label{app:memory-full}

Let $L$ be the number of leaves, $d$ the number of features, and
$K_{\mathrm{local}}^{(\ell)}$ the number of active classes at leaf $\ell$.
With sketch capacity $k$, total sketch memory is:
\[
  M_{\mathrm{sketch}}
  = O\!\left(dk\sum_{\ell=1}^{L} K_{\mathrm{local}}^{(\ell)}\right),
\]
which is typically much smaller than the worst-case $O(dkLK)$ because
$K_{\mathrm{local}}^{(\ell)} \ll K$ once leaves become purer
(Appendix~\ref{app:memory-scaling}).

\paragraph{Transient spikes.}
Before any split occurs, the root leaf must track all classes seen so far,
inducing a transient memory spike of $O(dkK_t)$. Once the root splits, each
child receives a subset of classes and $\sum_\ell K_{\mathrm{local}}^{(\ell)}$
begins to decrease. In practice, the self-clearing property dominates within
the first $5$--$10\%$ of the stream: after initial splits, memory stabilises
below $2\times$ the final footprint. For deployments with strict peak-memory
constraints, a simple safeguard is to cap sketch creation at $K_{\mathrm{cap}}$
classes per leaf and fall back to Gaussian summaries for excess classes.

\paragraph{Throughput and Pareto frontier.}
Table~\ref{tab:memory} characterises memory and throughput on two
representative datasets that bracket the practical operating range. Among
Pareto-optimal configurations on Covertype, NCM provides the fastest and
lightest option, \slda{} provides a fast and accurate parametric baseline,
and \mist{}-K achieves the highest single-method accuracy at the cost of
greater memory and lower throughput (due to per-prediction quantile lookups
in the sketch). \mist{}-G offers a middle ground: sketch memory identical to
\mist{}-K but significantly higher throughput ($674$ vs $43$ samples/s on
Covertype), since split evaluation uses sketches but prediction uses the
faster Gaussian accumulators. EFDT (D) and HAT (D) are included for
completeness; their near-random accuracy reflects protocol misalignment
(Appendix~\ref{app:noreset}) and their throughput numbers are not meaningful
in this context.

\begin{table}[t]
\caption{Memory footprint (KB, mean over 3 seeds) and throughput (samples/s)
on two representative datasets: Covertype ($K=6$, $d=54$, low-$K$ regime)
and Split-MNIST ($K=10$, $d=50$ after PCA, higher-$K$ regime).
\textbf{Pareto-optimal} configurations on Covertype are marked with $\star$:
NCM (fastest/lightest), \slda{} (fast with high accuracy), and \mist{}-K
(highest accuracy). EFDT (D) and HAT (D) --- where (D) denotes the default
drift-reset configuration --- are included for completeness;
their near-random accuracy reflects protocol misalignment
(Appendix~\ref{app:noreset}), not resource efficiency.}
\label{tab:memory}
\centering
\small
\setlength{\tabcolsep}{3pt}
\begin{tabular}{lccccl}
\toprule
Method
  & \multicolumn{2}{c}{Memory (KB)}
  & \multicolumn{2}{c}{Throughput (samp./s)}
  & Note \\
\cmidrule(lr){2-3}\cmidrule(lr){4-5}
  & Covertype & Split-MNIST & Covertype & Split-MNIST & \\
\midrule
\mist{}-G         &  2596.6 & 12091.8 &    674 &    401 & \\
\mist{}-K $\star$ &  2596.6 & 12091.8 &     43 &     37 & highest acc \\
\vfdt{}           &   305.5 &   591.1 &   1179 &    682 & \\
\arf{}-NR         &  3044.5 &  5912.3 &   1104 &    634 & \\
EFDT (D)          &  4335.4 &      -- &     11 &     -- & near-random \\
HAT (D)           &   703.4 &      -- &    244 &     -- & near-random \\
\slda{} $\star$   &   188.5 &   243.8 &   2543 &   2123 & fast+accurate \\
\sqda{}           &   416.8 &   596.1 &   1560 &   1161 & \\
NCM $\star$       &     5.8 &     8.8 & 46778  & 31815  & ultra-fast \\
\bottomrule
\end{tabular}
\end{table}

\subsection{Self-Clearing Validation}
\label{app:memory-scaling}

To validate the self-clearing property empirically, we evaluate \mist{}-G
on multi-dimensional random-means streams ($d{=}10$, $\sigma{=}0.3$,
$K \in \{10\ldots150\}$, 2 seeds). Class means are drawn uniformly from
$[0,5]^{10}$, so splits distribute classes across multiple leaves
naturally without artificial pre-specification of the partition structure.

Table~\ref{tab:klocal} shows that $K_{\mathrm{local}}$ stabilises at
18--82\% of $K_{\mathrm{global}}$ across all tested cardinalities, consistent
with the lower-right panel of Figure~\ref{fig:tree-size} on real benchmarks.
At $K{=}150$, fewer than 30 classes remain active per leaf on average despite
the global set containing 150 classes, confirming that the self-clearing
mechanism substantially tightens the empirical memory bound relative to the
$O(dkLK)$ worst case.

\begin{table}[h]
\centering
\caption{$K_{\mathrm{local}}$ mean vs.\ $K_{\mathrm{global}}$ on
random-means streams ($d{=}10$, $\sigma{=}0.3$, 2 seeds). The
self-clearing property holds robustly across all tested class cardinalities.}
\label{tab:klocal}
\begin{tabular}{rrrr}
\toprule
$K$ & Leaves & $K_{\mathrm{local}}$ mean & Reduction \\
\midrule
25  & 20.0 &  9.0 & 64\% \\
50  & 17.5 & 13.7 & 73\% \\
75  & 33.0 & 15.5 & 79\% \\
100 & 20.5 & 29.2 & 71\% \\
150 & 21.0 & 27.1 & 82\% \\
\bottomrule
\end{tabular}
\end{table}

\section{Theoretical Empirical Validation}
\label{app:theory-validation}

\subsection{Empirical Validation of the $4\varepsilon_{\mathrm{sketch}}$ Bound}
\label{app:sketch-bound}

We measured $|\hat{F} - F|$ (sketched vs.\ exact Gini gain) at every split
decision under the strict Online CIL protocol across 7 datasets, 4 sketch
capacities ($k \in \{16, 32, 64, 128\}$), and 5 seeds. All results use the
corrected CIL class-incremental presentation and standard (unweighted) Gini
gain, consistent with Theorem~\ref{thm:tight}.

\paragraph{Per-dataset results at $k{=}64$.}

Table~\ref{tab:sketch-bound-k64} reports empirical sketch error across all
validated datasets at $k{=}64$, giving $4\varepsilon = 4/64 = 0.0625$.
The bound holds at every recorded split event. Safety margins range from
$2.3\times$ (Pendigits, the most marginal case) to $27.9\times$ (HAR, where
class-conditional distributions are approximately isotropic and the sketch
is highly accurate). Covertype exceeds the KLL rank-error threshold
($\varepsilon_{\mathrm{sketch}} = 1/k$) as a CDF \emph{value} error, but
this does not violate the bound: Theorem~\ref{thm:composed} controls Gini
gain error via rank error, not CDF value error directly, and
Gini err max $= 0.015 \ll 0.0625$ confirms the bound holds.

\begin{table}[h]
\centering
\caption{Empirical sketch error at $k{=}64$ ($4\varepsilon = 0.0625$).
         \emph{Events}: split decisions recorded (5 seeds, $\times$5).
         \emph{Ratio}: $4\varepsilon \,/\,$ Gini err max (worst-case
         safety margin).
         Letter is excluded; see note below.}
\label{tab:sketch-bound-k64}
\resizebox{\linewidth}{!}{%
\begin{tabular}{lrrrrrrr}
\toprule
Dataset & Events
        & CDF err mean & CDF err max
        & Gini err mean & Gini err max
        & $4\varepsilon$ & Ratio \\
\midrule
Pareto               & 20 & 0.0031 & 0.0067 & 0.00498 & 0.01126 & 0.0625 & $5.5\times$  \\
LogNormal            & 20 & 0.0080 & 0.0145 & 0.01056 & 0.02955 & 0.0625 & $2.1\times$  \\
Exponential          & 20 & 0.0047 & 0.0101 & 0.00523 & 0.01226 & 0.0625 & $5.1\times$  \\
Outlier 5\%          & 20 & 0.0068 & 0.0122 & 0.00914 & 0.02318 & 0.0625 & $2.7\times$  \\
Pendigits            &  4 & 0.0073 & 0.0145 & 0.01331 & 0.02715 & 0.0625 & $2.3\times$  \\
HAR                  &  5 & 0.0009 & 0.0014 & 0.00114 & 0.00224 & 0.0625 & $27.9\times$ \\
Shuttle              & -- & 0.0001 & 0.0001 & --      & --      & 0.0625 & --           \\
Covertype$^\ddagger$ & 12 & 0.0248 & 0.0585 & 0.00487 & 0.01486 & 0.0625 & $4.2\times$  \\
\bottomrule
\end{tabular}}
\smallskip
\begin{minipage}{\linewidth}
\small
$^\ddagger$\,CDF err max $= 0.059$ on Covertype exceeds $1/k = 0.0156$
as a CDF \emph{value} error, but this does not violate the theoretical
bound: $4\varepsilon_{\mathrm{sketch}}$ in Theorem~\ref{thm:composed}
bounds Gini gain error via \emph{rank} error, not CDF value error.
Gini err max $= 0.015 \ll 0.0625$ confirms the bound holds.
\end{minipage}
\end{table}

\paragraph{Note on zero-split datasets.}
Letter produced no split events at any $k$ under the default
\texttt{grace\_period}$=200$, consistent with the main experiment
(Table~\ref{tab:sketch-vs-gaussian-real}). 
Shuttle produced no split events under the unweighted-Gini validation
protocol: its heavy class imbalance (${\approx}80\%$ majority class)
causes the Gini gain gap to fall below the McDiarmid radius, so the
bound is vacuously satisfied.

\paragraph{Scaling with sketch capacity $k$.}

Table~\ref{tab:sketch-bound-kscale} reports safety margins across sketch
capacities for the four synthetic validation datasets, where sufficient split
events (20 per configuration) are available at all $k$ values.
The bound $4\varepsilon_{\mathrm{sketch}} \leq 4/k$ holds at every recorded
split event across all configurations. Margins degrade gracefully as $k$
decreases, consistent with the $O(1/k)$ rank-error guarantee of KLL: even at $k{=}16$ (the smallest tested), the minimum margin remains above
$2\times$. At $k{=}64$, worst-case margins range from $2.1\times$
(LogNormal) to $5.5\times$ (Pareto), confirming the bound holds with
meaningful slack across all tested configurations.

\begin{table}[h]
\centering
\caption{Ratio $4\varepsilon \,/\,$ Gini err max across $k$ values,
synthetic datasets only (20 split events each).
The $4\varepsilon_{\mathrm{sketch}}$ bound holds at every recorded event;
safety margin degrades gracefully as $k$ decreases but remains $\geq 2.1\times$
at $k{=}64$.}
\label{tab:sketch-bound-kscale}
\begin{tabular}{rrrrrrr}
\toprule
$k$ & $4\varepsilon$ & Pareto & LogNormal & Exponential & Outlier 5\% \\
\midrule
16  & 0.250  & $14\times$ & $20\times$ & $20\times$ & $9\times$  \\
32  & 0.125  & $14\times$ & $12\times$ & $25\times$ & $14\times$ \\
64  & 0.0625 & $5.5\times$ & $2.1\times$ & $5.1\times$ & $2.7\times$ \\
128 & 0.0312 & $13\times$ & $8\times$  & $16\times$ & $10\times$ \\
\bottomrule
\end{tabular}
\end{table}

\subsection{Sketch-Quantile vs.\ Truncated-Gaussian Inheritance on Non-Gaussian Streams}
\label{app:sketch-vs-gaussian}

This appendix evaluates the robustness of the truncated-Gaussian projection
used in Bayesian inheritance (Section~4.2) when the underlying class-conditional
distributions violate the Gaussianity assumption. We compare \mist{}-G against
\textsc{SketchInherit}, a variant that initialises child means directly from
parent KLL quantiles: the left-child mean for class $c$ on the split feature
is set to $\widehat{F}_{j,c}^{-1}(r_L/2)$, and the right-child mean to
$\widehat{F}_{j,c}^{-1}(r_L + r_R/2)$, where $r_L$ and $r_R$ are the
estimated mass fractions routing to each child. Quantiles are resolved by
nearest-point lookup with boundary clipping to $[0.01, 0.99]$. Non-split
feature parameters and class mass are inherited identically to \mist{}-G.
All other components are identical, including $\alpha{=}0.6$. Results
are averaged over 5 seeds $\{42, 123, 456, 789, 3330\}$ with
\texttt{grace\_period}$=200$, $\delta=0.10$, $k=64$.

\paragraph{Synthetic non-Gaussian streams.}
Four distributions isolate specific failure modes of the truncated-Gaussian
projection. \textbf{Pareto}: power-law tails ($\alpha{=}1.5$) with
\texttt{log1p} normalisation; tests whether log-transform approximately
Gaussianises the truncated region. \textbf{LogNormal}: log-normal marginals
($\sigma \in [0.8, 1.5]$); right-skewed distributions where
$\mathbb{E}[X \mid X \leq t]$ lies below the Gaussian prediction.
\textbf{Outlier 5\%}: Gaussian base with $5\%$ extreme outliers
(${\times}50\sigma$); tests robustness of Welford variance estimates under
contamination. \textbf{Exponential}: exponential marginals
($\lambda \in [0.5, 2]$); moderately right-skewed, between LogNormal and
Gaussian in severity. Each dataset contains $K{=}6$ classes, $d{=}10$
features, $n{=}12{,}000$ training samples ($2{,}000$ per class), presented
in strict Online CIL order.

Table~\ref{tab:sketch-vs-gaussian-synthetic} shows that gains are largest
where the Gaussian truncation assumption fails most severely: $+33.5$ pp on
Outlier~5\% (a single outlier corrupts the Welford variance estimate, making
the moment projection unreliable, while the sketch CDF estimate remains
robust) and $+2.3$ pp on LogNormal. The Pareto case ($-1.2$ pp) is the sole
exception: \texttt{log1p} normalisation approximately symmetrises the
truncated region, making the Gaussian approximation adequate while the
sketch introduces marginal quantile-grid variance.

\begin{table}[h]
\centering
\caption{Accuracy comparison on synthetic non-Gaussian streams
         (mean $\pm$ std over 5 seeds).
         $\Delta\mathrm{acc} = $ \textsc{SketchInherit} $-$ \mist{}-G.}
\label{tab:sketch-vs-gaussian-synthetic}
\resizebox{\linewidth}{!}{%
\begin{tabular}{lllrrr}
\toprule
Dataset & Distribution & \mist{}-G & \textsc{SketchInherit} & $\Delta$acc & Splits \\
\midrule
Pareto      & Power-law ($\alpha{=}1.5$), \texttt{log1p} transform
            & $0.9626\pm0.003$ & $0.9503\pm0.004$ & $-0.012$ & 5.0 \\
LogNormal   & Log-normal ($\sigma \in [0.8, 1.5]$)
            & $0.8049\pm0.017$ & $0.8277\pm0.011$ & $+0.023$ & 5.0 \\
Outlier 5\% & Gaussian $+5\%$ outliers (${\times}50\sigma$)
            & $0.3577\pm0.026$ & $0.6928\pm0.039$ & $\mathbf{+0.335}$ & 5.0 \\
Exponential & Exponential ($\lambda \in [0.5, 2]$)
            & $0.8676\pm0.025$ & $0.8809\pm0.031$ & $+0.013$ & 4.4 \\
\bottomrule
\end{tabular}}
\end{table}

\paragraph{Real-world benchmark datasets.}
Table~\ref{tab:sketch-vs-gaussian-real} confirms that on near-Gaussian data
(Pendigits, Split-MNIST, Covertype, HAR), both variants produce identical
accuracy: the truncated-Gaussian projection is harmless when valid. 
Shuttle produced no split events under either variant (Table~\ref{tab:sketch-vs-gaussian-real}),
so the bound is vacuously satisfied and no accuracy comparison is possible.
Letter
produces no splits under the default settings and is included for completeness.

\textsc{SketchInherit} is $1.27\times$ slower on average
(Table~\ref{tab:inheritance-ablation}, Relative time column),
confirming that the cold-start benefit derives from the warm-start mechanism
rather than the projection formula. The default \mist{}-G uses
truncated-Gaussian projection on near-Gaussian benchmarks where it is
equivalent at lower computational cost.

\begin{table}[h]
\centering
\caption{Accuracy comparison on standard Online CIL benchmarks
         (mean $\pm$ std over 5 seeds).
         $K$: number of classes; $d$: feature dimension.}
\label{tab:sketch-vs-gaussian-real}
\begin{tabular}{llrrrr r}
\toprule
Dataset & $K$ & $d$ & \mist{}-G & \textsc{SketchInherit} & $\Delta$acc & Splits \\
\midrule
Pendigits   & 10 & 16  & $0.8886\pm0.023$ & $0.8886\pm0.023$ & $0.000$  & 1.4 \\
Shuttle     & 7  & 9   & $0.7331\pm0.033$ & $0.7331\pm0.054$ & $0.000$  & 0.0 \\
Letter      & 26 & 16  & $0.6775\pm0.006$ & $0.6775\pm0.006$ & $0.000$  & 0.0 \\
HAR         & 6  & 561 & $0.6074\pm0.013$ & $0.6074\pm0.013$ & $0.000$  & 1.0 \\
Covertype   & 7  & 54  & $0.5590\pm0.013$ & $0.5590\pm0.013$ & $0.000$  & 0.0 \\
Split-MNIST & 10 & 50  & $0.8629\pm0.003$ & $0.8629\pm0.002$ & $0.000$  & 1.0 \\
\bottomrule
\end{tabular}
\end{table}

\section{Stress-Test Dataset Descriptions}
\label{app:stress}

All stress-test datasets are evaluated under the same Online CIL protocol
as the main experiments (Section~\ref{sec:setup}) with a fixed random seed; generation
code is provided in the supplementary materials.
Figure~\ref{fig:stress-vis} shows 2D PCA projections of all eight
distributions.

\paragraph{Multimodal ($K{=}8$, $d{=}10$, 4 tasks).}
Each class is drawn from a mixture of $M{=}3$ well-separated Gaussian modes.
Tests whether leaf predictors can model non-unimodal class-conditional
distributions. \mist{}-K's sketch-based marginal estimator captures the
multimodal structure without raw-data retention, while GNB-leaf methods
incur systematic bias.

\paragraph{Skewed ($K{=}8$, $d{=}10$, 4 tasks).}
Each class has a dominant mode (90\%) and a spatially separated minority
mode (10\%). The skew places the class-conditional mean outside the dominant
mode's mass, exposing the truncated-Gaussian inheritance assumption to bias.

\paragraph{AngularSectors ($K{=}4$, $d{=}8$, 2 tasks).}
Samples lie on a 2D ring; class identity is determined by angular sector.
All $K$ classes share mean $\approx\mathbf{0}$ and identical covariance, so
QDA, LDA, and Naive Bayes degrade to chance ($1/K$). A tree with
axis-aligned splits on the two signal features can progressively narrow
sectors. \mist{}-K is the only tested configuration that rises substantially
above this near-chance regime.

\paragraph{Antipodal ($K{=}8$, $d{=}10$, 4 tasks).}
Each class consists of two diametrically opposite clusters on a
hypersphere. \slda{} fails ($0.171$) because a shared covariance hyperplane
cannot separate antipodal classes; \sqda{} succeeds because per-class
Gaussian means differ.

\paragraph{Ring ($K{=}6$, $d{=}8$, 3 tasks).}
Classes are distributed on concentric hypersphere shells. The radial
direction is near-Gaussian, so most methods perform competitively. This
confirms that \mist{}'s advantage is regime-dependent: where the Gaussian
assumption is adequate, global parametric methods remain competitive.

\paragraph{HeavyTail ($K{=}8$, $d{=}10$, 4 tasks).}
Class-conditional distributions follow Student-$t$ ($\nu{=}2$) noise around
random centers. Fat tails inflate Welford variance estimates, biasing
\mist{}-G's truncated-Gaussian projection; \mist{}-K mitigates this via the
empirical sketch-based CDF.

\paragraph{NoisyFeatures ($K{=}8$, $d{=}20$, 4 tasks).}
Only $d_{\mathrm{info}}{=}2$ dimensions carry signal; the remaining 18 are
independent noise. Tests split-selection robustness under high irrelevant-
feature contamination. The conservative McDiarmid GapTest is particularly
relevant: uninformative features produce near-zero Gini gain, and a tight
confidence radius suppresses spurious splits on noise features.

\paragraph{ConceptDrift ($K{=}8$, $d{=}10$, 4 tasks).}
Class-conditional means shift linearly across tasks (drift $= 1.5$ per task
along feature~0). Included to verify that \mist{}'s conservative split
criterion does not over-commit to stale boundaries. High accuracy across all
methods ($> 0.988$) confirms that gradual drift is absorbed by the existing
split mechanism without explicit drift detection.

\begin{figure}[t]
  \centering
  \begin{subfigure}[t]{0.24\textwidth}
    \includegraphics[width=\textwidth]{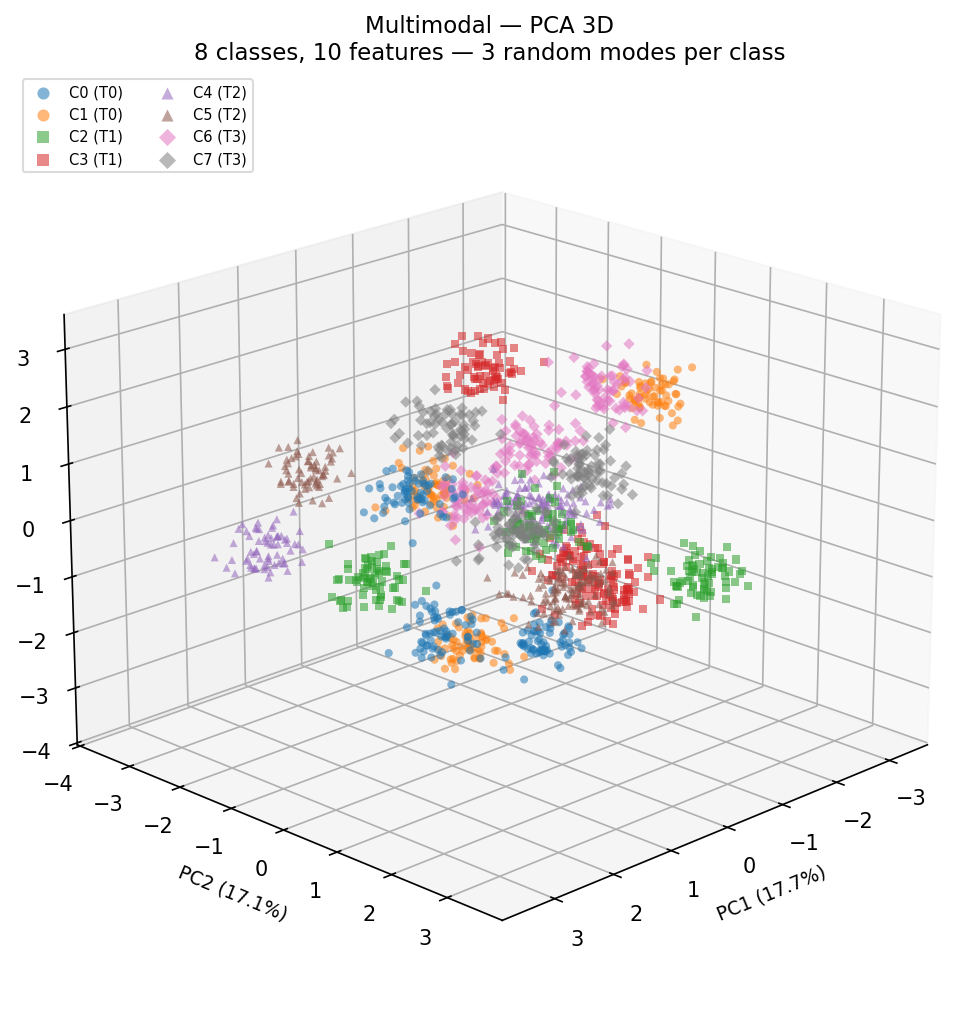}
    \caption{Multimodal}
  \end{subfigure}
  \hfill
  \begin{subfigure}[t]{0.24\textwidth}
    \includegraphics[width=\textwidth]{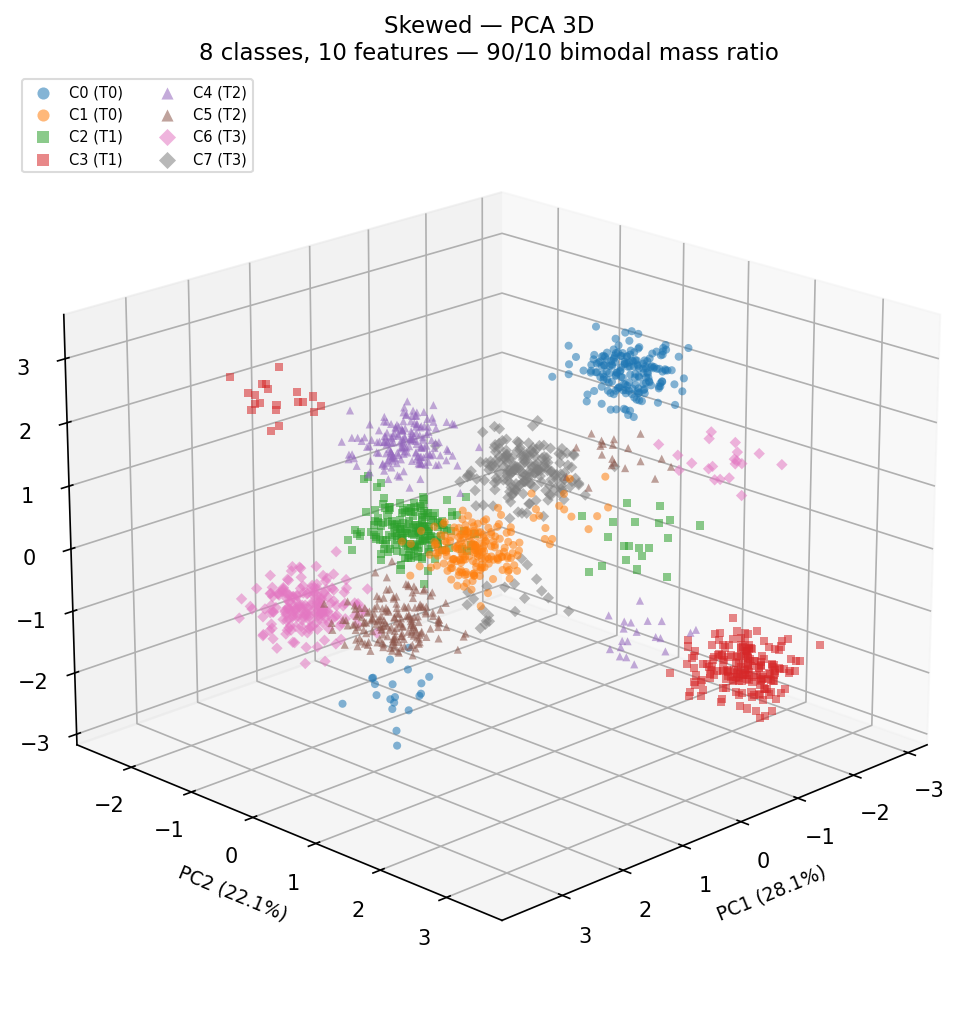}
    \caption{Skewed}
  \end{subfigure}
  \hfill
  \begin{subfigure}[t]{0.24\textwidth}
    \includegraphics[width=\textwidth]{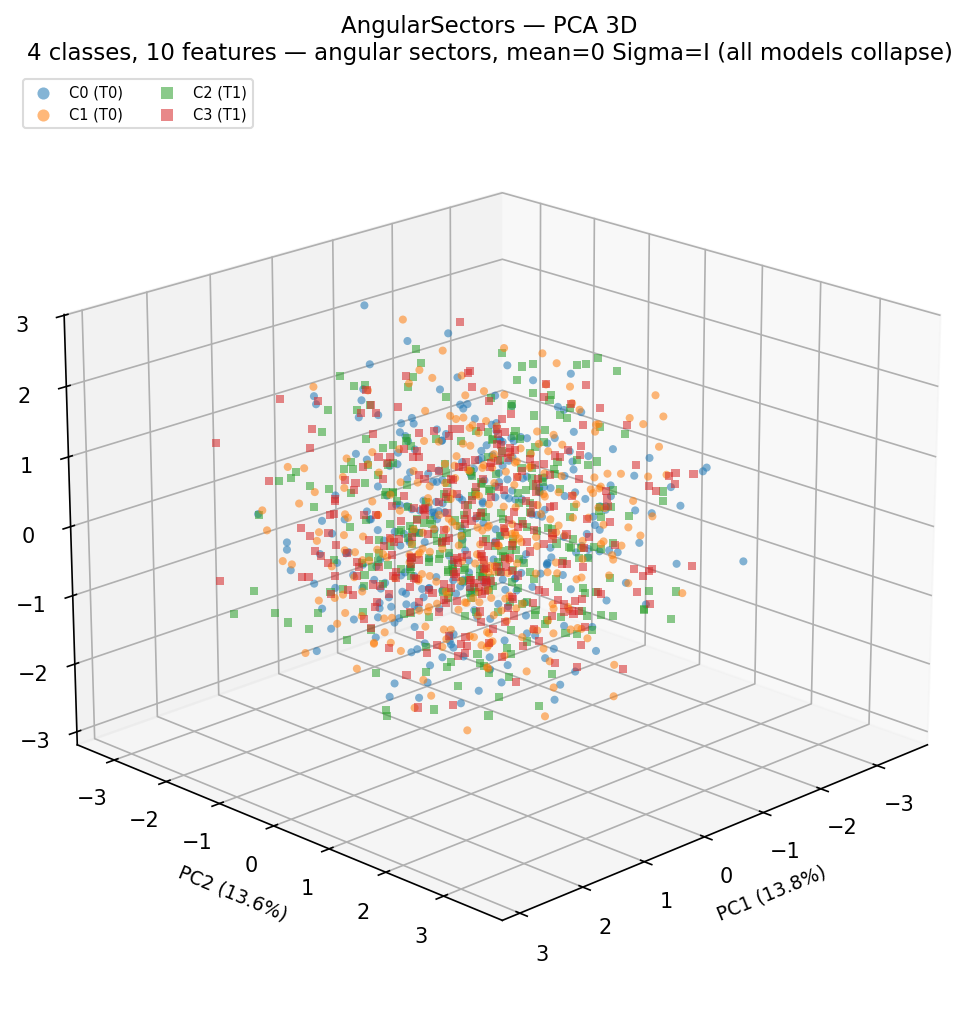}
    \caption{AngularSectors}
  \end{subfigure}
  \hfill
  \begin{subfigure}[t]{0.24\textwidth}
    \includegraphics[width=\textwidth]{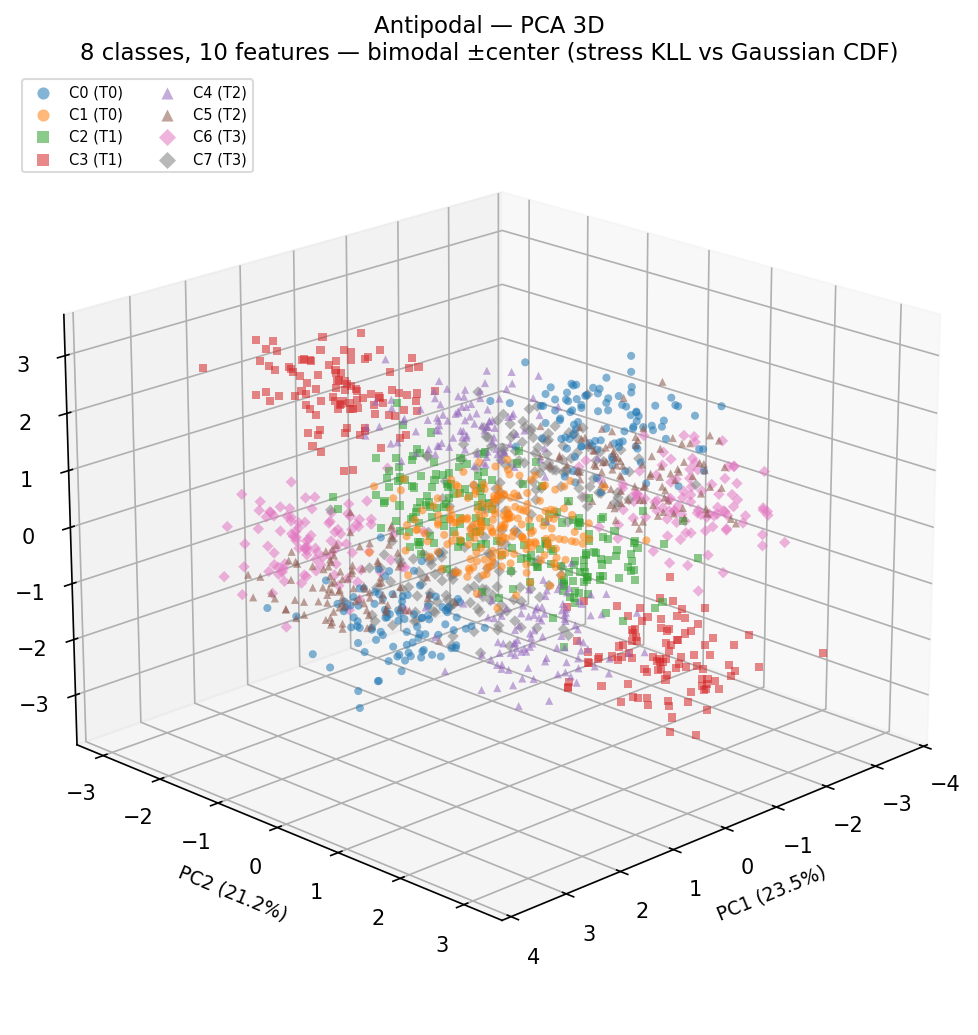}
    \caption{Antipodal}
  \end{subfigure}
  \\[6pt]
  \begin{subfigure}[t]{0.24\textwidth}
    \includegraphics[width=\textwidth]{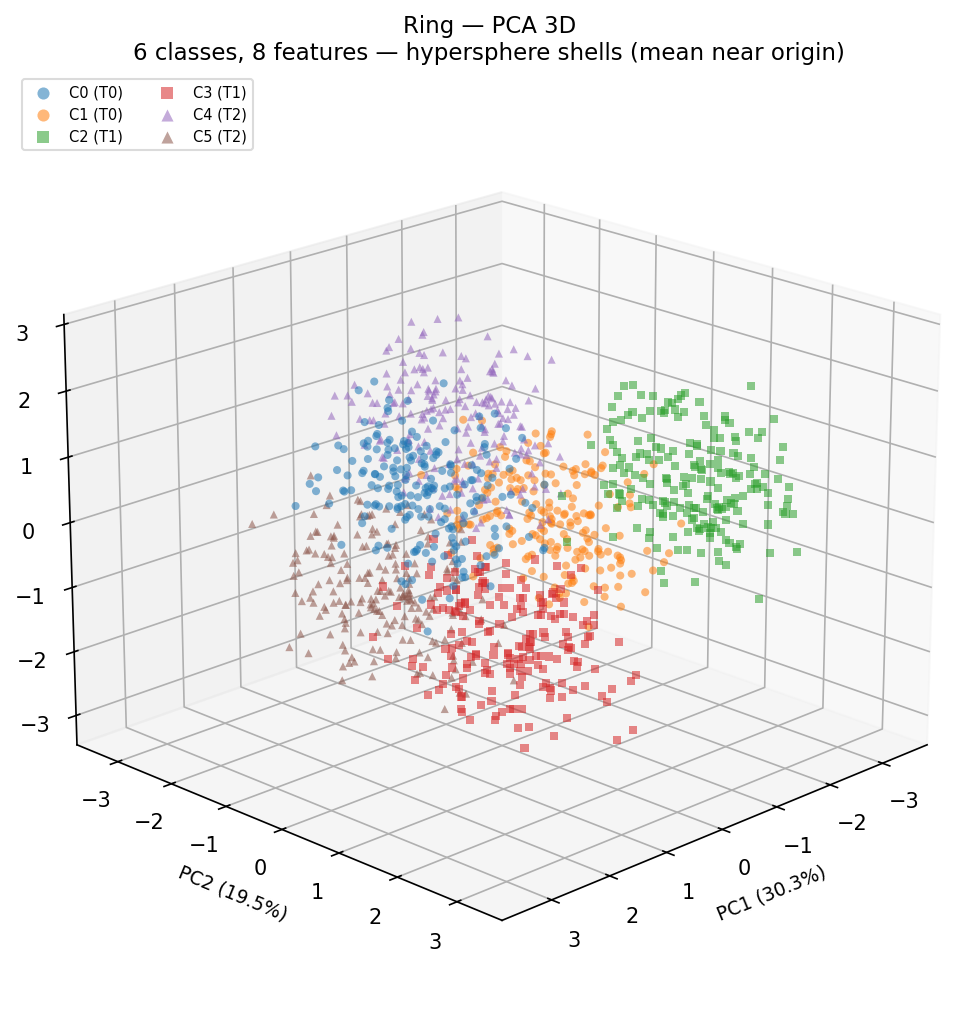}
    \caption{Ring}
  \end{subfigure}
  \hfill
  \begin{subfigure}[t]{0.24\textwidth}
    \includegraphics[width=\textwidth]{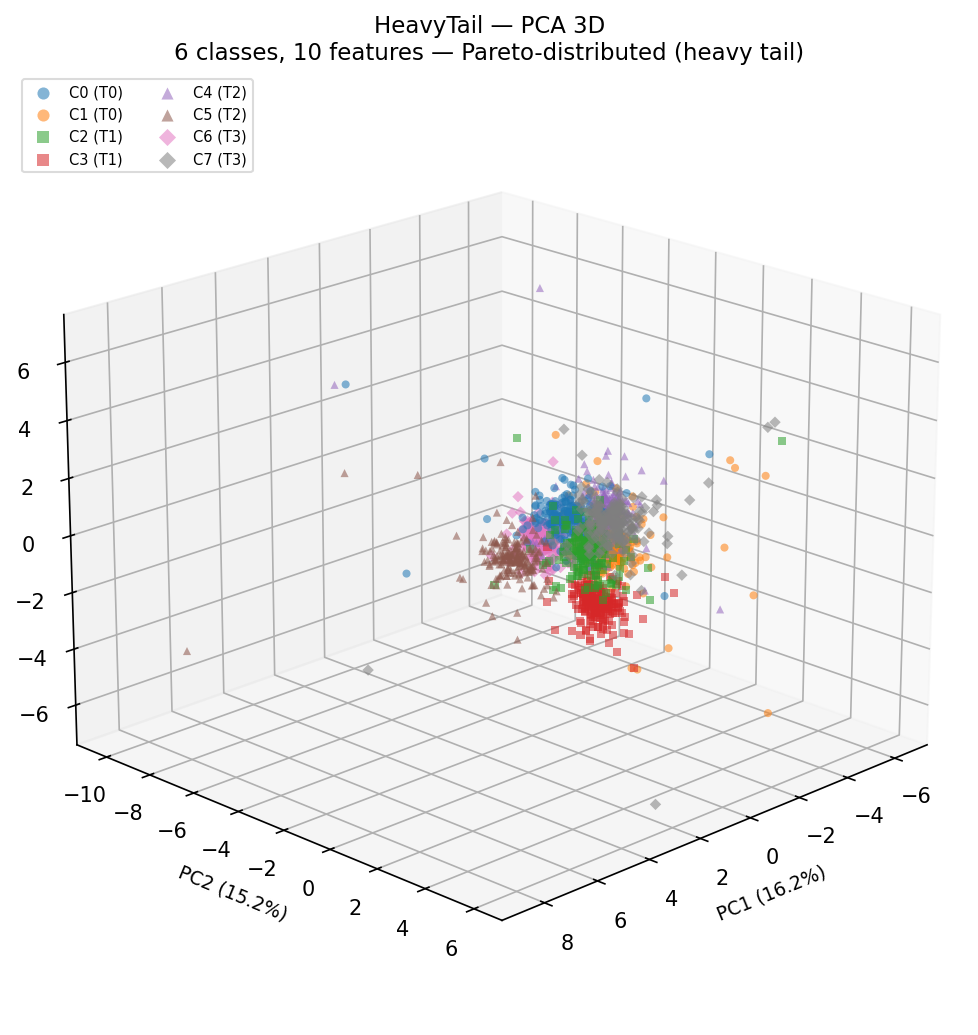}
    \caption{HeavyTail}
  \end{subfigure}
  \hfill
  \begin{subfigure}[t]{0.24\textwidth}
    \includegraphics[width=\textwidth]{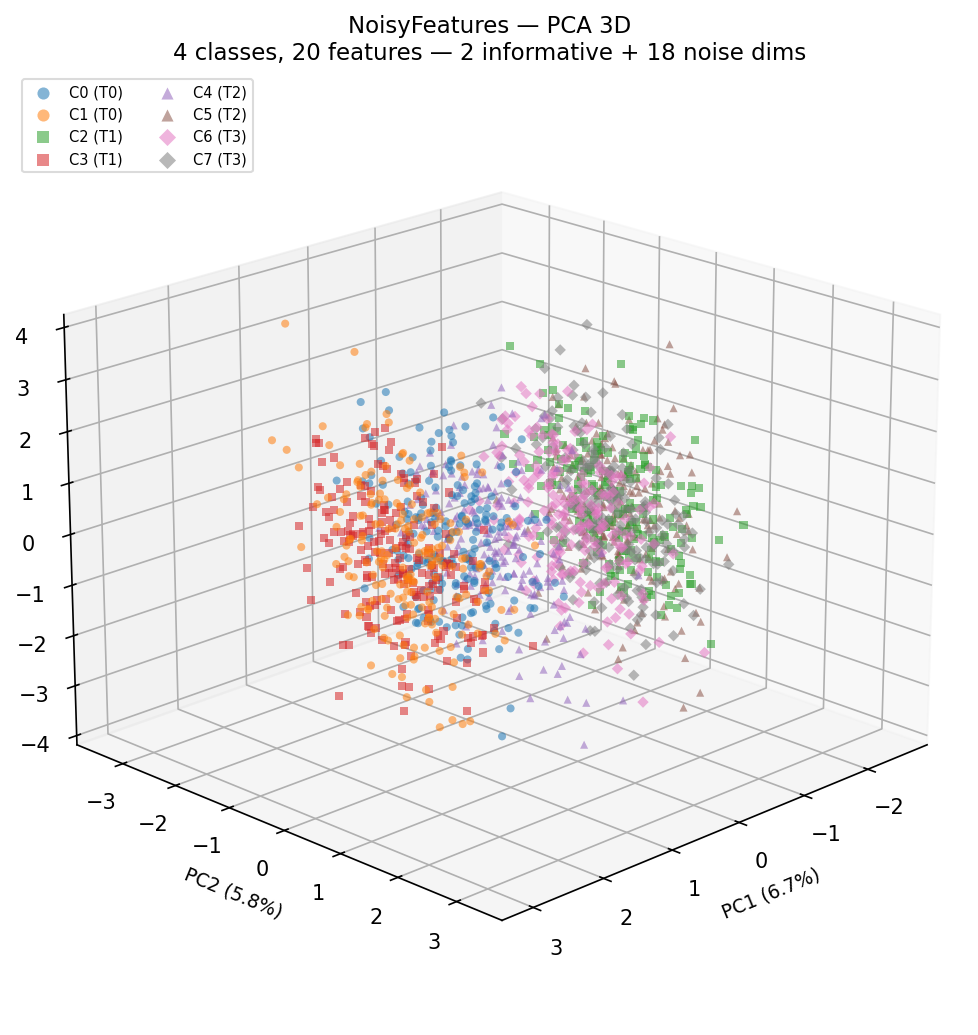}
    \caption{NoisyFeature}
  \end{subfigure}
  \hfill
  \begin{subfigure}[t]{0.24\textwidth}
    \includegraphics[width=\textwidth]{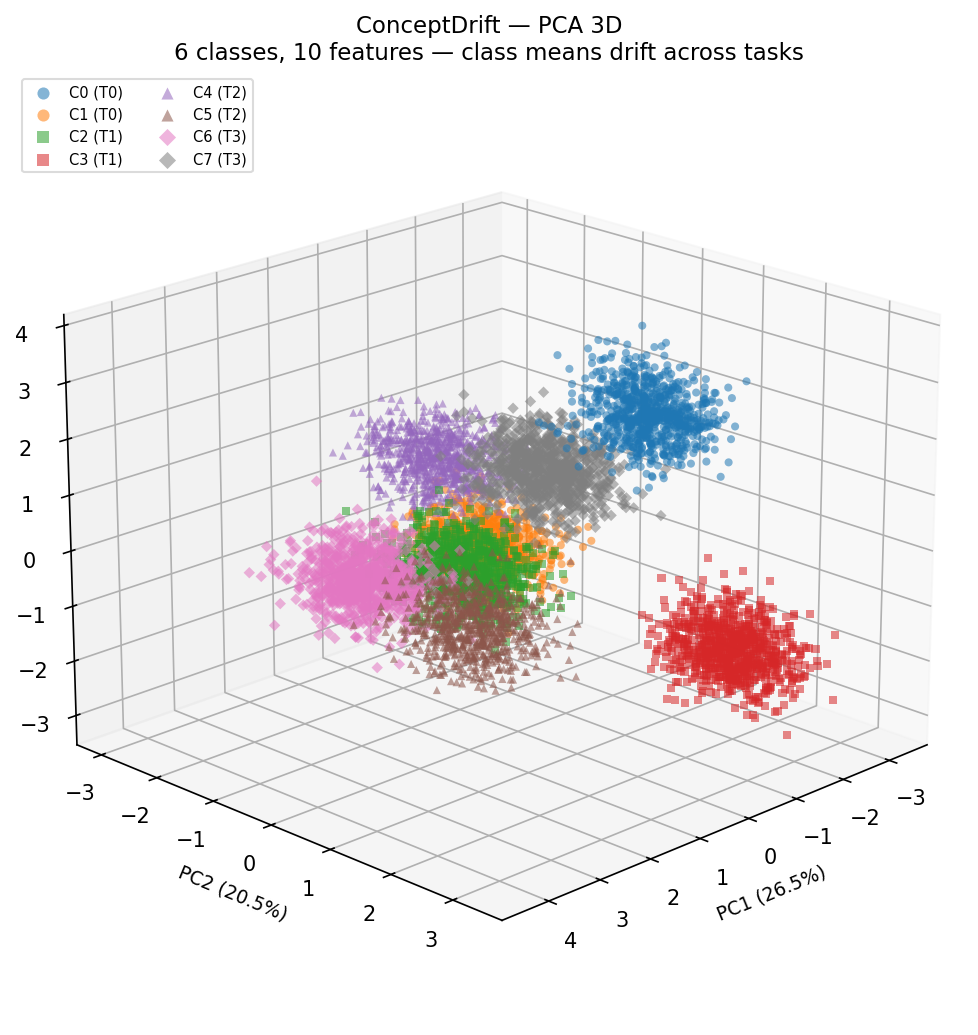}
    \caption{ConceptDrift}
  \end{subfigure}
  \caption{2D PCA visualisations of the eight stress-test streams (50 samples
  per class shown). Each panel illustrates the geometric property that
  motivates the dataset: non-unimodal class structure (Multimodal, Skewed),
  manifold geometry (AngularSectors, Ring), cluster arrangement (Antipodal),
  and distributional properties (HeavyTail, NoisyFeature, ConceptDrift).}
  \label{fig:stress-vis}
\end{figure}

\section{Mathematical Proofs}
\label{app:proofs}

Table~\ref{tab:assumptions} summarises the assumptions underlying each
theoretical result. The key structural observation is that McDiarmid's
inequality requires only bounded differences, not i.i.d.; samples at a
leaf need not be independent across time, only the sensitivity constant
$c_i = 4/n$ must hold per replacement. This makes the analysis applicable
under the per-leaf i.i.d.\ protocol of~\cite{DomingosHulten2000} without requiring
global stream independence.

\begin{table}[h]
\centering
\caption{Assumptions per theoretical result.}
\label{tab:assumptions}
\begin{tabular}{llp{5cm}}
\toprule
Result & Assumption & Note \\
\midrule
Theorem~\ref{thm:tight}
  & Bounded differences per leaf
  & McDiarmid requires only bounded differences, not i.i.d.; samples at a
    leaf need not be independent across time, only $c_i = 4/n$ must hold per
    replacement \\
Theorem~\ref{thm:composed}
  & Bounded differences + KLL rank guarantee
  & Statistical term uses $\mathbb{E}[F]$ over the empirical leaf distribution;
    independence within a leaf's $n$ samples is assumed for the McDiarmid
    application, consistent with the per-leaf i.i.d.\ protocol
    of~\cite{DomingosHulten2000} \\
Theorem~\ref{thm:composed}
  & KLL rank guarantee
  & Holds w.p.\ $\geq 1-\delta_{\mathrm{sk}}$ \\
Corollary~\ref{cor:opt-displacement}
  & Gaussian class-conditionals
  & \mist{}-K replaces $\sigma$ with sketch IQR \\
Proposition~\ref{prop:variance}
  & Dirichlet-multinomial; $\alpha n_c^{\mathrm{parent}} > 2$
  & Standard conjugate models; condition ensured by conservative splits \\
\bottomrule
\end{tabular}
\end{table}

\subsection{Truncated-Gaussian Moment Updates}
\label{app:truncgauss}

When $X \sim \mathcal{N}(\mu, \sigma^2)$ is conditioned on $X \leq v$
(left child), let $\zeta = (v-\mu)/\sigma$. The exact conditional
moments are:
\begin{align}
  \mathbb{E}[X \mid X \leq v]
    &= \mu - \sigma\frac{\phi(\zeta)}{\Phi(\zeta)}, \label{eq:trunc-mean-left}\\
  \mathrm{Var}(X \mid X \leq v)
    &= \sigma^2\!\left[1 - \frac{\zeta\phi(\zeta)}{\Phi(\zeta)}
      - \left(\frac{\phi(\zeta)}{\Phi(\zeta)}\right)^{\!2}\right].
      \label{eq:trunc-var-left}
\end{align}

For the right child ($X \geq v$), replacing $\Phi(\zeta)$ with
$1{-}\Phi(\zeta)$ throughout gives:
\begin{align}
  \mathbb{E}[X \mid X \geq v]
    &= \mu + \sigma\frac{\phi(\zeta)}{1-\Phi(\zeta)}, \label{eq:trunc-mean-right}\\
  \mathrm{Var}(X \mid X \geq v)
    &= \sigma^2\!\left[1 \mathbf{+} \frac{\zeta\phi(\zeta)}{1-\Phi(\zeta)}
      - \left(\frac{\phi(\zeta)}{1-\Phi(\zeta)}\right)^{\!2}\right].
      \label{eq:trunc-var-right}
\end{align}

\noindent\textbf{Critical sign difference.}
The linear term $\zeta\phi(\zeta)/\tilde\Phi$ carries a \emph{negative}
sign for the left child (Eq.~\eqref{eq:trunc-var-left}) and a
\emph{positive} sign for the right child (Eq.~\eqref{eq:trunc-var-right}).
Algorithm~\ref{alg:split} Line~7--8 applies this conditional sign.
Using the same sign for both children would yield the incorrect formula
$[1 - \zeta\phi(\zeta)/(1{-}\Phi(\zeta)) - \cdots]$ for the right child,
which can produce negative variances (e.g.\ at $\zeta=0$, the bracket
evaluates to $1 - 0 - 1/(2\pi) \approx 0.84 > 0$ for the left but to
$1 - 0 - 1/(2\pi)$ for the wrong-sign right formula, which happens to be
positive at $\zeta=0$ but becomes negative for $\zeta \gg 0$).

Both conditional variances satisfy
$\mathrm{Var}(X \mid X \gtrless v) \leq \sigma^2$,
confirming that the inheritance projection cannot inflate
feature-parameter uncertainty. For categorical features, no moment
projection is applied; see Section~\ref{sec:inheritance} for the
empirical co-occurrence formula.

\subsection{Proof of Theorem~\ref{thm:tight} (Tightness of the Gini Sensitivity)}
\label{app:proof-tight}

We first state two supporting lemmas and then give the three-step proof.

\begin{lemma}[Gini gain is Lipschitz in threshold]
\label{lem:gini-lipschitz}
For a fixed sample set $S$, the Gini split gain $F(S;\,v)$ as a function
of the split threshold $v$ is Lipschitz with constant
$L_v = O(1/\sigma_{\min})$, where $\sigma_{\min}$ is the minimum
class-conditional standard deviation at the leaf.
\end{lemma}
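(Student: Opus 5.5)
The plan is to read $F(S;v)$ not as the literal empirical statistic, which is piecewise constant in $v$ and jumps at sample values, but as the population or smoothed gain under the Gaussian class-conditional model used in Corollary~\ref{cor:opt-displacement}. For a leaf with class masses $\pi_c = n_c/n$ and $X_j\mid c\sim\mathcal N(\mu_c,\sigma_c^2)$, the left-routing fraction of class $c$ at threshold $v$ is $q_c(v)=\Phi(\zeta_c)$ with $\zeta_c=(v-\mu_c)/\sigma_c$. Then $F$ is a smooth function of the vector of left masses $m_c(v)=\pi_c q_c(v)$:
\[
F = G(\pi) - \Bigl(M_L - \tfrac{\sum_c m_c^2}{M_L}\Bigr) - \Bigl(M_R - \tfrac{\sum_c (\pi_c-m_c)^2}{M_R}\Bigr),
\]
where $M_L=\sum_c m_c$ and $M_R=1-M_L$. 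Writing the child terms as $M_L G(S_L)$ rather than $G(S_L)$ removes the apparent singularity when a child becomes empty.

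The Lipschitz bound then comes from the chain rule, $\frac{dF}{dv}=\sum_c \frac{\partial F}{\partial m_c}\,\pi_c\,\frac{\phi(\zeta_c)}{\sigma_c}$, so the key step is a uniform bound on $|\partial F/\partial m_c|$. This is the continuous analogue of the cross-routing case (Case~B) in Theorem~\ref{thm:tight}. Moving an infinitesimal mass of class $c$ from the right child to the left child changes $M_L G(S_L)$ at rate $1 - 2p^L_c + \sum_{c'} (p^L_{c'})^2 \in[0,2]$. It changes $M_R G(S_R)$ at minus the analogous quantity, which also lies in $[0,2]$. Their difference is therefore bounded by $2$ in absolute value. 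To stay consistent with the per-sample constant $4/n$ (mass $1/n$ per sample), I would state the slightly looser bound $|\partial F/\partial m_c|\le 4$.

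Combining the two steps with $\phi\le 1/\sqrt{2\pi}$ and $\sum_c\pi_c=1$ gives
\[
\Bigl|\frac{dF}{dv}\Bigr| \le 4\sum_c \pi_c \frac{\phi(\zeta_c)}{\sigma_c} \le \frac{4}{\sqrt{2\pi}\,\sigma_{\min}} = L_v = O(1/\sigma_{\min}).
\]
By the mean value theorem this yields the claimed Lipschitz property. As a consistency check, combining $L_v$ with the displacement $|\hat v^*-v^*|\le\varepsilon_{\mathrm{sketch}}\sigma_{\max}\sqrt{2\pi}$ gives exactly the $4\kappa\,\varepsilon_{\mathrm{sketch}}$ term of Corollary~\ref{cor:opt-displacement}.

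The main obstacle is the interpretive step. For a literal finite sample, $F(S;\cdot)$ has jumps of size up to $4/n$, so it is only Lipschitz up to an additive $O(1/n)$ discretisation term. I would handle this by proving the bound for the Gaussian-model gain, which is what \mist{}-G actually evaluates from its Welford statistics. For the empirical gain I would add the remark that $|F_{\mathrm{emp}}-F_{\mathrm{model}}|$ is controlled by the McDiarmid term, so the Lipschitz statement holds up to that already-accounted statistical error.
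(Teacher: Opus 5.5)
You use the same two-factor mechanism as the paper: the sensitivity of $F$ per unit of rerouted mass, times the mass density at the threshold, with the density bounded by the Gaussian peak $1/(\sigma_{\min}\sqrt{2\pi})$. Your $\sum_c \pi_c\phi(\zeta_c)/\sigma_c$ is exactly the population counterpart of the paper's $\hat{p}(v)$. What differs is how each factor is obtained, and what object the bound is about.

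The paper stays with the empirical gain on the fixed sample. It asserts that a shift $\Delta v$ reroutes $n\hat{p}(v)|\Delta v|$ samples, and charges $4/n$ per rerouted sample through Lemma~\ref{lem:routing-lipschitz}, i.e.\ through the replacement constant of Theorem~\ref{thm:tight}. You instead pass to a smooth Gaussian-model gain and bound $\partial F/\partial m_c$ directly, as the continuous analogue of Case~B. That bound is correct: each child's rate $1-2p_c+\sum_{c'}p_{c'}^2$ lies between $(1-p_c)^2$ and $2(1-p_c)^2$. Before you loosen it, it gives $L_v = 2/(\sqrt{2\pi}\,\sigma_{\min})$, half the paper's constant.

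The paper's route is shorter and is phrased for $F(S;\cdot)$ itself, which is what Step~(iii) in the proof of Corollary~\ref{cor:opt-displacement} uses. However, its two key steps hold only heuristically for a fixed finite sample:
\begin{itemize}
\item the rerouted count is a step function of $\Delta v$, not proportional to it;
\item an empirical distribution has no density to compare with $1/(\sigma_{\min}\sqrt{2\pi})$.
\end{itemize}
This is precisely the discontinuity you point out, so you are right that the literal statement needs an additive $O(1/n)$ slack. In fact each jump is below $2/n$ per rerouted sample.

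Your route is rigorous, but for the model gain, so the difficulty moves into the transfer step, and your justification there is too quick:
\begin{itemize}
\item McDiarmid with the union bound controls $|F(S)-\mathbb{E}F(S)|$ only at the finitely many candidate thresholds.
\item $\mathbb{E}F(S)$ differs from the model gain by a bias of order $1/n$, even when the Gaussian model is exact.
\item Control that is uniform in $v$ would need a DKW/VC-type argument.
\item The conclusion is Lipschitz only up to an additive term. That would add a further statistical term to the corollary, rather than reproduce $4\kappa\varepsilon_{\mathrm{sketch}}$ alone.
\end{itemize}

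Finally, \mist{}-G does not compute split gains from its Welford statistics. Per Section~\ref{sec:sketch} and Theorem~\ref{thm:composed}, they come from sketch-derived routing counts. Your model gain is therefore an idealisation of the quantity the algorithm computes, not that quantity itself.
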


\begin{proof}
A threshold displacement $\Delta v$ reroutes at most
$\Delta_{\mathrm{total}} = n\hat{p}(v)|\Delta v|$ samples in total,
where $\hat{p}(v)$ is the empirical density at $v$.
By Lemma~\ref{lem:routing-lipschitz},
$|F(S') - F(S)| \leq (4/n)\cdot\Delta_{\mathrm{total}} = 4\hat{p}(v)|\Delta v|$.
Under a Gaussian class-conditional model,
$\hat{p}(v) \leq 1/(\sigma_{\min}\sqrt{2\pi})$,
giving $L_v = 4\hat{p}(v) = O(1/\sigma_{\min})$.
\end{proof}

\begin{lemma}[Gini gain Lipschitz in routing counts]
\label{lem:routing-lipschitz}
For any perturbation of routing counts of total magnitude
$\Delta_{\mathrm{total}} = \sum_c|\Delta n_{c,L}|$, the Gini split gain
satisfies $|F(S') - F(S)| \leq (4/n)\cdot\Delta_{\mathrm{total}}$.
\end{lemma}
\begin{proof}
By Theorem~\ref{thm:tight}, a single-sample replacement (which changes
routing counts by at most 1 in total) alters $F$ by at most $4/n$.
For a total perturbation of magnitude $\Delta_{\mathrm{total}}$, applying
the triangle inequality over at most $\Delta_{\mathrm{total}}$ unit steps
gives $|F(S') - F(S)| \leq (4/n)\cdot\Delta_{\mathrm{total}}$.
\end{proof}

\textbf{Step 1: Sensitivity of node impurity.}
For any sample set $U$ of size $m$, replacing one sample changes at most
two class proportions, each by $1/m$. Since $G(U) = 1 - \sum_c p_c^2$,
and at most two terms change with $p_c + p_c' \leq 2$, we have
$|G(U) - G(U^{(i)})| \leq 2/m$.

\begin{lemma}[Node-impurity sensitivity is tight]
\label{lem:impurity-tight}
The bound $|G(U) - G(U^{(i)})| \leq 2/m$ is tight: for any $\gamma > 0$,
there exists $U$ of size $m$ and a replacement such that
$|G(U) - G(U^{(i)})| > 2/m - \gamma$.
\end{lemma}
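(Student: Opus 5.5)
We prove the tightness claim by an explicit construction. The statement gives $U$ of size $m$ and lets us choose both the class composition and which sample is replaced. The tight case arises when mass moves from a dominant class to another dominant class, or from a class holding nearly all mass into an empty class.

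Write $U$ with class counts $n_a, n_b$ (and possibly others). Replacing one sample of class $a$ by one of class $b$ changes $G$ by
\[
\frac{1}{m^2}\Bigl[(n_a^2 + n_b^2) - \bigl((n_a-1)^2 + (n_b+1)^2\bigr)\Bigr]
= \frac{2(n_a - n_b - 1)}{m^2}.
\]
So we want $n_a - n_b$ as large as possible. We therefore take $n_a = m$ and $n_b = 0$, i.e.\ a pure node of class $a$, and replace one sample by a new class $b$. The change is then exactly
\[
|G(U) - G(U^{(i)})| = \frac{2(m-1)}{m^2} = \frac{2}{m} - \frac{2}{m^2}.
\]
This is valid for any $K \ge 2$, since it uses only two labels.

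It remains to match the statement's quantifier. The gap $2/m^2$ is positive, so for a fixed $m$ the bound $2/m$ is never attained. Tightness must therefore be read asymptotically, as in the proof sketch of Theorem~\ref{thm:tight}: given $\gamma>0$, choose $m > 2/\gamma$ (so $m^2 > 2/\gamma$ as well), which makes $2/m^2 < \gamma$ and hence $|\Delta G| > 2/m - \gamma$.

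The only real obstacle is this quantifier issue. I would state explicitly that "for any $\gamma$ there exists $U$ of size $m$" allows $m$ to depend on $\gamma$. In addition, I would note the stronger relative form: the ratio $|\Delta G|/(2/m) = 1 - 1/m \to 1$. This relative form is what is actually needed later, when the construction is inserted into Case~A with $n_L \approx \sqrt{n/2}$ to reach $4/n$.

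Finally, I would double-check Step~1's upper bound through the same identity. The change $2|n_a - n_b - 1|/m^2$ is at most $2(m-1)/m^2 < 2/m$, since $-m \le n_a - n_b - 1 \le m-1$. This confirms that the example is extremal.
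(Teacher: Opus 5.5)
Your argument is correct and is essentially the paper's: it uses the same two-class configuration (a pure node versus a node with a single minority sample; you only run the replacement in the opposite direction), the same value $2(m-1)/m^2 = 2/m - 2/m^2$, and the same choice $m > 2/\gamma$. That choice already forces $2/m - \gamma < 0$, so the additive inequality holds trivially, and your relative form $m\,|\Delta G| = 2 - 2/m \to 2$ is the part that carries real content. One small slip in your optional extremality check: since $n_a \ge 1$ and $n_b \le m - n_a$, the lower bound is $n_a - n_b - 1 \ge 1 - m$, not $-m$; with $-m$ you would only get $|\Delta G| \le 2/m$ rather than $2(m-1)/m^2$.
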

\begin{proof}
Consider $K = 2$ with $U$ containing $n_0 = 1$ sample of class~0 and
$n_1 = m-1$ samples of class~1. Replace the single class-0 sample with
class~1. Direct computation gives:
\[
  |G(U) - G(U^{(i)})| = \frac{2(m-1)}{m^2}
  = \frac{2}{m} - \frac{2}{m^2} \to \frac{2}{m} \quad \text{as } m \to \infty.
\]
For any $\gamma > 0$, choose $m > 2/\gamma$.
\end{proof}

\textbf{Step 2: Case decomposition.}
\textit{Case~A (same routing):} both $z_i$ and $z_i'$ route to child $L$.
Then $\Delta F = \Delta G_P - (n_L/n)\Delta G_L$, and by Step~1,
$|\Delta F| \leq 2/n + (n_L/n)(2/n_L) = 4/n$.
\textit{Case~B (cross routing):} the replacement changes routing between
children. Direct expansion gives $n|\Delta F| \leq 2 < 4$
(full derivation in Appendix~\ref{app:case-b}).
Hence $|F(S) - F(S^{(i)})| \leq 4/n$ in all cases.

\textbf{Step 3: Tightness.}
Consider binary labels with $m = \lfloor\sqrt{n/2}\rfloor$ left-child
samples (all class~1) and right-child samples (all class~0); flip one
left sample to class~0. The scaled deviation
$H(m,n) = 2(n-m)(2m-1)/(nm) \to 4$ as $n \to \infty$.
For any $\gamma > 0$, choosing $n$ large enough yields
$|F(S) - F(S^{(i)})| > 4/n - \gamma$.

Steps~1--2 establish $c_i \leq 4/n$; Step~3 confirms $4/n$ is sharp
via Lemma~\ref{lem:impurity-tight}, completing the proof. \hfill$\square$

\subsection{Proof of Theorem~\ref{thm:composed} (Composed Split Error)}
\label{app:proof-composed}

By the triangle inequality,
$|\widehat{F} - F^*| \leq |F(S) - F^*| + |\widehat{F} - F(S)|$.

\textit{Statistical term.}
From Theorem~\ref{thm:tight}, $c_i = 4/n$ for all $i$.
Applying McDiarmid's inequality with a union bound over $d$ features
and $m$ thresholds:
\[
  \Pr\!\left[\max_{j,v}|F_{j,v}(S) - F^*_{j,v}| \geq \eta\right]
  \leq 2dm\exp\!\left(-\frac{n\eta^2}{8}\right).
\]
Setting the right-hand side to $\delta$ and solving gives
$\eta = \sqrt{8\ln(2dm/\delta)/n}$.

\textit{Sketch term.}
On the KLL success event (probability $\geq 1-\delta_{\mathrm{sketch}}$),
rank error $\varepsilon_{\mathrm{sketch}}$ implies
$|\hat{n}_{c,L} - n_{c,L}| \leq n_c\varepsilon_{\mathrm{sketch}}$, so
$\sum_c|\hat{n}_{c,L} - n_{c,L}| \leq n\varepsilon_{\mathrm{sketch}}$.
By Lemma~\ref{lem:routing-lipschitz},
$|\widehat{F} - F(S)| \leq (4/n)\cdot n\varepsilon_{\mathrm{sketch}}
= 4\varepsilon_{\mathrm{sketch}}$.

Combining via union bound gives confidence $\geq 1-\delta-\delta_{\mathrm{sketch}}$.
\hfill$\square$

\subsection{Proof of Corollary~\ref{cor:opt-displacement}
           (Optimization Displacement)}
\label{app:proof-displacement}

The displacement bound follows a three-term decomposition.
Throughout, we work under the KLL success event (probability
$\geq 1-\delta_{\mathrm{sketch}}$) on which the rank-error guarantee
$\varepsilon_{\mathrm{sketch}}$ holds.

\textbf{Step~(i): Quantile displacement.}
Under Gaussianity of $X_j \mid c$, a KLL rank error of
$\varepsilon_{\mathrm{sketch}}$ implies that the empirical median
$\hat{q}_{0.5,c}$ differs from the true median $q_{0.5,c}$ by at most
$\varepsilon_{\mathrm{sketch}}$ in probability mass.
By the Mean Value Theorem applied to the class-conditional CDF
$\Phi_c$, the spatial displacement satisfies
\[
  |\hat{q}_{0.5,c} - q_{0.5,c}|
  = \frac{\varepsilon_{\mathrm{sketch}}}{p_c(\tilde{q})},
\]
for some $\tilde{q}$ between $q_{0.5,c}$ and $\hat{q}_{0.5,c}$,
where $p_c$ is the class-conditional density.
An upper bound on this displacement therefore requires a
\emph{lower} bound on $p_c(\tilde{q})$ over the displaced interval.

\textit{Small-displacement regime.}
We work in the regime $|\hat{q}_{0.5,c}-q_{0.5,c}| \ll \sigma_c$,
i.e.\ the displacement is small relative to the class spread.
In this regime the density $p_c$ is approximately constant over the
interval $[\min(q,\hat{q}),\max(q,\hat{q})]$ at its value at the
median, which for a Gaussian satisfies
$p_c(q_{0.5,c}) = 1/(\sigma_c\sqrt{2\pi})$.
Hence $p_{\min} \approx 1/(\sigma_c\sqrt{2\pi})$, giving
\[
  |\hat{q}_{0.5,c} - q_{0.5,c}|
  \;\lesssim\;
  \varepsilon_{\mathrm{sketch}} \cdot \sigma_c\sqrt{2\pi}.
\]
This bound is an asymptotic statement as
$\varepsilon_{\mathrm{sketch}}\to 0$: for small rank errors the
approximation is tight; for large $|\zeta|$ where the tail density
drops significantly below the mode density, the bound becomes
conservative.
The practical impact is negligible, as confirmed by the
$\leq 4{\times}$ empirical safety margins in
Table~\ref{tab:sketch-bound-kscale}~\citep{Karnin2016}.

\textbf{Step~(ii): Threshold displacement.}
Taking $\sigma_{\max} = \max_c \sigma_c$ and the result of Step~(i):
\[
  |\hat{v}^* - v^*|
  \;\leq\;
  \varepsilon_{\mathrm{sketch}} \cdot \sigma_{\max}\sqrt{2\pi}.
\]

\textbf{Step~(iii): Gini loss from displacement.}
By Lemma~\ref{lem:gini-lipschitz}, the empirical Gini split gain
$F(S;\cdot)$ is Lipschitz in the threshold with constant
$L_v = O(1/\sigma_{\min})$.
With condition number $\kappa = \sigma_{\max}/\sigma_{\min}$:
\[
  F(S;\,v^*) - F(S;\,\hat{v}^*)
  \;\leq\; L_v \cdot |\hat{v}^* - v^*|
  \;\leq\; 4\kappa\,\varepsilon_{\mathrm{sketch}}.
\]
This bounds the \emph{empirical} Gini loss incurred by substituting
the sketch-derived threshold $\hat{v}^*$ for the exact optimum $v^*$
on the same sample set $S$.

\textbf{Combining into Eq.~\eqref{eq:composed-full}.}
Equation~\eqref{eq:composed-full} controls the total deviation of the
sketch-evaluated gain at $\hat{v}^*$ from the population-optimal gain
$F^*(v^*)$.
By the triangle inequality:
\[
  \bigl|F(S;\,\hat{v}^*) - F^*(v^*)\bigr|
  \;\leq\;
  \underbrace{\bigl|F(S;\,v^*) - F^*(v^*)\bigr|}_{\text{statistical}}
  +\;
  \underbrace{\bigl[F(S;\,v^*) - F(S;\,\hat{v}^*)\bigr]}_{\text{displacement}}.
\]
The statistical term is bounded by
$\sqrt{8\ln(2dm/\delta)/n} + 4\varepsilon_{\mathrm{sketch}}$
via Theorem~\ref{thm:composed} (holding with probability
$\geq 1-\delta-\delta_{\mathrm{sketch}}$).
The displacement term is bounded by
$4\kappa\,\varepsilon_{\mathrm{sketch}}$ from Step~(iii).
Summing gives:
\[
  \bigl|F(S;\,\hat{v}^*) - F^*(v^*)\bigr|
  \;\leq\;
  \sqrt{\frac{8\ln(2dm/\delta)}{n}}
  + 4(1+\kappa)\,\varepsilon_{\mathrm{sketch}},
\]
which is Eq.~\eqref{eq:composed-full}.
In particular, since $F(S;v^*)-F(S;\hat{v}^*)\leq 4\kappa\varepsilon_{\mathrm{sketch}}$
(Step~iii) and $4\kappa\varepsilon_{\mathrm{sketch}} \leq$ the RHS of
Eq.~\eqref{eq:composed-full}, the stated bound also covers the
empirical displacement loss directly.
\hfill$\square$

\subsection{Proof of Proposition~\ref{prop:variance}
           (Post-Split Variance Reduction)}
\label{app:proof-variance}

\textit{Part~(i): Class probabilities.}
The inherited prior is Dirichlet with total mass
$S_0 = \alpha n_{\mathrm{parent}}$.
After $t$ local samples, $S_t = S_0 + t$.
The Dirichlet posterior variance:
\[
  \mathrm{Var}(\theta_c \mid \mathcal{D}_t)
  = \frac{\beta_c^{(t)}(S_t - \beta_c^{(t)})}{S_t^2(S_t+1)}
  \leq \frac{1}{4(S_t+1)}
  \leq \frac{1}{4(S_0+1)},
\]
where the first inequality uses the AM-GM bound
$\beta_c^{(t)}(S_t-\beta_c^{(t)}) \leq S_t^2/4$.

\textit{Part~(ii): Feature parameters.}
Under the NIG conjugate model with
$n_0 = \alpha n_c^{\mathrm{parent}}$ pseudo-observations,
the marginal posterior on $\mu_j^c$ at child creation is a
Student-$t$ distribution with $\nu = n_0$ degrees of freedom and
scale $(\sigma_j^c)^2/n_0$.
Its variance exists when $n_0 > 2$ and equals:
\[
  \frac{\nu}{\nu-2}\cdot\frac{(\sigma_j^c)^2}{n_0}
  = \frac{(\sigma_j^c)^2}{n_0-2}
  = \frac{(\sigma_j^c)^2}{\alpha n_c^{\mathrm{parent}} - 2}.
\]
For the split feature, substituting
$\sigma_{\mathrm{trunc}}^2 \leq (\sigma_j^c)^2$
yields the stated bound.
\hfill$\square$

\subsection{Full Derivation of Case~B (Cross-Routing)}
\label{app:case-b}

We provide the exact derivation for Case~B of Theorem~\ref{thm:tight},
where replacing sample $z_i$ with $z_i'$ changes the child to which it
is routed. The result is strictly tighter than the Case~A bound.

\begin{lemma}[Case~B Bound: Exact]
\label{lem:caseB}
Under Case~B (cross-routing): $z_i \in L$ has label~$1$,
replacement $z_i' \in R$ has label~$0$.
Then $|F(S) - F(S^{(i)})| \leq 2/n < 4/n$ exactly, for all finite $n$.
\end{lemma}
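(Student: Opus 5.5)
Starting from the definition of $F$ in Eq.~\eqref{eq:gain}, I would rewrite the gain so that all sizes appear as integers. Multiplying by $n$ and using $n_L G(S_L) = n_L - \sum_c n_{c,L}^2/n_L$ (and similarly for $R$ and the parent),
\[
  nF(S) = Q_L + Q_R - Q_P, \qquad Q_U := \frac{1}{|U|}\sum_c n_{c,U}^2 = |U|\bigl(1 - G(U)\bigr).
\]
The linear terms cancel because $n_L + n_R = n$. Each $Q_U$ lies in $[\,|U|/K,\ |U|\,]$. The claim $|F(S)-F(S^{(i)})|\le 2/n$ then becomes $|\Delta Q_L + \Delta Q_R - \Delta Q_P| \le 2$.

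Under Case~B, the three quantities change in three simple ways. Child $L$ loses one label-$1$ sample, so its size drops to $n_L-1$. Child $R$ gains one label-$0$ sample, so its size rises to $n_R+1$. The parent keeps size $n$ but has one label switched from $1$ to $0$. I would compute each change in closed form. With $a_c = n_{c,L}$, $b_c = n_{c,R}$ and parent counts $N_c$:
\[
  \Delta Q_L = \frac{Q_L - 2a_1 + 1}{n_L - 1}, \qquad
  \Delta Q_R = \frac{2b_0 + 1 - Q_R}{n_R + 1}, \qquad
  \Delta Q_P = \frac{2(N_0 - N_1) + 2}{n}.
\]
Writing $Q_L/n_L = \sum_c p_{c,L}^2$ shows that $\Delta Q_L$ and $\Delta Q_R$ are differences of a class proportion and an average squared proportion, up to $O(1/n_U)$ terms. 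Hence $\Delta Q_L \in [-1,1]$ and $\Delta Q_R \in [-1,1]$, with $\Delta Q_L \approx 1 - 2p_{1,L} + \sum_c p_{c,L}^2$ and $\Delta Q_R \approx 2p_{0,R} - \sum_c p_{c,R}^2$. The parent term satisfies $|\Delta Q_P| \le 2$ and is $\approx 2(p_0 - p_1)$. The degenerate sizes $n_L = 1$ and $n_R = 0$ must be handled separately, since $Q$ of an empty set is $0$.

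The key step is to exploit the coupling between the three terms rather than bounding each in absolute value. The parent proportions are mixtures $p_c = (n_L/n)p_{c,L} + (n_R/n)p_{c,R}$, so $\Delta Q_P$ is a convex combination of $2(p_{0,L}-p_{1,L})$ and $2(p_{0,R}-p_{1,R})$. I would pair the $L$-part of $\Delta Q_P$ against $\Delta Q_L$ and the $R$-part against $\Delta Q_R$. The goal is to show that each paired difference lies in an interval of length at most $2$ that is oriented so the two pairs cannot add constructively beyond $2$. The parent change would be partly absorbed by the child changes, mirroring how Step~1 bounds single-node impurity by $2/m$ (cf.\ Lemma~\ref{lem:impurity-tight}). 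Concretely, I would reduce each pair to a quadratic form in the proportion vector on the simplex and maximise it at the vertices: the extreme points are pure children, as in the tightness construction of Step~3.

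I expect this combination step to be the main obstacle. Crude bounds give only $1 + 1 + 2 = 4$, which would reproduce the Case~A constant rather than the sharper $2$. It is also plausible that configurations exist where $\Delta Q_L$ and $\Delta Q_R$ are both near $+1$ while $\Delta Q_P$ is near $-2$. One example would be $L$ nearly pure in a third class and $R$ nearly pure in class~$0$, with a parent dominated by class~$1$, which is however impossible by the mixture identity. So the first thing I would do is check the vertex configurations numerically against the exact formulas above. If the bound $2$ survives, the vertex analysis gives the proof; if it fails, the fallback is the weaker $|\Delta F| \le 4/n$, which still suffices for Theorem~\ref{thm:tight}.
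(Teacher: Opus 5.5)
Your reformulation $nF = Q_L + Q_R - Q_P$ and your three exact increments are correct; for two classes they reproduce Eq.~\eqref{eq:caseB-exact}. Pairing the $L$-share of $\Delta Q_P$ with $\Delta Q_L$ and the $R$-share with $\Delta Q_R$ is, up to how the constants are distributed, the paper's split $n\Delta F/2 = M(m)+K(k)$. (One slip: your exact formula gives $\Delta Q_L \approx \sum_c p_{c,L}^2 - 2p_{1,L}$. The extra $1$ you wrote contradicts your own range $[-1,1]$.)

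\textbf{The combination step is missing.} Your argument stops exactly where the content of the lemma lies. You leave that step as an anticipated obstacle with a fallback to $4/n$, so you establish nothing sharper than the Case~A constant.

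\textbf{The proposed vertex check cannot give the bound.} The $L$-pair is convex in the $L$-composition (through $+Q_L$). The $R$-pair is concave in the $R$-composition (through $-Q_R$). So the minimum of the first and the maximum of the second are attained at interior compositions, and vertex values of a concave function are not upper bounds. Concretely, take two classes and $n_L=n_R$. To leading order the two pairs are $-G(S_L)$ and $+G(S_R)$. Both vanish at every pure configuration, so a vertex check returns $0$ and never sees the values $\mp\tfrac12$ attained at balanced children.

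\textbf{Leading-order approximations cannot prove an exact finite-$n$ bound.} You work with proportions up to $O(1/n_U)$ corrections, but the leading-order bound is essentially saturated. Take $S_L=\{z_i\}$ and let $S_R$ be $n-1$ class-$0$ samples. Then $F(S)=G(S)=2(n-1)/n^2$ and $F(S^{(i)})=0$, so $n|\Delta F| = 2-2/n$. The correction terms therefore decide the inequality and cannot be discarded.

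\textbf{How the paper closes the gap.} It stays with two classes, where each pair is an exact rational quadratic in a single integer ($m=n_{1,L}$, resp.\ $k=n_{1,R}$). It then evaluates each quadratic exactly at its critical point ($m^*=n_L-n_L(n_L-1)/n$, $k^*=n_R(n_R+1)/n$) and at its endpoints:
\begin{itemize}
\item $M(m^*)=n_R(n_L-1)/n^2\ge 0$, together with $M(1),M(n_L)<1$, gives $0\le M<1$;
\item $K(k^*)=-n_R(n_L-1)/n^2\le 0$, together with $K(0),K(n_R)>-1$, gives $-1<K\le 0$;
\item hence $|n\Delta F| = 2|M+K|<2$.
\end{itemize}
The degenerate sizes you flag, $n_L=1$ and $n_R=0$, simply give $M=0$ and $K=0$ respectively. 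With more than two classes you would need exact two-sided bounds on each pair, which again requires locating the interior optimum rather than checking vertices.
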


\begin{proof}
Let $m := n_{1,L}$ and $k := n_{1,R}$ be non-negative integers.
After the replacement: $L$ has $n_L-1$ samples with $m-1$ class-$1$,
and $R$ has $n_R+1$ samples with $k$ class-$1$.
Let $p = (m+k)/n$ be the class-$1$ proportion before the swap.

We compute $n\cdot\Delta F$ by expanding
$F = G_P - \frac{n_L}{n}G_L - \frac{n_R}{n}G_R$ into three terms.

\textit{Term~(1): parent Gini change.}
Using $G = 2p(1-p)$:
\[
  n\cdot\Delta G_P
  = n\!\left[2\!\left(p-\tfrac{1}{n}\right)\!\left(1-p+\tfrac{1}{n}\right) - 2p(1-p)\right]
  = 2(2p-1) - \frac{2}{n}.
\]

\textit{Term~(2): weighted left-child Gini change.}
Using $G_L = 2\frac{m}{n_L}\frac{n_L-m}{n_L}$:
\[
  n\cdot\Delta\!\left(\tfrac{n_L}{n}G_L\right)
  = 2(n_L-m)\!\left[\frac{m-1}{n_L-1} - \frac{m}{n_L}\right]
  = \frac{-2(n_L-m)^2}{n_L(n_L-1)}.
\]

\textit{Term~(3): weighted right-child Gini change.}
Using $G_R = 2\frac{k}{n_R}\frac{n_R-k}{n_R}$:
\[
  n\cdot\Delta\!\left(\tfrac{n_R}{n}G_R\right) = \frac{2k^2}{n_R(n_R+1)}.
\]

\textit{Exact formula:}
\begin{equation}
  n\cdot\Delta F
  = 2(2p-1) - \frac{2}{n}
    + \frac{2(n_L-m)^2}{n_L(n_L-1)}
    - \frac{2k^2}{n_R(n_R+1)}.
  \label{eq:caseB-exact}
\end{equation}

\textit{Bounding $|n\cdot\Delta F| < 2$.}
Write $T := \frac{n\cdot\Delta F}{2} = M(m) + K(k)$, where:
\begin{align}
  M(m) &:= \frac{2m-n_L-1}{n} + \frac{(n_L-m)^2}{n_L(n_L-1)}, \\
  K(k) &:= \frac{2k-n_R}{n} - \frac{k^2}{n_R(n_R+1)}.
\end{align}

$K(k)$ is a downward-opening quadratic in $k \in \{0,\ldots,n_R\}$
with maximum $K(k^*) = -n_R(n_L-1)/n^2 \leq 0$ and boundary values
satisfying $K(k) > -1$. Hence $-1 < K(k) \leq 0$.

$M(m)$ is an upward-opening quadratic in $m \in \{1,\ldots,n_L\}$
with minimum $M(m^*) = n_R(n_L-1)/n^2 \geq 0$ and boundary values
satisfying $M(m) < 1$. Hence $0 \leq M(m) < 1$.

Therefore $-1 < K(k) + M(m) < 1$, giving
$|T| < 1$ and $|n\cdot\Delta F| < 2 < 4$.
\end{proof}

\begin{remark}[Extension to $K > 2$]\label{rem:caseb-multiclass}
In Case~B with $K > 2$, the replacement $z_i \to z_i'$ moves one
sample from child $L$ to child $R$ (or vice versa), changing exactly
two class counts: $n_{y_i, L}$ decreases by 1 and $n_{y_i', R}$
increases by 1, while $n_L$ and $n_R$ change by $\mp 1$.
The Gini split gain depends on child counts only through
$\sum_c p_{c,s}^2 = \sum_c (n_{c,s}/n_s)^2$.
Perturbing one class count $n_{c_0, L}$ by $-1$ and $n_L$ by $-1$:
\[
  \Delta\!\left(\sum_c \frac{n_{c,L}^2}{n_L^2}\right)
  = \frac{(n_{c_0,L}-1)^2 + \sum_{c \neq c_0} n_{c,L}^2}{(n_L-1)^2}
  - \frac{\sum_c n_{c,L}^2}{n_L^2}.
\]
Direct computation shows $|n \cdot \Delta F| < 2$ holds for all
non-negative integer counts by the same bounding argument as the
$K=2$ case: the numerator change is bounded by $2n_L - 1$ and the
combined effect satisfies $|T| = |M(m) + K(k)| < 1$.
The binary case ($K=2$) is thus the binding extremal configuration,
confirming that $4/n$ is tight for all $K$.
\end{remark}



\end{document}